\documentclass{article} 
\usepackage{iclr2025_conference,times}


\usepackage{amsmath,amsfonts,bm}









\def\eqref#1{equation~\ref{#1}}









\def\1{\bm{1}}










\DeclareMathAlphabet{\mathsfit}{\encodingdefault}{\sfdefault}{m}{sl}
\SetMathAlphabet{\mathsfit}{bold}{\encodingdefault}{\sfdefault}{bx}{n}













\usepackage{hyperref}
\usepackage{url}

\usepackage[utf8]{inputenc} 
\usepackage[T1]{fontenc}    
\usepackage{booktabs}       
\usepackage{amsfonts}       
\usepackage{nicefrac}       
\usepackage{microtype}      
\usepackage{xcolor}         
\usepackage{mathtools}
\usepackage{shuffle}  
\usepackage{booktabs}       
\usepackage{enumitem}
\usepackage{amsfonts}       
\usepackage{nicefrac}       
\usepackage{microtype}
\usepackage{amsthm}
\usepackage{amsmath}
\usepackage{subfigure}
\usepackage{adjustbox}
\usepackage{graphicx}
\usepackage{multirow}
\usepackage{amssymb}
\usepackage[capitalize,noabbrev]{cleveref}

\newtheorem{definition}{Definition}[section]
\newtheorem{theorem}{Theorem}[section]
\newtheorem{proposition}{Proposition}[theorem]
\newtheorem{lemma}{Lemma}[theorem]
\newtheorem*{remark}{Remark}
\newtheorem{example}{Example}[section]

\title{SigDiffusions: Score-Based Diffusion Models for Time Series via Log-Signature Embeddings}

\author{Barbora Barancikova\\
Department of Computing\\
Imperial College London\\
\And
Zhuoyue Huang\\
Department of Mathematics\\
Imperial College London\\
\And
Cristopher Salvi\\
Department of Mathematics\\
Imperial College London\\
}

\iclrfinalcopy 
\begin{document}

\maketitle

\begin{abstract}
Score-based diffusion models have recently emerged as state-of-the-art generative models for a variety of data modalities. Nonetheless, it remains unclear how to adapt these models to generate long multivariate time series. Viewing a time series as the discretisation of an underlying continuous process, we introduce  \texttt{SigDiffusion}, a novel diffusion model operating on log-signature embeddings of the data. The forward and backward processes gradually perturb and denoise log-signatures while preserving their algebraic structure. To recover a signal from its log-signature, we provide new closed-form inversion formulae expressing the coefficients obtained by expanding the signal in a given basis (e.g. Fourier or orthogonal polynomials) as explicit polynomial functions of the log-signature. Finally, we show that combining \texttt{SigDiffusions} with these inversion formulae results in high-quality long time series generation, competitive with the current state-of-the-art on various datasets of synthetic and real-world examples.
\end{abstract}

\section{Introduction}

Time series generation has gained significant attention in recent years due to the growing demand for high-quality data augmentation in fields such as healthcare \citep{trottet2023generative} and finance \citep{hwang2023augmentation}. Since sampling rates are often arbitrary and non-uniform, it is natural to assume that the data is collected from measurements of some underlying physical system evolving in continuous time. This perspective calls for modelling tools capable of processing temporal signals as continuous functions of time. We will often refer to such functions as \emph{paths}.

The idea of representing a path via its iterated integrals has been the object of numerous mathematical studies, including geometry \citep{chen1957integration, chen1958integration} control theory \citep{fliess1983algebraic}, and stochastic analysis \citep{lyons1998differential}. The collection of these iterated integrals is known as the \emph{signature} of a path. Thanks to its rich algebraic and analytic properties summarised in \cref{sec:sig-diff}, the signature is an efficient-to-compute, universal feature map for time series evolving in continuous time. As a result, signature
methods have recently become mainstream in many areas of machine learning dealing with irregular
time series, from deep learning \citep{kidger2019deep, morrill2021neural, cirone2023neural, cirone2024theoretical} to kernel methods \citep{salvi2021signature, lemercier2021distribution, issa2024non}, with applications in quantitative finance \citep{arribas2020sig, salvi2021higher, horvath2023optimal, pannier2024path}, cybersecurity \citep{cochrane2021sk}, weather forecasting \citep{lemercier2021siggpde}, and causal inference \citep{manten2024signature}. For a concise summary of this topic, we refer the interested reader to \citet{fermanian2023new}.

Score-based diffusion models have emerged as a powerful framework for modelling complex distributions in computer vision, audio, and text \citep{song2020score, bilovs2023modeling, popov2021grad, cai2020learning, voleti2022mcvd}. These models perturb the observed data distribution through a forward diffusion process, adding noise until an easy-to-sample base distribution is reached. A new sample from the learned data distribution is then generated by reversing the noising process using a learned \textit{score} function of the data conditional on the noise level. Despite recent efforts summarised in \cref{sec:related-work}, it remains unclear how to adapt score-based diffusion models for generating long signals in continuous time.

\paragraph{Contributions} In this paper, we make use of the \textit{log-signature}, a compressed version of the signature, as a parameter-free Lie algebra embedding for time series. In \cref{sec:diff-sig}, we introduce \texttt{SigDiffusion}, a new diffusion model that gradually perturbs and denoises log-signatures preserving their algebraic structure. To recover a path from its log-signature embedding, we provide novel
closed-form inversion formulae in \cref{inversion}. Notably, we prove that the coefficients of the expansion
of a path in a given basis, such as Fourier or orthogonal polynomials, can be expressed as explicit
polynomial functions on the log-signature. Our results provide a major improvement over existing signature inversion algorithms (see \cref{subsection:inv_eval}) which suffer from scalability issues and are only effective on simple examples of short piecewise-linear paths. In \cref{experiments}, we demonstrate how the combination of \texttt{SigDiffusions} with our inversion formulae provides a time series generative approach, competitive with state-of-the-art diffusion models for temporal data on various datasets of synthetic and real-world examples.

\section{Generating Log-Signatures with Score-Based Diffusion Models}\label{sec:sig-diff}
We begin this section by recalling the relevant background material before introducing our \texttt{SigDiffusion} model. We will limit ourselves to reporting only the key properties of signatures and the notation necessary for the inversion formulae in \cref{inversion}. Additional examples of signature computations can be found in Appendix \ref{sec:ex-comp}.
\subsection{The (log)signature}
\label{section:log-sig}
Let $x : [0,1] \to \mathbb{R}^d$ be a smooth $d$-dimensional time series defined on a time interval $[0,1]$. We will equivalently refer to this object as a \textit{path}.  The \emph{step-$n$ signature} $S^{\leq n}(x)$ of $x$ is defined as the following collection of iterated integrals
\begin{equation}\label{eqn:step-n-sig-simple}
S^{\leq n}(x) = \left(1, S_1(x)  ,\ldots, S_n(x)\right)
\end{equation}
where
\begin{equation*}
    S_k(x)=\int_{0\leq t_{1}<...<t_{k}\leq 1} dx_{t_{1}} \otimes ... \otimes dx_{t_{k}} \quad \text{for } 1 \leq k \leq n
\end{equation*}
and $\otimes$ denotes the tensor product. Intuitively, one can view the signature as a set of tensors of increasing dimension, where the value of the $m$-th tensor at the index $i_1, i_2, \ldots, i_m$ represents a measure of ``interaction'' between the $i_1, i_2, \ldots, i_m$-th channels of $x$. This makes the signature transform particularly effective at capturing information about the shape of multivariate time series.
\begin{example}
    Assume $d=2$, and denote the two channels of $x$ as $x=(x^1, x^2)$. Then $S_1(x),  S_2(x)$ are tensors with shape $[2]$ and $[2,2]$ respectively
    \begin{equation*}
        S_1(x) = \int_0^1 dx_{t_{1}} = \begin{pmatrix}
                                        \int_0^1 dx_{t_{1}}^1 \\
                                        \int_0^1 dx_{t_{1}}^2
                                        \end{pmatrix},
    \end{equation*}
    \begin{equation*}
                S_2(x) = \int_0^1 \int_0^{t_2} dx_{t_{1}} \otimes dx_{t_{2}} = \begin{pmatrix}
                                                                        \int_0^1 \int_0^{t_2} dx_{t_{1}}^1 dx_{t_{2}}^1 & \int_0^1 \int_0^{t_2}  dx_{t_{1}}^2 dx_{t_{2}}^1 \\
                                                                        \int_0^1 \int_0^{t_2}  dx_{t_{1}}^1 dx_{t_{2}}^2 & \int_0^1 \int_0^{t_2}  dx_{t_{1}}^2 dx_{t_{2}}^2
                                                                        \end{pmatrix}.
    \end{equation*}
    
\end{example}
Denoting the standard basis of $\mathbb R^d$ as $e_1,e_2,...,e_d$ we define a basis of the space of $k$-dimensional tensors as
$$e_{i_1i_2...i_k} = e_{i_1} \otimes e_{i_2} \otimes...\otimes e_{i_k}, \quad \text{for} \quad 1 \leq i_1,...,i_k \leq d \text{ and } 0 \leq k \leq n.$$ We refer to these basis elements as \textit{words}.
In \cref{inversion}, we will make use of the notation $\langle e_{i_1i_2...i_k}, S^{\leq n}(x) \rangle \in \mathbb R$ to extract the $(i_1,...,i_k)^{th}$ element of the $k$-th signature tensor $S_k(x)$. 

Words can be manipulated by two key operations: the \emph{shuffle product} $\shuffle$ and \emph{right half-shuffle product} $\succ$. The shuffle product of two words of length $r$ and $s$ (with $r + s \leq n$) is defined as the sum over the $\binom{r+s}{s}$ ways of interleaving the two words. For a formal definition, we refer readers to \citet[Section 1.4]{reutenauer2003free}. Much of the internal structure of the signature is characterized by the \textit{shuffle identity} (see \cref{thm:shuffle}), which uses the \textit{shuffle} and \textit{half-shuffle products} to describe the relationship between elements of higher and lower-order signature tensors. This identity is crucial in our derivation of the inversion formulae in \cref{sec:proofs}. A rigorous algebraic explanation of these concepts is provided in \cref{subsection:algebraic_setup}.

Moreover, it turns out that the space of signatures has the structure of a \emph{step-$n$ free nilpotent Lie group} $\mathcal{G}^n(\mathbb R^d)$. 
We denote by $\mathcal{L}^n(\mathbb R^d)$ the unique Lie algebra associated with $\mathcal{G}^n(\mathbb R^d)$, and we call its elements \textit{log-signatures}. $\mathcal{G}^n(\mathbb R^d)$ is the image of the $\mathcal{L}^n(\mathbb R^d)$ under
the exponential map
\begin{equation}\label{eqn:exp-L}
    \mathcal{G}^n(\mathbb R^d) = \exp(\mathcal{L}^n(\mathbb R^d))
\end{equation}
where, in the case of signatures, $\exp$ denotes the tensor exponential defined in \cref{appendix:log-sig}. Furthermore, one can use the tensor logarithm (see \cref{eqn:log}) to convert log-signatures to signatures. These two operations are mutually inverse.

 We note that the Lie algebra $\mathcal{L}^n(\mathbb R^d)$ is a vector space of dimension $\beta(d, n)$ with
\begin{equation*}
    \beta(d, n) = \sum_{k = 1}^n \frac{1}{k} \sum_{i | k} \mu\left(\frac{k}{i}\right) d^i,
\end{equation*}
where $\mu$ is the M{\"o}bius function \citep{reutenauer2003free}. Crucially, the Lie algebra is isomorphic to the Euclidean space $\mathbb R^{\beta(d, n)}$, which motivates the diffusion model architecture in \cref{sec:diff-sig}. 

\subsection{Signature as a time series embedding}
\label{section:sig-properties}

The (log)signature exhibits additional properties making it an especially interesting object in the context of generative modelling for sequential data. In this section we summarise such properties without providing technical details, as these have been discussed at length in various texts in the literature. For a thorough review, we refer the interested reader to \citet[Chapter 1]{cass2024lecture}.

At first glance, computing the (log)signature seems intractable for general time series. However, an elegant and \textbf{efficient computation} is possible due to \emph{Chen's relation}

\begin{lemma}[Chen's relation]\label{thm:chen}
    For any two smooth paths $x,y : [0,1] \to \mathbb R^d$ the following holds
    \begin{equation}\label{eqn:chen}
        S^{\leq n}(x*y) = S^{\leq n}(x) \cdot S^{\leq n}(y),
    \end{equation}
    where $*$ denotes path-concatenation, and $\cdot$ is the signature tensor product defined in \cref{eqn:tensor-product}.
\end{lemma}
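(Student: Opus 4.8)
The plan is to prove Chen's relation directly from the definition of the signature as a collection of iterated integrals, by decomposing the integration simplex at each level according to which portion of the concatenated path each time variable belongs to. First I would fix a convenient parametrisation of the concatenation: since iterated integrals (hence the signature) are invariant under reparametrisation, I may assume $x*y : [0,1] \to \mathbb R^d$ agrees, up to time change, with $x$ on $[0,\tfrac12]$ and with $y$ on $[\tfrac12,1]$, so that $d(x*y)_t = dx_t$ on the first subinterval and $d(x*y)_t = dy_t$ on the second (the single instant $t=\tfrac12$ being a null set for the integration).

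Next, for each $1 \le k \le n$ I would write
\[
S_k(x*y) = \int_{0 < t_1 < \cdots < t_k < 1} d(x*y)_{t_1} \otimes \cdots \otimes d(x*y)_{t_k}
\]
and partition the open simplex $\{0 < t_1 < \cdots < t_k < 1\}$ into the $k+1$ disjoint pieces indexed by $j \in \{0,\dots,k\}$, where piece $j$ is the region on which $t_1,\dots,t_j \le \tfrac12 < t_{j+1},\dots,t_k$. The key observation is that because the time variables are strictly ordered, this is the only way the points can be split between the two halves, so these pieces are disjoint and cover the simplex up to a null set. On piece $j$ the first $j$ differentials are $dx$ and the remaining $k-j$ are $dy$, and the region factorises as a product of the simplex $\{0 < t_1 < \cdots < t_j < \tfrac12\}$ with the simplex $\{\tfrac12 < t_{j+1} < \cdots < t_k < 1\}$; hence by Fubini the corresponding integral equals $S_j(x) \otimes S_{k-j}(y)$ after the harmless time change back to $[0,1]$ on each factor. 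Summing over $j$ gives
\[
S_k(x*y) = \sum_{j=0}^{k} S_j(x) \otimes S_{k-j}(y),
\]
which is exactly the degree-$k$ component of the truncated tensor product $S^{\leq n}(x) \cdot S^{\leq n}(y)$ from \cref{eqn:tensor-product} (with the convention $S_0 = 1$). Assembling these identities for $k = 0,1,\dots,n$ yields $S^{\leq n}(x*y) = S^{\leq n}(x) \cdot S^{\leq n}(y)$.

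This argument is essentially a bookkeeping exercise, so the only genuine care needed is in the first step — invoking reparametrisation invariance of iterated integrals and reducing to the concrete $[0,\tfrac12]\cup[\tfrac12,1]$ parametrisation — and in confirming that the truncation is respected: no level above $n$ is ever needed on the right-hand side, since each term $S_j(x)\otimes S_{k-j}(y)$ has total degree $j+(k-j)=k\le n$. An alternative, equivalent route would observe that $t \mapsto S^{\leq n}(x|_{[0,t]})$ solves the linear tensor ODE $dZ_t = Z_t \cdot dx_t$ with $Z_0 = 1$, and then conclude by uniqueness that for $x*y$ the solution on the second half equals $S^{\leq n}(x)\cdot S^{\leq n}(y|_{[1/2,t]})$, evaluated at $t=1$; I would nonetheless present the simplex-decomposition proof as the main one because it is self-contained and makes the combinatorial origin of the tensor product transparent.
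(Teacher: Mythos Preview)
The paper does not actually prove Chen's relation: it is stated as \cref{thm:chen} in the main text and referenced again in \cref{sec:ex-comp}, but in both places it is cited as a classical identity without any argument supplied. There is therefore no ``paper's own proof'' to compare against.

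Your proposal is correct and is precisely the standard simplex-decomposition proof found in the rough-paths literature (e.g.\ Lyons--Caruana--L\'evy or Friz--Victoir). The reparametrisation step, the partition of the ordered simplex by the index $j$ at which the variables cross the midpoint, the Fubini factorisation into $S_j(x)\otimes S_{k-j}(y)$, and the identification with the degree-$k$ component of the truncated tensor product are all sound, and you correctly note that truncation at level $n$ is respected. The alternative ODE/uniqueness argument you sketch at the end is also valid and is the other textbook route; either would be acceptable here.
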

Combining Chen's relation with the fact that the signature of a linear path is simply the tensor exponential of its increment (see \cref{example_linear_sig}) provides us with an efficient algorithm for computing signatures of piecewise linear paths. This approach eliminates the need to calculate integrals when computing signature embeddings. See Appendix \ref{sec:ex-comp} for simple examples of computations.

Furthermore, the (log)signature  is \emph{invariant under reparameterisations}. This property essentially allows the signature transform to act as a filter that removes an infinite dimensional group of symmetries given by time reparameterisations. The action of reparameterising a path can be thought of as sampling its observations at a different frequency, resulting in \textbf{robustness to irregular sampling}. 
Another important property of the signature is the \textbf{factorial decay in the magnitude of its coefficients} \citep[Proposition 1.2.3]{cass2024lecture}. This fast decay implies that truncating the signature at a sufficiently high level retains the bulk of the critical information about the underlying path.

The signature is \emph{unique} for certain classes of paths, ensuring \textbf{a one-to-one identifiability with the underlying path}. An example of such classes is given by paths which share an identical, strictly monotone coordinate and are started at the same origin. More general examples are discussed in \citet[Section 4.1]{cass2024lecture}. This property is important if one is interested, as we are, in recovering the path from its signature. Yet, providing a viable algorithm for inverting the signature has, until now, been challenging; valid but non-scalable solutions have been proposed only for special classes of piecewise linear paths \citep{chang2019insertion, fermanian2023insertion, kidger2019deep}. In \cref{inversion} we provide new closed-form inversion formulae that address this limitation.

\subsection{Diffusion models on log-signature embeddings}\label{sec:diff-sig}

As described in \cref{section:log-sig}, any element of $\mathcal{G}^n(\mathbb R^d)$ corresponds to the step-$n$ signature of a smooth path. Taking the tensor logarithm in Equation (\ref{eqn:log}) then implies that an arbitrary element of $\mathcal{L}^n(\mathbb R^d)$ corresponds to the \emph{step-$n$ log-signature} of a smooth path. Because the Lie algebra $\mathcal{L}^n(\mathbb R^d)$ is a linear space, adding two log-signatures yields another log-signature. Furthermore, the dimensionality $\beta(d, n)$ of $\mathcal{L}^n(\mathbb R^d)$ is strictly smaller than $\frac{d^{n+1}-1}{d-1}$, making the log-signature a more compact representation of a path than the signature while retaining the same information. We can leverage these two properties to run score-based diffusion models on $\mathcal{L}^n(\mathbb R^d)$, followed by an explicit signature inversion, which we discuss in the next section. \cref{fig:sigdiffusions_diagram} presents an overview of the proposed idea, which we demonstrate experimentally in \cref{experiments} to be an efficient and high-quality method for generating multivariate time series. 

We briefly recall that score-based diffusion models progressively corrupt data with noise until it reaches a tractable form, then learn to reverse this process to generate new samples from the underlying data distribution $p(\textbf{x})$. A neural network is used to estimate the gradient of the log probability density, known as the \textit{score} \citep{song2019generative}, $s_\theta(t, \textbf{x})\approx \nabla_{\textbf{x}}\log p_t(\textbf{x})$ at each injected noise level $t$. We model the forward data perturbation process with a stochastic differential equation (SDE) of the form
\begin{equation}\label{eq:ODE}
    d\textbf{x} = -\frac{1}{2}\beta(t)\textbf{x}dt + \sqrt{\beta(t)}d\textbf{w}, \text{ }\textbf{x}(0)\sim \text{data} 
\end{equation}
where $\beta(t)$ is linear over $t\in[0,1]$. The corresponding reverse diffusion process follows a reverse-time SDE \citep{anderson1982reverse} $d\textbf{x} = \left[-\frac{1}{2} \beta(t) \textbf{x} - \beta(t) \nabla_{\textbf{x}} \log p_t(\textbf{x}) \right] dt + \sqrt{\beta(t)} d\overline{\textbf{w}}$, where $t$ flows backwards from $T$ to $0$ and $\overline{\textbf{w}}$ is Brownian motion with a negative time step $dt$. The initial point $\textbf{x}(T)$ is sampled from a standard Gaussian distribution.  Following previous work \citep{ho2020denoising, bilovs2023modeling, yuan2024diffusion}, we use a simple transformer architecture with sinusoidal positional embeddings of $t$ to model $s_\theta$.

\begin{figure}
    \centering
    \includegraphics[width=1\linewidth]{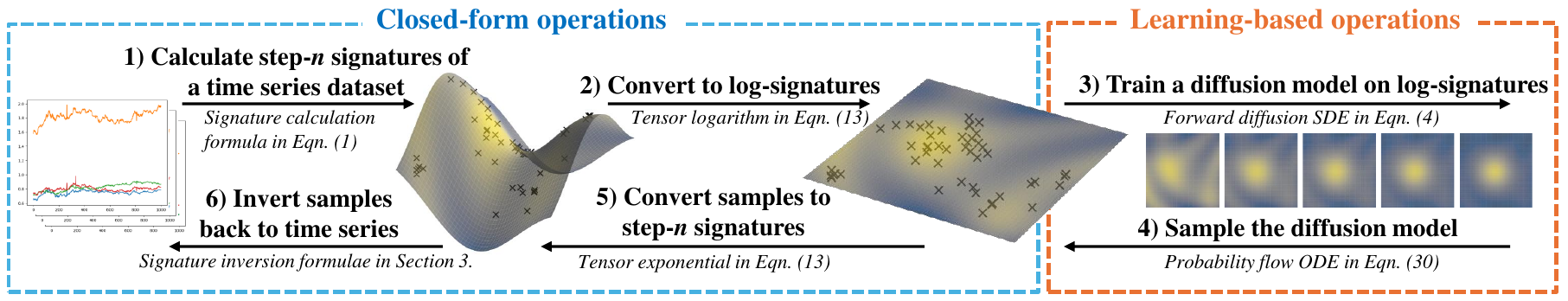}
    \caption{\texttt{SigDiffusions} pipeline. The signatures of a time series dataset are points distributed in a non-Euclidean space (Lie group). Converting to log-signatures maps them to a Euclidean space (Lie algebra) where standard diffusion models operate. Calculating the log-signature embedding and its inverse (blue box) are fully deterministic operations, which greatly simplifies the learning task. The log-signatures serve as inputs to a score-based diffusion model (orange box). Step 6 is enabled by our newly derived closed-form inversion formulae.}
    \label{fig:sigdiffusions_diagram}
\end{figure}

\section{Signature Inversion}\label{inversion}

In this section, we provide explicit signature inversion formulae. We do so by expressing the coefficients of the expansion of a path in the Fourier or orthogonal polynomial bases as a linear function on the signature. The necessary background material on orthogonal polynomials and Fourier series can be found in \cref{appendix:orthogonal_polynomials}.

Throughout this section, $x:[0,1] \to \mathbb{R}$ will denote a $1$-dimensional smooth path. The results in the sequel can be naturally extended to multidimensional paths by applying the same procedure channel by channel. Depending on the type of basis we chose to represent the path, we will often need to reparameterise the path from the interval $[0,1]$ to a specified time interval $[a,b]$ and augment it with time as well as with additional channels $c^1, c^2, ..., c^r : [a,b]\to\mathbb{R}$, tailor-made for the specific type of inversion. We denote the augmented path by $\hat{x}(t)=(t, c^1(t), ..., c^r(t), x(t)) \in \mathbb R^{r+2}$. Note that these transformations are fully deterministic and do not affect the complexity of the generation task outlined in \cref{sec:diff-sig}. Furthermore, we will use the shorthand notation $S(\hat x)$ for the step-$n$ signature $S^{\leq n}(\hat x)$ throughout the section, and assume that the truncation level $n$ is always high enough to retrieve the desired number of basis coefficients.  All proofs can be found in \cref{sec:proofs}.

\subsection{Inversion via Fourier coefficients}\label{subsection:fourier_inversion}

In this section, we derive closed-form expressions for retrieving the first $n$ Fourier coefficients of a path from its signature. First, recall that the Fourier series of a $2\pi$-periodic path $x(t)$ up to order $n \in \mathbb N$ is $x_n(t) = a_0 + \sum_{n=1}^n (a_n \cos (nt) + b_n \sin (nt))$
where $a_0,a_n,b_n$ are defined as
\begin{equation}\label{eqn: a0}
    a_0 = \frac{1}{2\pi}\int_0^{2\pi}x(t)dt,
\end{equation}
\begin{equation}\label{eqn: an}
    a_n = \frac{1}{\pi}\int_0^{2\pi}x(t)\cos(nt)dt,
\end{equation}
\begin{equation}\label{eqn: bn}
    b_n = \frac{1}{\pi}\int_0^{2\pi}x(t)\sin(nt)dt.
\end{equation}

\begin{theorem}\label{thm:fourier_inversion} Let $x : [0,2\pi] \to \mathbb R$ be a periodic smooth path such that $x(0)=0$, and consider the augmentation $\hat{x}(t) = (t, \sin(t), \cos(t)-1, x(t)) \in \mathbb R^4$. Then the following relations hold
\begin{equation}
    \begin{split}
    a_0 &= \frac{1}{2\pi}\langle e_4 \succ e_1, S(\hat{x}) \rangle, \\
    a_n &= \frac{1}{\pi}\sum_{k=0}^n \sum_{q=0}^k \binom{n}{k}\binom{k}{q} \cos(\frac{1}{2}(n-k)\pi) \langle (e_4\shuffle e_2^{\shuffle n-k} \shuffle e_3^{\shuffle q} ) \succ e_1, S(\hat{x}) \rangle,\\
    b_n &= \frac{1}{\pi}\sum_{k=0}^n \sum_{q=0}^k \binom{n}{k}\binom{k}{q} \sin(\frac{1}{2}(n-k)\pi) \langle (e_4\shuffle e_2^{\shuffle n-k} \shuffle e_3^{\shuffle q} ) \succ e_1, S(\hat{x})\rangle.
    \end{split}
\end{equation}
\end{theorem}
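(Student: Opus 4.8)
The plan is to turn every signature coordinate $\langle\,\cdot\,,S(\hat x)\rangle$ on the right-hand sides into an ordinary scalar integral against $dt$, and then to recognise the resulting double sums as the binomial expansion of $e^{int}$ written in the two ``increment'' variables $\sin t$ and $\cos t-1$.

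\emph{Step 1: from signature coordinates to integrals.} Fix integers $0\le q\le k\le n$ and set $w=e_4\shuffle e_2^{\shuffle\, n-k}\shuffle e_3^{\shuffle\, q}$, which we regard as a linear combination of words each of length $n-k+q+1$. Since the right half-shuffle of a word (or linear combination of words) with a single letter is just concatenation, $w\succ e_1$ is obtained by appending the letter $e_1$; hence by the recursive structure of iterated integrals and $\hat x^1_t=t$,
\[
\big\langle w\succ e_1,\,S(\hat x)\big\rangle=\int_0^{2\pi}\big\langle w,\,S(\hat x)|_{[0,t]}\big\rangle\,d\hat x^1_t=\int_0^{2\pi}\big\langle w,\,S(\hat x)|_{[0,t]}\big\rangle\,dt,
\]
where $S(\hat x)|_{[0,t]}$ denotes the step-$n$ signature of $\hat x$ restricted to $[0,t]$. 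Applying the shuffle identity (\cref{thm:shuffle}) repeatedly gives $\langle w,S(\hat x)|_{[0,t]}\rangle=\langle e_4,S(\hat x)|_{[0,t]}\rangle\,\langle e_2,S(\hat x)|_{[0,t]}\rangle^{n-k}\,\langle e_3,S(\hat x)|_{[0,t]}\rangle^{q}$. Each one-letter bracket is a first-order iterated integral, i.e., the increment of the corresponding channel over $[0,t]$; since $\sin 0=0$, $\cos 0-1=0$, and $x(0)=0$ by hypothesis, these increments are exactly $x(t)$, $\sin t$, and $\cos t-1$. Therefore
\[
\big\langle (e_4\shuffle e_2^{\shuffle\, n-k}\shuffle e_3^{\shuffle\, q})\succ e_1,\,S(\hat x)\big\rangle=\int_0^{2\pi} x(t)\,\sin^{\,n-k}(t)\,(\cos t-1)^{q}\,dt,
\]
and the $a_0$ identity is the special case $w=e_4$, namely $\langle e_4\succ e_1,S(\hat x)\rangle=\int_0^{2\pi}x(t)\,dt=2\pi a_0$.

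\emph{Step 2: the combinatorial identity.} It remains to prove the pointwise trigonometric identities
\[
\cos(nt)=\sum_{k=0}^{n}\sum_{q=0}^{k}\binom{n}{k}\binom{k}{q}\cos\!\Big(\tfrac{1}{2}(n-k)\pi\Big)\sin^{\,n-k}(t)\,(\cos t-1)^{q},
\]
together with the companion obtained by replacing $\cos$ with $\sin$ throughout. Granting these, one multiplies by $\tfrac1\pi x(t)$, integrates over $[0,2\pi]$, substitutes the integrals of Step 1, invokes \eqref{eqn: an}--\eqref{eqn: bn}, and uses linearity of $\langle\,\cdot\,,S(\hat x)\rangle$ to obtain the stated formulae for $a_n$ and $b_n$. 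To prove the identity, expand $e^{int}=\big(1+(e^{it}-1)\big)^n=\sum_{j=0}^{n}\binom{n}{j}(e^{it}-1)^j$, substitute $e^{it}-1=(\cos t-1)+i\sin t$, expand the inner $j$-th power by the binomial theorem, and separate real and imaginary parts using $\operatorname{Re}(i^{m})=\cos(m\pi/2)$ and $\operatorname{Im}(i^{m})=\sin(m\pi/2)$. Re-indexing the double sum via $m=n-k$, $j=(n-k)+q$, and using the ``subset-of-a-subset'' identity $\binom{n}{j}\binom{j}{m}=\binom{n}{m}\binom{n-m}{j-m}$, makes the coefficients coincide, which finishes the argument.

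\emph{Main obstacle.} Step 1 is essentially careful bookkeeping: one must track which index labels which channel and verify that the one-letter brackets collapse to $x(t)$, $\sin t$, $\cos t-1$ precisely because of the vanishing-at-$0$ normalisations --- this is exactly why the augmentation uses $\cos t-1$ rather than $\cos t$, and why $x(0)=0$ is assumed. The substantive step is Step 2: the trick is to expand $e^{int}$ around $1$ rather than in powers of $\cos t+i\sin t$, which is what makes the channel $e_3=\cos t-1$ appear and what produces the nested binomial weights along with the $\cos(\tfrac12(n-k)\pi)$ and $\sin(\tfrac12(n-k)\pi)$ factors; I expect reconciling these combinatorial coefficients after the re-indexing to be the crux of the proof.
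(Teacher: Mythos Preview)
Your proposal is correct and follows essentially the same approach as the paper. The paper also reduces everything to the shuffle and half-shuffle identities (your Step~1) and a trigonometric expansion (your Step~2); the only organisational difference is that the paper works in the forward direction --- starting from the integral definition of $a_n,b_n$, quoting the multiple-angle formula $\cos(nt)=\sum_{k}\binom{n}{k}\cos^k t\,\sin^{n-k}t\,\cos\!\big(\tfrac12(n-k)\pi\big)$, and then expanding $\cos^k t=((\cos t-1)+1)^k=\sum_{q}\binom{k}{q}(\cos t-1)^q$ --- which produces the coefficients $\binom{n}{k}\binom{k}{q}$ directly and spares you the re-indexing and the subset-of-a-subset identity.
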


\subsection{Inversion via orthogonal polynomials}
To accommodate path generation use cases for which a non-Fourier representation is more suitable, next we derive formulae for inverting the signature using expansions of the path in orthogonal polynomial bases.
Recall that any orthogonal polynomial family $(p_n)_{n\in\mathbb{N}}$ with a weight function $\omega: [a, b]\rightarrow \mathbb{R}$ satisfies a three-term recurrence relation
\begin{align}\label{eq:general_rec}
    p_{n}(t) = (A_{n}t + B_{n})p_{n-1}(t) + C_{n}p_{n-2}(t), \qquad n\geq2,
    \end{align}
with $p_0(t) = 1$ and $p_1(t) = A_1 t +B_1$. 
Also, note that any smooth (or at least square-integrable) path $x(t)$ with $x(a)=0$ can be approximated arbitrarily well as  $x(t)\approx\sum_{n=0}^\infty \alpha_np_n(t)$ where $\alpha_n$ is the $n$-th orthogonal polynomial coefficient
\begin{equation}\label{eqn:coeff-poly}
    \alpha_n = \frac{1}{( p_n, p_n)} \int_{a}^{b}x(t)p_n(t)\omega(t)dt,
\end{equation}
and $(\cdot, \cdot)$ denotes the inner product $( f, g)=\int_{a}^{b}f(t)g(t)\omega(t)dt$.
We include several examples of such polynomial families in \cref{appendix:orthogonal_polynomials}.

\begin{theorem}\label{thm:poly_inversion}
Let $x : [a,b] \to \mathbb R$ be a smooth path such that  $x(a)=0$. Consider the augmentation $\hat x(t) = (t, \omega(t)x(t)) \in \mathbb R^2$, where $\omega(t)$ corresponds to the weight function of a system of orthogonal polynomials $(p_n)_{n\in\mathbb{N}}$ and is well defined on the closed and compact interval $[a,b]$. Then there exists a linear combination $\ell_n$ of words such that the $n^{th}$ coefficient in Equation (\ref{eqn:coeff-poly}) satisfies $\alpha_n=\langle\ell_n, S(\hat x)\rangle$. Furthermore, the sequence $(\ell_n)_{n \in \mathbb N}$ satisfies the following recurrence relation
\begin{align*}
    \ell_{n} = A_n\frac{(p_{n-1}, p_{n-1})}{(p_n, p_n)}e_1\succ \ell_{n-1} + (A_na+B_n)\frac{(p_{n-1}, p_{n-1})}{(p_n, p_n)}\ell_{n-1} + C_n\frac{(p_{n-2}, p_{n-2})}{(p_n, p_n)}\ell_{n-2},
\end{align*}
with 
\begin{align*}
    \ell_0=\frac {A_0}{(p_0, p_0)}e_{21}\quad \text{ and }\quad\ell_1=\frac {A_1}{(p_1, p_1)}(e_{121} + e_{211})+\frac {A_1a+B_1}{(p_1, p_1)}e_{21}.
\end{align*}
\end{theorem}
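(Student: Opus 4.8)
The plan is to reduce everything to the unnormalised integral $I_n := \int_a^b p_n(t)\,\omega(t)x(t)\,dt$, so that $\alpha_n = I_n/(p_n,p_n)$, and then to build a small ``dictionary'' that translates integrals of the form $\int_a^b (t-a)^k\,\omega(t)x(t)\,dt$ into entries of $S(\hat x)$ for the augmentation $\hat x(t)=(t,\omega(t)x(t))$. Write $\hat x^1_t = t$, $\hat x^2_t = \omega(t)x(t)$, and for a word $w$ let $f_w(t) = \langle w, S(\hat x)|_{[a,t]}\rangle$ denote the corresponding signature coordinate over the sub-interval $[a,t]$. Two facts drive the argument. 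First, since a half-shuffle against a single letter is just concatenation, $\langle w \succ e_1, S(\hat x)\rangle = \langle w e_1, S(\hat x)\rangle = \int_a^b f_w(t)\,dt$, i.e. the operation $\,\cdot\,\succ e_1$ performs ``integrate $dt$ over $[a,b]$''. Second, the shuffle identity (\cref{thm:shuffle}) gives $f_{u \shuffle v} = f_u f_v$, together with $f_{e_1}(t) = t-a$ and $f_{e_2}(t) = \omega(t)x(t)$ (here $x(a)=0$ kills the boundary term, so $f_{e_2}(a)=0$). Combining these yields $f_{e_2 \shuffle e_1^{\shuffle k}}(t) = \omega(t)x(t)(t-a)^k$ and hence the core identity
\begin{equation*}
    \langle (e_2 \shuffle e_1^{\shuffle k}) \succ e_1,\, S(\hat x)\rangle \;=\; \int_a^b \omega(t)x(t)(t-a)^k\,dt .
\end{equation*}
Expanding $p_n$ in powers of $(t-a)$ and using linearity then produces a word combination $L_n$ with $I_n = \langle L_n, S(\hat x)\rangle$; setting $\ell_n := L_n/(p_n,p_n)$ gives $\alpha_n = \langle \ell_n, S(\hat x)\rangle$, proving existence.

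For the recurrence, the key observation is that the map $w e_1 \mapsto e_1 \succ (w e_1) = (e_1 \shuffle w)\,e_1$ implements ``multiply the integrand by $(t-a)$'': indeed $\langle e_1 \succ (w e_1), S(\hat x)\rangle = \int_a^b f_{e_1}(t) f_w(t)\,dt = \int_a^b (t-a) f_w(t)\,dt$, and applied to $w = e_2 \shuffle e_1^{\shuffle k}$ this is exactly $\langle (e_2 \shuffle e_1^{\shuffle k+1}) \succ e_1, S(\hat x)\rangle$. Now feed the three-term relation $p_n(t) = (A_n t + B_n)p_{n-1}(t) + C_n p_{n-2}(t)$ into $I_n$, and split $t = (t-a) + a$ inside the $A_n t$ term. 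This gives
\begin{equation*}
    I_n \;=\; A_n\!\int_a^b (t-a)\,p_{n-1}(t)\omega(t)x(t)\,dt \;+\; (A_n a + B_n)\,I_{n-1} \;+\; C_n\,I_{n-2},
\end{equation*}
and the first integral equals $\langle e_1 \succ L_{n-1}, S(\hat x)\rangle$ by the multiplication fact applied term-by-term to $L_{n-1}$. Translating to the $\ell$'s via $L_{n-1} = (p_{n-1},p_{n-1})\ell_{n-1}$ and $L_{n-2} = (p_{n-2},p_{n-2})\ell_{n-2}$ and dividing by $(p_n,p_n)$ yields precisely the stated recurrence.

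The base cases follow by direct expansion: $I_0 = \int_a^b \omega(t)x(t)\,dt = \langle e_{21}, S(\hat x)\rangle$ gives $\ell_0 = \frac{1}{(p_0,p_0)}e_{21}$ (with the convention $p_0 \equiv A_0$), and for $p_1(t) = A_1 t + B_1$ the same split $t = (t-a)+a$ together with $e_2 \shuffle e_1 = e_{12} + e_{21}$ gives $L_1 = A_1(e_{121} + e_{211}) + (A_1 a + B_1)e_{21}$, i.e. the claimed $\ell_1$.

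The main obstacle — really the only nontrivial point — is recognising the correct augmentation $\hat x = (t, \omega x)$ and the two structural facts that make the half-shuffle against $e_1$ behave as ``$\int dt$'' and the left half-shuffle by $e_1$ behave as ``multiply the integrand by $(t-a)$''; once these are in place the proof is bookkeeping with the shuffle identity and the shift $t = (t-a)+a$ needed to align with the leading coefficient $A_n$ of the recurrence. The hypothesis that $\omega$ is well defined on the compact interval $[a,b]$ enters exactly here, to guarantee that $\hat x^2 = \omega x$ is a genuine (bounded-variation, indeed smooth) path so that all the iterated integrals above converge and the signature identities apply.
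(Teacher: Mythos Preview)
Your proposal is correct and follows essentially the same route as the paper's proof: both arguments express $\alpha_0,\alpha_1$ directly via the shuffle/half-shuffle identities, then push the three-term recurrence through the integral $\int_a^b p_n(t)\omega(t)x(t)\,dt$ using the shift $t=(t-a)+a$ and the identification of $e_1\succ \ell_{n-1}$ with ``multiply the integrand by $(t-a)$''. Your presentation is a bit more explicit in isolating the dictionary $\langle (e_2\shuffle e_1^{\shuffle k})\succ e_1, S(\hat x)\rangle=\int_a^b (t-a)^k\omega(t)x(t)\,dt$ and the algebraic identity $e_1\succ(we_1)=(e_1\shuffle w)e_1$, but the substance is identical.
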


\begin{remark}\label{remark: taylor}
    The results in \cref{thm:poly_inversion} require signatures of $\hat x = (t, w(t)x(t))$. However, sometimes one may only have signatures of $\tilde x = (t, x(t))$. In \cref{appendix:taylor} we propose an alternative method by approximating the weight function as a Taylor series. 
\end{remark}

\subsection{Inversion time complexity}
The inversion formulae all boil down to evaluating specific linear combinations of signature terms. Evaluating a linear functional has a time complexity linear in the size of the signature. Since this evaluation is repeated for each of the $n$ recovered basis coefficients, the total number of operations is $nm$, where $n$ in the number of basis coefficients and $m$ is the length of the signature truncated at level $n+2$. For a $d$-dimensional path, length of a step-$N$ signature is $\frac{d^{N+1}-1}{d-1}$, giving the inversion a time complexity of $O(nd^{n+2})$.

\section{Related Work}\label{sec:related-work}

\paragraph{Multivariate time series generation} Generating multivariate time series has been an active area of research in the past several years, predominantly relying on generative adversarial networks (GANs) \citep{goodfellow2014generative}. Simple recurrent neural networks acting as generators and discriminators \citep{mogren2016c, esteban2017real} later evolved into encoder-decoder architectures where the adversarial generation occurs in a learned latent space \citep{yoon2019time, pei2021towards, jeon2022gt}. To generate time series in continuous time, architectures based on neural differential equations in the latent space \citep{rubanova2019latent, yildiz2019ode2vae} have emerged as generalisations of RNNs. 

More flexible alternatives have since been proposed in the forms of neural controlled differential equations \citep{kidger2020neural} and state space ODEs \citep{zhou2023deep}.
 
\paragraph{Diffusion models for time series generation} There are a number of denoising probabilistic diffusion models (DDPMs) currently at the forefront of time series synthesis, such as DiffTime \citep{ho2020denoising}, which reformulates the constrained time series generation problem in terms of conditional denoising diffusion \citep{tashiro2021csdi}. Most recently, Diffusion-TS \citep{yuan2024diffusion} has demonstrated superior performance on benchmark datasets and long time series by disentangling temporal features via a Fourier-based training objective. To learn long-range dependencies, both the aforementioned methods use transformer-based \citep{vaswani2017attention} diffusion functions. Many recent efforts attempt to generalise score-based diffusion to infinite-dimensional function spaces \citep{kerrigan2022diffusion, dutordoir2023neural, phillips2022spectral, lim2023score}. However, unlike their discrete-time counterparts, they have not yet been benchmarked on a variety of real-world temporal datasets. An exception to this is a diffusion framework proposed by \citet{bilovs2023modeling}, which synthesises continuous time series by replacing the time-independent noise corruption with samples from a Gaussian process, forcing the diffusion to remain in the space of continuous functions. Another promising approach for training diffusion models in function space is the Denoising Diffusion Operators (DDOs) method \citep{lim2023score}. While it has not been previously applied to time series, our evaluation in \cref{experiments} demonstrates its strong performance in this context. Additionally, there is a growing body of recent literature focusing on application-specific time series generation via diffusion models, such as speech enhancement \citep{lay2023reducing, lemercier2023storm}, soft sensing \citep{dai2023timeddpm}, and battery charging behaviour \citep{li2024diffcharge}.

\paragraph{Signature inversion} The uniqueness property of signatures mentioned in \cref{section:sig-properties} has motivated several previous attempts to answer the question of inverting the signature transform, mostly as theoretical contributions focusing on one specific class of paths \citep{lyons2017hyperbolic, chang2016signature, lyons2018inverting}. The only fast and scalable signature inversion strategy to date is the Insertion method \citep{chang2019insertion}, which provides an algorithm and theoretical error bounds for inverting piecewise linear paths. It was recently optimised \citep{fermanian2023insertion} and released as a part of the Signatory \citep{kidger2019deep} package. There are also examples of inversion via deep learning \citep{kidger2019deep} and evolutionary algorithms \citep{buehler2020data}, but they provide no convergence guarantees and become largely inefficient when deployed on real-world time series.

\section{Experiments}\label{experiments}
In \cref{subsection:inv_eval}, we demonstrate that the newly proposed signature inversion method achieves more accurate reconstructions than the previous Insertion \citep{chang2019insertion, fermanian2023insertion} and Optimisation \citep{kidger2019deep} methods. We also analyse the inversion quality and time complexity across different families of orthogonal polynomials.
In \cref{subsection:smooth_eval}, we show that (log)signatures, combined with our closed-form inversion, provide an exceptionally effective embedding for time series diffusion models. \cref{section:tradeoff} visualises the trade-off between the precision of time series representation and model complexity introduced by the choice of the signature truncation level. In \cref{section:step-4}, we present experiments demonstrating that generating step-$4$ log-signatures via the pipeline outlined in \cref{fig:sigdiffusions_diagram} outperforms other recent time series diffusion models across several standard metrics.
\subsection{Inversion evaluation}\label{subsection:inv_eval}

We perform experiments to evaluate the proposed analytical signature inversion formulae derived in \cref{inversion} via several families of orthogonal bases. Using example paths given by sums of random sine waves with injected Gaussian noise, we reconstruct the original paths from their step-$12$ signatures. Figure \ref{fig:compare_inversion} compares inversion of these paths via Legendre and Fourier coefficients to the Insertion method \citep{chang2019insertion, fermanian2023insertion} and Optimisation method \citep{kidger2019deep}, showcasing the improvement in inversion quality provided by our explicit inversion formulae. 
\begin{figure}
    \centering
    \includegraphics[width=0.9\linewidth]{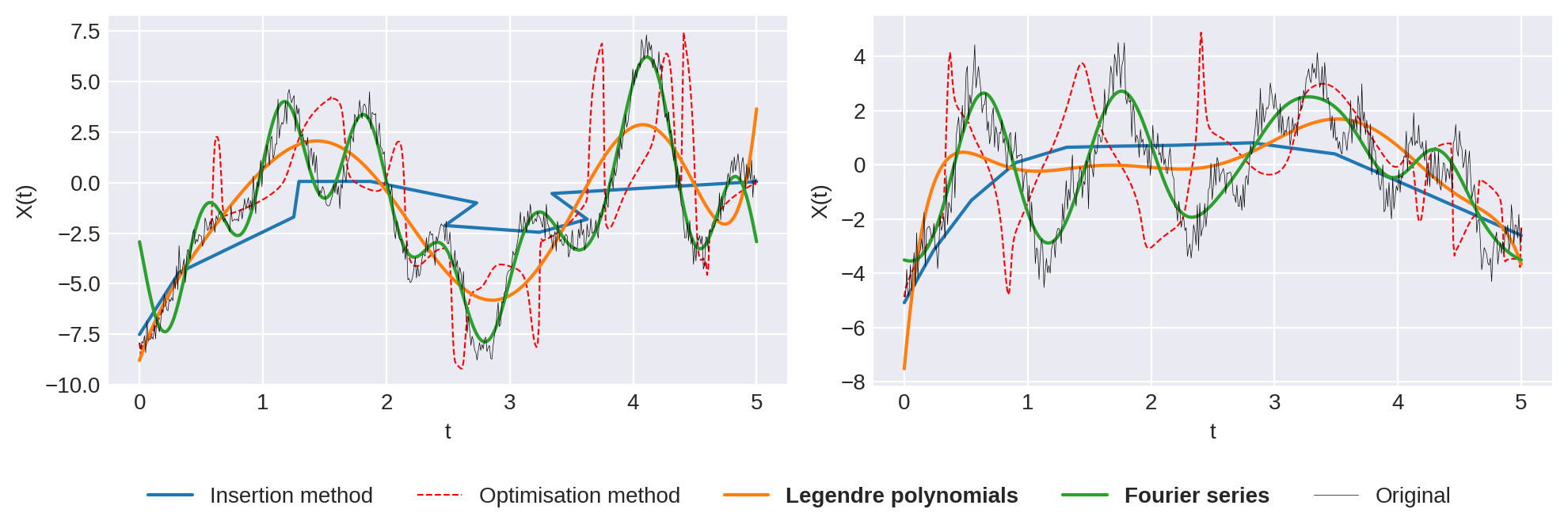}
    \caption{Comparison of different inversion methods.}
    \label{fig:compare_inversion}
\end{figure}
Figure \ref{fig:inv_error} presents the time consumption against the $L_2$ error as the degree of polynomials increases.
\begin{figure}[htbp]
    \centering
    \includegraphics[width=1\linewidth]{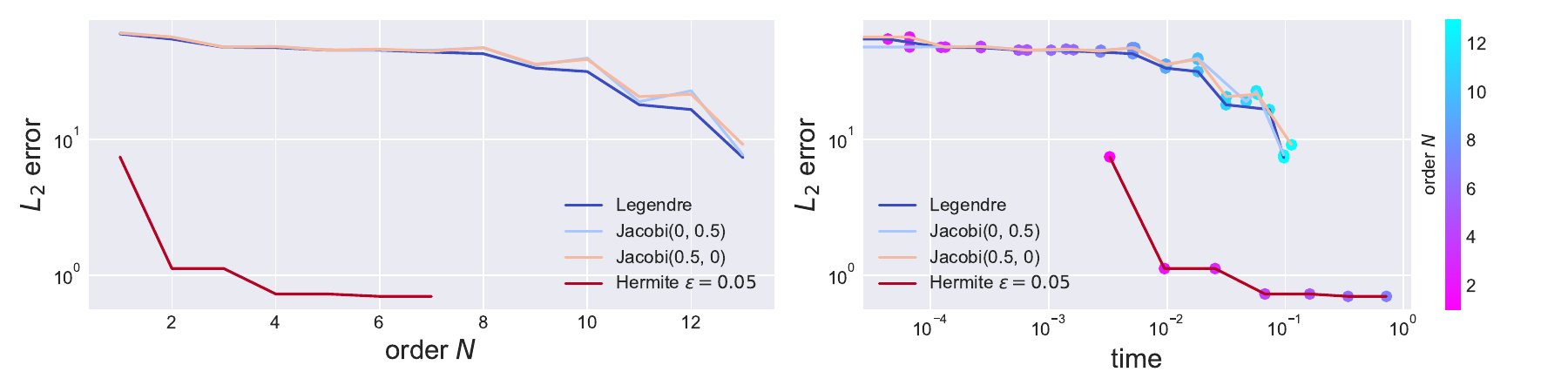}
    \caption{$L_2$ error of signature inversion via orthogonal polynomials with respect to the polynomial order $N$ and time. Error and time are calculated by an average over 15 paths with $200$ sample points.}
    \label{fig:inv_error}
\end{figure} 
Notably, the factor holding the most influence over the reconstruction quality is the truncation level of the signature, as it bounds the order of polynomials we can retrieve. We refer the interested reader to a discussion about inversion quality in \cref{appendix:visualisation}. Namely, Figures \ref{fig:visual_sig_inv_rough} and \ref{fig:visual_sig_inv_rough_real} show more examples of inverted signatures using different types of paths and polynomial bases.

\subsection{Generating long time series}\label{subsection:smooth_eval}
In this section, we introduce the \texttt{SigDiffusions} pipeline for generating multivariate time series by the following strategy, also illustrated in \cref{fig:sigdiffusions_diagram}: 
\begin{enumerate}
    \item Choose an orthogonal basis and order $N$ sufficient to represent the time series with enough detail to retain its meaningful components while smoothing out unnecessary noise.
    \item Compute the log-signatures of the time series, augmented as described in \cref{inversion} and truncated at level $N+2$.
    \item Train and sample a score-based diffusion model to generate the log-signatures.
    \item Invert the synthetic log-signatures back to time series using our formulae from \cref{inversion}.
    
\end{enumerate}

\subsubsection{Time series representation and model capacity}\label{section:tradeoff}
Here, we highlight the trade-off between model capacity and the faithfulness of the time series representation as given by its truncated Fourier series.
Since the signature inversion formulae are exact, the inversion quality depends only on how well the underlying signal is approximated by the retrieved Fourier basis coefficients. When the primary goal is to model the overall shape of a multivariate time series, it is sufficient to truncate the signature, and thus the Fourier series, at a low level. Low truncation levels capture most of this information, smooth out high-order noise, and simplify the generation task.
In contrast, modelling signals with high-frequency components requires higher signature truncation levels, raising the complexity of the diffusion task. As the signature size grows exponentially with the truncation level, generating highly oscillatory time series with high fidelity becomes increasingly constrained by the model's capacity.
To illustrate this, we show the generated samples from \texttt{SigDiffusions} trained on a noisy Lotka–Volterra system with different log-signature truncation levels in Figure \ref{fig:samples_lvls}. At each truncation level, the log-signatures become progressively more difficult to generate accurately.

\begin{figure}[hbtp]
    \centering
    \includegraphics[width=\linewidth]{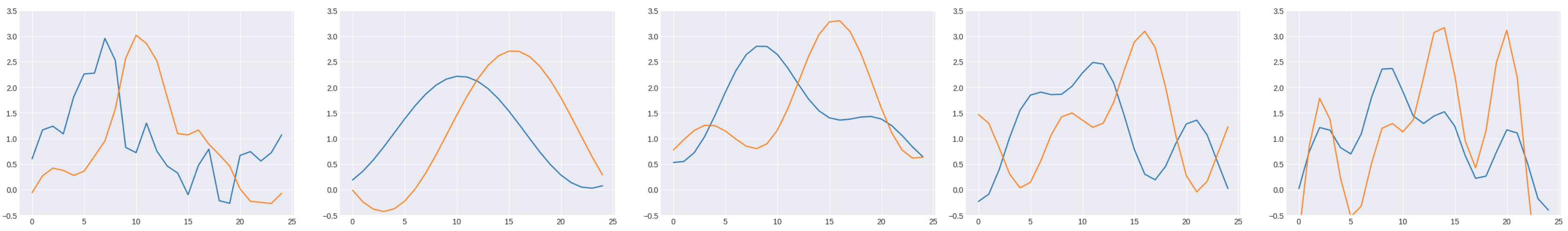}
    \caption{Time series representation and model capacity trade-off. Left to right: real sample from a noisy Lotka–Volterra system, sample generated by \texttt{SigDiffusion} with signature truncation level 3, 4, 5, 6.}
    \label{fig:samples_lvls}
\end{figure}

\subsubsection{Generating step-$4$ log-signatures}\label{section:step-4}
We now demonstrate the exceptional ability of signatures to capture the shape and cross-channel dependencies of time series at low truncation levels. We show that \texttt{SigDiffusions} applied to step-$4$ log-signatures, combined with Fourier inversion, outperform state-of-the-art diffusion-based models on the task of generating 1000-point-long time series.

\paragraph{Datasets}
We perform experiments on five different time series datasets: \textbf{Sines} \citep{yoon2019time} - a dataset of 5-dimensional sine curves with randomly sampled frequency and phase, \textbf{Predator-prey} \citep{bilovs2023modeling} - a two-dimensional continuous system evolving according to a set of ODEs, \textbf{Household Electric Power Consumption (HEPC)} \citep{uciml2024electric} - the univariate \textit{voltage} feature from a real-world dataset of household power consumption, \textbf{Exchange Rates} \citep{lai2018modeling, 2017laiguokun} - a real-world dataset containing daily exchange rates of 8 currencies, and \textbf{Weather} \citep{weather} - a real-world dataset reporting weather measurements.

\paragraph{Metrics}
We use the metrics established in \citet{yoon2019time}. The \textbf{Discriminative Score} reports the out-of-sample accuracy of a recurrent neural network (RNN) classifier trained to distinguish between real and generated time series. To improve readability, we report values offset by $0.5$, so that in the ideal case, where real and generated samples are indistinguishable by the classifier, this metric will approach $0$. The \textbf{Predictive Score} measures the loss of a next-point predictor RNN trained exclusively on synthetic data, with the loss evaluated on the real test data set. In the case of univariate time series, the RNN predicts the value of the time series 20 time points ahead, instead of predicting the value of an unseen channel. We also run the Kolmogorov-Smirnov (KS) test on marginal distributions of random batches of ground truth and generated paths. We repeat this test 1000 times with a batch size of 64 and report the mean KS score with the mean Type I error for a 5\% significance threshold.
Since the cross-channel terms of the log-signature are not necessary for the inversion methods, we generate a concatenated vector of the log-signatures of each separate dimension plus the augmentation described in \cref{inversion}.

\paragraph{Benchmarks}
Table \ref{table:disc_pred} lists the time series generation performance metrics compared with four recent diffusion model architectures specifically designed to handle long or continuous-time paths: 

\begin{itemize}[leftmargin=0.5cm]
    \item \textbf{Diffusion-TS \citep{yuan2024diffusion}} This model introduces a novel Fourier-based training objective to disentangle temporal features of different seasonalities. This interpretable decomposition strategy makes the model particularly robust to varying time series lengths, demonstrating strong performance relative to other benchmarks as the training time series become longer.
    \item \textbf{CSPD-GP \citep{bilovs2023modeling}} This approach replaces the time-independent noise corruption mechanism with samples drawn from a Gaussian process, effectively modelling diffusion on time series as a process occurring within the space of continuous functions. CSPD-GP (RNN) and CSPD-GP (Transformer) refer to score-based diffusion models with the score function either being an RNN or a transformer.
    \item \textbf{Denoising Diffusion Operators (DDO) \citep{lim2023score}} These models generalise diffusion models to function spaces with a Hilbert space-valued Gaussian process to perturb the input data. Additionally, they use neural operators for the score function, ensuring consistency with the underlying function space formulation. DDO's kernel smoothness hyperparameter $\gamma$ is tuned and reported for each dataset.
\end{itemize}
 
We compute the metrics using 1000 samples from each model. Table \ref{table:model_size} shows the model sizes and training times, demonstrating that \texttt{SigDiffusion} outperforms the other models while also having the most efficient architecture. Table \ref{table: marginals} in the Appendix evaluates the time series marginals using the KS test. More details about the experimental setup can be found in \cref{appendix:experiments}.

\begin{table}
  \caption{Results for generating time series of length 1000.}
  \label{table:disc_pred}
  \centering
  \begin{tabular}{llll}
    \toprule
    Dataset & Model & Discriminative Score & Predictive Score \\
    \midrule
   \multirow{4}{*}{Sines}& SigDiffusion (ours) &  \underline{0.100$\pm$.026} & \underline{0.191$\pm$.004} \\
   & DDO ($\gamma = 1$) & \textbf{0.040$\pm$.025} & \textbf{0.187$\pm$.003} \\
   & Diffusion-TS & 0.291$\pm$.070 &  0.365$\pm$.006 \\
   & CSPD-GP (RNN) & 0.468$\pm$.030 & 0.244$\pm$.012 \\
   & CSPD-GP (Transformer) & 0.493$\pm$.003 & 0.390$\pm$.023 \\
    \midrule
    \multirow{4}{*}{Predator-prey}& SigDiffusion (ours) & \underline{0.184$\pm$.058}  & \textbf{0.056$\pm$.000}\\
   & DDO ($\gamma = 10$) & 0.211$\pm$.080 & \textbf{0.056$\pm$.000}\\  
   & Diffusion-TS & 0.500$\pm$.000 & 0.482$\pm$.023\\
   & CSPD-GP (RNN) & \textbf{0.168$\pm$.024} & 0.057$\pm$.000 \\
   & CSPD-GP (Transformer) & 0.498$\pm$.003 & 0.919$\pm$.002\\
    \midrule
    \multirow{4}{*}{HEPC}& SigDiffusion (ours) & \textbf{0.070$\pm$.032} & \underline{0.050$\pm$.012}\\
   & DDO ($\gamma = 1$) & \underline{0.081$\pm$.019} & \textbf{0.044$\pm$.001} \\
   & Diffusion-TS & 0.438$\pm$.057  & 0.066$\pm$.022\\
   & CSPD-GP (RNN) & 0.415$\pm$.045 & 0.108$\pm$.002\\
   & CSPD-GP (Transformer) & 0.500$\pm$.000 & 0.551$\pm$.028 \\
   \midrule
       \multirow{4}{*}{Exchange Rates}& SigDiffusion (ours) & \textbf{0.278$\pm$.062} & \textbf{0.057$\pm$.001} \\
    & DDO ($\gamma = 1$) & \underline{0.326$\pm$.102} & \underline{0.094$\pm$.004}\\
   & Diffusion-TS & 0.401$\pm$.196 & 0.120$\pm$.016 \\
   & CSPD-GP (RNN) & 0.500$\pm$.001 & 0.273$\pm$.100 \\
   & CSPD-GP (Transformer) & 0.500$\pm$.000 & 0.432$\pm$.074 \\
   \midrule
    \multirow{4}{*}{Weather}& SigDiffusion (ours) & \textbf{0.350$\pm$.080} & \textbf{0.168$\pm$.001} \\
   & DDO ($\gamma = 10$) & \underline{0.356$\pm$.196} & \underline{0.307$\pm$.007} \\
   & Diffusion-TS & 0.498$\pm$.003 & 0.438$\pm$.035 \\
   & CSPD-GP (RNN) & 0.500$\pm$.000 & 0.505$\pm$.007 \\
   & CSPD-GP (Transformer) &  0.500$\pm$.000 & 0.490$\pm$.000 \\
   \bottomrule
  \end{tabular}
\end{table}

\begin{table}[ht]
  \caption{Comparison of model sizes.}
  \label{table:model_size}
  \centering
  \begin{tabular}{lllll}
    \toprule
    Dataset & Model & Parameters & Training Time & Sampling Time\\
    \midrule
   \multirow{4}{*}{Sines} & SigDiffusion (ours) & 229K & 8 min & 11 sec\\
   & DDO ($\gamma = 1$) & 4.12M & 3.6 h & 42 min \\
   & Diffusion-TS & 4.18M & 57 min & 15 min \\
   & CSPD-GP (RNN) & 759K & 9 min & 1 min \\
   & CSPD-GP (Transformer) & 973K & 15 min & 5 min \\
    \midrule
   \multirow{4}{*}{Predator-prey} & SigDiffusion (ours) & 211K & 8 min  & 12 sec\\
   & DDO ($\gamma = 10$) & 4.12M & 3.5 h & 42 min \\
   & Diffusion-TS & 4.17M & 55 min & 14 min  \\
   & CSPD-GP (RNN) & 758K & 8 min & 1 min \\
   & CSPD-GP (Transformer) & 972K & 16 min  & 5 min  \\
    \midrule
   \multirow{4}{*}{HEPC} & SigDiffusion (ours) & 206K & 8 min & 12 sec \\
   & DDO ($\gamma = 1$) & 4.12M & 2.6 h & 42 min \\
   & Diffusion-TS & 4.17M  & 50 min & 15 min \\
   & CSPD-GP (RNN) & 758K & 4 min  & 1 min \\
   & CSPD-GP (Transformer) & 972K & 9 min  & 5 min \\
    \midrule
    \multirow{4}{*}{Exchange rates} & SigDiffusion (ours) & 247 K & 9 min & 11 sec \\
    & DDO ($\gamma = 1$) & 4.12M & 3.6 h & 42 min \\
   & Diffusion-TS & 4.29 M & 1.2 h & 20 min \\
   & CSPD-GP (RNN) & 760 K & 11 min & 1 min \\
   & CSPD-GP (Transformer) & 974 K & 15 min & 6 min \\
    \midrule
    \multirow{4}{*}{Weather} &  SigDiffusion (ours) & 282K & 8 min & 12 sec \\
    & DDO ($\gamma = 10$) & 4.12M & 3.6 h & 42 min \\
   & Diffusion-TS & 4.3 M & 1.4 h & 20 min \\
   & CSPD-GP (RNN) & 763 K & 14 min & 1 min \\
   & CSPD-GP (Transformer) & 975 K & 17 min & 6 min \\
    \bottomrule
  \end{tabular}
\end{table}

\section{Conclusion and Future Work}\label{conclusion}
In this paper, we introduced \texttt{SigDiffusion}, a new diffusion model that gradually perturbs and denoises log-signature embeddings of long time series, preserving their Lie algebraic structure. To recover the path from its log-signature, we proved that the coefficients of the expansion of a path in a given basis, such as Fourier or orthogonal polynomials, can be expressed as explicit linear functionals on the signature, or equivalently as polynomial functions on the log-signature. These results provide explicit signature inversion formulae, representing a major improvement over signature inversion algorithms previously proposed in the literature. Finally, we demonstrated how combining \texttt{SigDiffusion} with these inversion formulae provides a powerful generative approach for time series that is competitive with state-of-the-art diffusion models for temporal data.

As this is the first work on diffusion models for time series using signature embeddings, there are still many research directions to explore. To mitigate the rapid growth in the number of required signature features for high-frequency signals described in \cref{section:tradeoff}, future work could explore alternative embeddings to the signature. For example, other types of path developments derived from rough path theory, which embed temporal signals into (compact) Lie groups, such as those proposed by \citet{cass2024free}, may offer a more parsimonious representation. These alternatives retain many of the desirable properties of signatures, including the availability of a flat-space Lie algebra. However, it remains unclear how an inversion mechanism would work in these cases. Finally, it would be interesting to understand how \emph{discrete-time signatures} \citep{diehl2023generalized} could be leveraged to encode \emph{discrete sequences} on Lie groups and potentially perform diffusion-based generative modelling for text.

\subsubsection*{Acknowledgments}
Barbora Barancikova is supported by UK Research and Innovation [UKRI Centre for Doctoral Training in AI for Healthcare
grant number EP/S023283/1].

\bibliography{iclr2025_conference}
\bibliographystyle{iclr2025_conference}

\newpage

\appendix

\section*{Appendix}

This appendix is structured in the following way. In \cref{sec:extra-sig} we complement the material presented in \cref{sec:sig-diff} with additional details on the signature. In \cref{appendix:orthogonal_polynomials}, we provide examples of orthogonal polynomial families one can use for signature inversion due to the derived inversion formulae in \cref{inversion}. In \cref{sec:proofs} we provide proofs for the signature inversion \cref{thm:fourier_inversion} and \cref{thm:poly_inversion}. \cref{appendix:visualisation} contains additional examples and discussion about the quality of signature inversion by different bases. \cref{appendix:experiments} provides details on the implementation of experiments.

\section{Additional Details on the Signature}\label{sec:extra-sig}
In this section, we establish the foundational algebraic framework for signatures in \cref{subsection:algebraic_setup}. We then provide a mathematically rigorous definition of the (log)signature in \cref{appendix:log-sig}, building upon the introduction in \cref{sec:sig-diff}. The section concludes with illustrative signature computation examples in \cref{sec:ex-comp}.

\subsection{Algebraic setup}\label{subsection:algebraic_setup}

For any positive integer $n \in \mathbb N$, we consider the \emph{truncated tensor algebra} over $\mathbb R^d$
\begin{equation*}
    T^n(\mathbb R^d) := \bigoplus_{k=0}^{n}(\mathbb R^d)^{\otimes k},
\end{equation*}
where $\otimes$ denotes the outer product of vector spaces. For any scalar $\alpha \in \mathbb R$, we denote by $T^n_\alpha(\mathbb R^d) = \{A \in T^n(\mathbb R^d) : A_0 = \alpha\}$ the hyperplane of elements in $T^n(\mathbb R^d)$ with the $0^{th}$ term equal to $\alpha$. 

$T^n(\mathbb R^d)$ is a non-commutative algebra when endowed with the tensor product $\cdot$ defined for any two elements $A=\left(A_{0}
,A_{1},...,A_n\right)  $ and $B=\left(  B_{0},B_{1},...B_n\right)$ 
of $T^n(\mathbb R^d)$ as follows
\begin{equation}
A \cdot B = \left(C_{0}, C_{1},...,C_n \right) \in T^n(\mathbb R^d),
 \quad \text{ where } \quad C_{k}=\sum_{i=0}^{k}A_{i} \otimes B_{k-i}\in (\mathbb R^d)^{\otimes k}.
 \label{eqn:tensor-product}
\end{equation}
The standard basis of $\mathbb R^d$ is denoted by $e_1,e_2,...,e_d$. We will refer to these basis elements as \emph{letters}. Elements of the induced standard basis of $T^n(\mathbb R^d)$ are often referred to as \emph{words} and abbreviated
$$e_{i_1i_2...i_k} = e_{i_1} \otimes e_{i_2} \otimes...\otimes e_{i_k}, \quad \text{for} \quad 1 \leq i_1,...,i_k \leq d \text{ and } 0 \leq k \leq n.$$
We will make use of the dual pairing notation $\langle e_{i_1i_2...i_k}, A \rangle \in \mathbb R$ to denote the $(i_1,...,i_k)^{th}$ element of a tensor $A \in T^n(\mathbb R^d)$. This pairing is extended by linearity to any linear combination of words. 

Following \citet{reutenauer2003free}, the truncated tensor algebra $T^n(\mathbb R^d)$ carries several additional algebraic structures. 

Firstly, it is a \emph{Lie algebra}, where the Lie bracket is the commutator
$$[A,B] = A \cdot B - B \cdot A \quad \text{for } A,B \in T^n(\mathbb R^d).$$
We denote by $\mathcal{L}^n(\mathbb R^d)$ the smallest Lie subalgebra of $T^n(\mathbb R^d)$ containing $\mathbb R^d$. We note that the Lie algebra $\mathcal{L}^n(\mathbb R^d)$ is a vector space of dimension $\beta(d, n)$ with
\begin{equation*}
    \beta(d, n) = \sum_{k = 1}^n \frac{1}{k} \sum_{i | k} \mu\left(\frac{k}{i}\right) d^i,
\end{equation*}
where $\mu$ is the M{\"o}bius function \citep{reutenauer2003free}. Bases of this space are known as \emph{Hall bases} \citep{reutenauer2003free, reizenstein2017calculation}. One of the most well-known bases is the \emph{Lyndon basis} indexed by \emph{Lyndon words}. A Lyndon word is a word occurring lexicographically earlier than any word obtained by cyclically rotating its elements.

Secondly, $T^n(\mathbb R^d)$ is also a commutative algebra with respect to the \emph{shuffle product} $\shuffle$. On basis elements, the shuffle product of two words of length $r$ and $s$ (with $r + s \leq n$) is the sum over the $\binom{r+s}{s}$ ways of interleaving the two words. For a more formal definition, see \citet[Section 1.4]{reutenauer2003free}. 

Related to the shuffle product is the \emph{right half-shuffle product} $\succ$ defined recursively as follows: for any two words $e_{i_1...i_r}$ and $e_{j_1...j_s}$ and letter $e_j$
\begin{equation*}
    e_{i_1...i_r}\succ e_{j} = e_{i_1...i_rj} \quad \text{and} \quad e_{i_1...i_r} \succ e_{j_1...j_s}=(e_{i_1...i_r} \succ e_{j_1...j_{s-1}} + e_{j_1...j_{s-1}} \succ e_{i_1...i_r})\cdot e_{j_s}.
\end{equation*}
The right half-shuffle product will be useful for carrying out computations in the next section. Note that the following relation between shuffle and right half-shuffle products holds \citep{salvi2023structure}
\begin{equation*}
    e_{i_1...i_r} \shuffle e_{j_1...j_s} = e_{i_1...i_r} \succ e_{j_1...j_s} + e_{j_1...j_s} \succ e_{i_1...i_r}.
\end{equation*}
Equipped with this algebraic setup, we can now introduce the signature.

\subsection{The (log)signature}
\label{appendix:log-sig}
Let $x : [0,1] \to \mathbb{R}^d$ be a smooth path. The \emph{step-$n$ signature} $S^{\leq n}(x)$ of $x$ is defined as the following collection of iterated integrals
\begin{equation}\label{eqn:step-n-sig}
S^{\leq n}(x) = \left(1, S_1(x)  , \ldots, S_n(x)\right) \in T_1^n(\mathbb R^d)
\end{equation}
where
\begin{equation*}
    S_k(x)=\int_{0\leq t_{1}<...<t_{k}\leq 1} dx_{t_{1}} \otimes ... \otimes dx_{t_{k}} \in (\mathbb R^d)^{\otimes k} \quad \text{for } 1 \leq k \leq n.
\end{equation*}

An important property of the signature is usually referred to as the \emph{shuffle identity}. This result is originally due to \citet{ree1958lie}. For a modern proof see \cite[Theorem 1.3.10]{cass2024lecture}.
\begin{lemma}[Shuffle identity] \citep{ree1958lie}\label{thm:shuffle}
    Let $x : [0,1] \to \mathbb{R}^d$ be a smooth path. For any two words $e_{i_1...i_r}$ and $e_{j_1...j_s}$, with $0 \leq r, s \leq n$, the following two identities hold
    \begin{equation*}
        \left\langle e_{i_1...i_r} \shuffle e_{j_1...j_s}, S^{\leq n}({x})\right\rangle = \left\langle e_{i_1...i_r}, S^{\leq n}({x})\right\rangle \left\langle e_{j_1...j_s}, S^{\leq n}({x})\right\rangle,
    \end{equation*}
    \begin{equation*}
       \left\langle e_{i_1...i_r} \succ e_{j_1...j_s}, S^{\leq n}({x})\right\rangle = \int_0^1 \left\langle e_{i_1...i_r}, S^{\leq n}({x})_t \right\rangle d \left\langle e_{j_1...j_s}, S^{\leq n}({x})_t\right\rangle,
    \end{equation*}
    where $S^{\leq n}(x)_t$ is the step-$n$ signature of the path $x$ restricted to the interval $[0,t]$.
\end{lemma}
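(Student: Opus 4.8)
The plan is to prove the two identities together by a single induction on the total length $r+s$, since the shuffle identity for words of a given total length is precisely what is needed to derive the half-shuffle identity at that length, and the half-shuffle identity at that length then yields the shuffle identity. Throughout I abbreviate $\xi^{i_1\dots i_k}_t := \langle e_{i_1\dots i_k}, S^{\leq n}(x)_t\rangle$, the iterated integral of $x$ over the simplex $\{0\le u_1<\dots<u_k\le t\}$ against $dx^{i_1}\otimes\dots\otimes dx^{i_k}$, with $\xi^{\emptyset}_t\equiv 1$. The single structural fact I use repeatedly is that, for $k\ge 1$, $d\xi^{i_1\dots i_k}_t = \xi^{i_1\dots i_{k-1}}_t\, dx^{i_k}_t$; for smooth $x$ this follows at once from the definition of the iterated integral and differentiation under the integral sign. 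By linearity it gives, for any word $w$ and letter $e_j$, $\langle w\cdot e_j, S^{\leq n}(x)\rangle = \int_0^1 \langle w, S^{\leq n}(x)_t\rangle\, dx^j_t$.

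The induction runs as follows. The base case is $s=1$: then $e_{i_1\dots i_r}\succ e_j = e_{i_1\dots i_r j}$ and the half-shuffle identity $\xi^{i_1\dots i_r j}_1 = \int_0^1 \xi^{i_1\dots i_r}_t\, dx^j_t$ is just the recursive definition of the iterated integral; for total length $\le 1$ the shuffle identity is trivial because one of the words is then empty. For the inductive step, fix words $I,J$ with $s\ge 1$, assume both identities for all word pairs of total length $<r+s$, write $J=J' j_s$, and combine the recursive definition of $\succ$ with $e_I\succ e_{J'}+e_{J'}\succ e_I = e_I\shuffle e_{J'}$ to get $e_I\succ e_J = (e_I\shuffle e_{J'})\cdot e_{j_s}$. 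Pairing with the signature,
\begin{equation*}
\langle e_I\succ e_J, S^{\leq n}(x)\rangle = \int_0^1 \langle e_I\shuffle e_{J'}, S^{\leq n}(x)_t\rangle\, dx^{j_s}_t = \int_0^1 \xi^I_t\,\xi^{J'}_t\, dx^{j_s}_t = \int_0^1 \xi^I_t\, d\xi^{J}_t,
\end{equation*}
where the middle equality is the shuffle identity for $(e_I,e_{J'})$, of total length $r+s-1$, applied pointwise in $t$, and the last uses $d\xi^J_t = \xi^{J'}_t\, dx^{j_s}_t$. This is the half-shuffle identity at length $r+s$. The shuffle identity at that length now follows from the product rule and the fundamental theorem of calculus applied to the smooth map $t\mapsto\xi^I_t\xi^J_t$, which vanishes at $t=0$ when $r,s\ge1$: $\langle e_I, S^{\leq n}(x)\rangle\langle e_J, S^{\leq n}(x)\rangle = \xi^I_1\xi^J_1 = \int_0^1\xi^I_t\, d\xi^J_t + \int_0^1\xi^J_t\, d\xi^I_t = \langle e_I\succ e_J, S^{\leq n}(x)\rangle + \langle e_J\succ e_I, S^{\leq n}(x)\rangle = \langle e_I\shuffle e_J, S^{\leq n}(x)\rangle$, using the shuffle/half-shuffle decomposition for the last step. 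The cases with $r=0$ or $s=0$ reduce to the trivial identities above.

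I expect the main obstacle to be purely organisational: laying out the interleaved induction so that no identity is ever invoked before it has been proved. Concretely, I would state the hypothesis as ``both identities hold for every pair of words with $|I|+|J|\le N$'', establish the half-shuffle case at level $N$ using only the shuffle identity at level $N-1$, and then the shuffle case at level $N$ using the half-shuffle case at level $N$. The analytic steps — differentiation under the integral sign, the product rule, integration by parts — are entirely routine here because $x$ is smooth; only if one wanted the statement for paths of bounded variation would these need to be upgraded to their Stieltjes-integral counterparts, and that is not required for this lemma.
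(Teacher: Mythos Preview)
Your proof is correct: the interleaved induction on total word length---using the recursive definition of $\succ$ to reduce the half-shuffle identity at level $N$ to the shuffle identity at level $N-1$, then recovering the shuffle identity at level $N$ via the product rule---is the standard argument, essentially Ree's original one. The paper itself does not supply a proof of this lemma; it merely states the result and refers the reader to Ree (1958) and to Theorem~1.3.10 of the cited lecture notes for a modern treatment, so there is no in-paper argument to compare against.

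One small organisational remark: your induction hypothesis should be phrased so that the shuffle identity at level $N-1$ is available for \emph{every} upper endpoint $t\in[0,1]$, not just $t=1$, since you invoke it pointwise inside the integral $\int_0^1 \langle e_I\shuffle e_{J'}, S^{\leq n}(x)_t\rangle\, dx^{j_s}_t$. You allude to this (``applied pointwise in $t$''), and it is justified because the same inductive proof works for the path restricted to any subinterval, but in a formal write-up it is cleanest to build this into the statement being inducted on.
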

An example of simple computations using the shuffle identity is presented in \cref{sec:ex-comp}.

Moreover, it turns out that the signature is more than just a generic element of $T_1^n(\mathbb R^d)$; in fact, its range has the structure of a Lie group as we shall explain next. Recall that the tensor exponential $\exp$ and the tensor logarithm $\log$ are maps from $T^n(\mathbb R^d)$ to itself defined as follows
\begin{equation}
    \exp (A) := \sum_{k\geq 0} \frac{1}{k!}A^{\otimes k} \quad \text{and} \quad \log(\mathbf{1} + A) = \sum_{k\geq 1}\frac{(-1)^{k-1}}{k}(A)^{\otimes k} 
    \label{eqn:log}
\end{equation}
where $\mathbf{1} = (1,0,...,0) \in T^n(\mathbb R^d).$
It is a well-known fact that $\exp : T^n_0(\mathbb R^d) \to T^n_1(\mathbb R^d)$ and $\log : T^n_1(\mathbb R^d) \to T^n_0(\mathbb R^d)$ are mutually inverse.

The \emph{step-$n$ free nilpotent Lie group} is the image of the free Lie algebra under
the exponential map
\begin{equation}
    \mathcal{G}^n(\mathbb R^d) = \exp(\mathcal{L}^n(\mathbb R^d)) \subset T^n_1(\mathbb R^d).
\end{equation}
As its name suggests, $\mathcal{G}^n(\mathbb R^d)$ is a Lie group and plays a central role in the theory of rough paths \citep{friz2010multidimensional}.

Here comes the connection with signatures. It is established by the following fundamental result due to \citet{chen1957integration, chen1958integration}, which can also be viewed as a consequence of Chow’s results in \citet{chow1939system}.

\begin{lemma}[Chen–Chow] \citep{chen1957integration, chen1958integration, chow1939system}\label{thm:chen-chow}
    The step-$n$ free nilpotent Lie group $\mathcal{G}^n(\mathbb R^d)$ is precisely the image of the step-$n$ signature map in Equation (\ref{eqn:step-n-sig}) when the latter is applied to all smooth paths in $\mathbb R^d$
    \begin{equation*}
        \mathcal{G}^n(\mathbb R^d) = \{S^{\leq n}(x) \mid x : [0,1] \to \mathbb R^d \ \text{smooth}\}.
    \end{equation*}
\end{lemma}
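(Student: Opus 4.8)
The plan is to prove the two inclusions $\mathcal{G}^n(\mathbb R^d) \supseteq \{S^{\leq n}(x)\}$ and $\mathcal{G}^n(\mathbb R^d) \subseteq \{S^{\leq n}(x)\}$ separately. For the forward inclusion, I would first show that $S^{\leq n}(x) \in \mathcal{G}^n(\mathbb R^d)$ for every smooth path $x$, i.e. that the log-signature $\log(S^{\leq n}(x))$ lands in the free Lie algebra $\mathcal{L}^n(\mathbb R^d)$. The cleanest route is via the shuffle identity (\cref{thm:shuffle}): an element $A \in T^n_1(\mathbb R^d)$ has $\log(A) \in \mathcal{L}^n(\mathbb R^d)$ if and only if $A$ is \emph{group-like}, meaning $\langle u \shuffle v, A\rangle = \langle u, A\rangle \langle v, A\rangle$ for all words $u,v$ — this is the classical Ree criterion / characterisation of the free nilpotent group as the group-like elements of the shuffle Hopf algebra. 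Since \cref{thm:shuffle} says precisely that the signature satisfies this multiplicativity, we get $S^{\leq n}(x) \in \mathcal{G}^n(\mathbb R^d)$ immediately. (Alternatively, one can argue by Picard iteration / the ODE defining the signature that $\frac{d}{dt}\log S^{\leq n}(x)_t$ is an element of $\mathbb R^d \subset \mathcal{L}^n$ at each time, so the log-signature stays in the Lie algebra; but the Hopf-algebraic argument is shorter given what we've already stated.)

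For the reverse inclusion, I would argue that every element of $\mathcal{G}^n(\mathbb R^d)$ is realised as a signature. By definition $\mathcal{G}^n(\mathbb R^d) = \exp(\mathcal{L}^n(\mathbb R^d))$, so it suffices to show: for each $\xi \in \mathcal{L}^n(\mathbb R^d)$ there is a smooth path $x$ with $S^{\leq n}(x) = \exp(\xi)$. Here I would use two structural facts. First, the signature of a linear path $x_t = tv$ (with $v \in \mathbb R^d$) is $\exp(v)$ — this is \cref{example_linear_sig} referenced in the text — so every $\exp(v)$ with $v \in \mathbb R^d$ is a signature. Second, Chen's relation (\cref{thm:chen}) says that concatenating paths multiplies signatures, so the set of signatures is closed under the group product $\cdot$; hence it contains the subgroup of $\mathcal{G}^n(\mathbb R^d)$ generated by $\{\exp(v) : v \in \mathbb R^d\}$. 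The remaining point is that this subgroup is all of $\mathcal{G}^n(\mathbb R^d)$: equivalently, $\mathcal{L}^n(\mathbb R^d)$ is generated \emph{as a Lie algebra} by $\mathbb R^d$ (true by definition — it is the \emph{smallest} Lie subalgebra containing $\mathbb R^d$), and elements $\exp([u,v]) = \exp(u)\exp(v)\exp(-u)\exp(-v) \cdot (\text{higher corrections})$ can be obtained as products of the $\exp(\pm w)$; by an induction on the nilpotency step $n$ (using that the step-$n$ group is obtained from the step-$(n-1)$ group by a central extension) one shows the generated subgroup exhausts $\mathcal{G}^n(\mathbb R^d)$. Concretely, given a target $\xi = \sum c_i w_i$ written in a Hall/Lyndon basis, one builds the path as a concatenation of linear segments whose signatures, multiplied via Chen, produce $\exp(\xi)$ up to order $n$, then corrects lower-order discrepancies inductively — this is essentially the "Chow connectivity" argument.

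The main obstacle is this surjectivity (reverse) direction: showing the subgroup generated by one-parameter exponentials $\exp(tv)$ fills out the entire free nilpotent group. The forward direction is essentially immediate from the shuffle identity we have already stated. For the reverse direction, the honest way is either (i) to invoke the Baker--Campbell--Hausdorff formula together with an induction on nilpotency degree, peeling off the center at each step and using that commutators $[u,v]$ are approximated to leading order by the "lie-bracket commutator path" (run $u$, then $v$, then $-u$, then $-v$), while errors live in strictly higher degree and are killed at the next truncation level; or (ii) to appeal directly to Chow's theorem on bracket-generating distributions, as the statement itself suggests by crediting Chow. I would present route (i) in detail since it is self-contained given \cref{thm:chen} and \cref{example_linear_sig}, and remark that it is a nilpotent incarnation of Chow's connectivity result. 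The bookkeeping — controlling which tensor levels the BCH corrections land in, and checking the induction closes — is the only genuinely technical part; everything else is definitional.
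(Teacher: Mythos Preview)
The paper does not supply its own proof of this lemma: it is stated as a classical result, attributed to Chen and Chow via citations, with no argument given in the text. So there is nothing to compare against at the level of proof strategy.

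That said, your proposal is correct and is the standard route taken in the rough-path literature (e.g.\ Friz--Victoir, which the paper cites). The forward inclusion via Ree's characterisation of group-like elements is exactly right and is available to you for free from the shuffle identity already recorded as \cref{thm:shuffle}. The reverse inclusion via linear-path exponentials (\cref{example_linear_sig}), Chen's relation (\cref{thm:chen}), and a BCH induction on the nilpotency step is the usual ``Chow connectivity in the nilpotent setting'' argument, and your description of it is accurate.

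One small point you should patch: the construction you describe produces \emph{piecewise linear} paths, whereas the lemma as stated asks for smooth ones. The fix is cheap --- reparametrise each piecewise linear concatenation by a smooth, strictly increasing $\phi:[0,1]\to[0,1]$ whose derivatives of all orders vanish at the preimages of the corners; the resulting path is $C^\infty$ and, by reparametrisation invariance of the signature, has the same $S^{\leq n}$ --- but it is worth saying explicitly so that the surjectivity argument actually lands in the class of paths the statement names.
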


\subsection{Simple examples of signature computations}\label{sec:ex-comp}
In the following examples, we alter the notation so that for a path $x:[a,t]\rightarrow\mathbb{R}^d$, the tensor representing the $k$-th level of the signature computed on an interval $[a,t]$ is denoted as
\begin{equation}\label{niters}
    S(x)^{(k)}_{a, t} = (S(x)^{i_1, \dots, i_k}_{a, t}: i_1, \dots, i_k\in\{1, \dots, d\}) \in (\mathbb R^d)^{\otimes k}.
\end{equation}
Furthermore, we can express the value of $S(x)^{(k)}_{a, t}$ at a particular set of indices $i_1, \dots, i_k\in \{1, \dots d\}$ as a \textit{k-fold iterated integral}
\begin{equation}
    S(x)^{i_1, \dots, i_k}_{a, t}=\int_{a<t_1<\dots<t_k<t}dx^{i_1}_{t_1}\dots dx^{i_k}_{t_k}.
\end{equation}
We assume that the signature is always truncated at a sufficiently high level $n$, allowing us to denote the step-$n$ signature simply as

\begin{equation}\label{def_sig}
    S(x)_{a, t} = (1, S(x)^{(1)}_{a, t}, S(x)^{(2)}_{a, t}, S(x)^{(3)}_{a, t}, \dots, S(x)^{(n)}_{a, t})\in T_1^n(\mathbb R^d).
\end{equation}

\begin{example}[Geometric interpretation of a 2-dimensional path]\label{signed_area_example} Consider a path $\hat x:[0,9]\rightarrow \mathbb R^2$, where $\hat x=(x_t^1,x_t^2)=(t,x(t))$. Here, $x(t)$ is defined as
\begin{equation*}
   x_t^2 =x(t) = \begin{cases}
    \sqrt{3}t \qquad &t\in[0, 2]\\
    2\sqrt{3} \qquad &t\in[2, 8]\\
    \sqrt{3}t-6\sqrt{3}\qquad &t\in[8, 9]
    \end{cases},
\end{equation*}
which is continuous and piecewise differentiable. In this case, $\dot x_t^1=1$, and $\dot x_t^2$ can be expressed as
\begin{equation*}
    \dot x_t^2 = \dot x(t) = \begin{cases}
    \sqrt{3} \qquad &t\in(0, 2)\\
    0 \qquad &t\in(2, 8)\\
    \sqrt{3}\qquad &t\in(8, 9)
    \end{cases}.
\end{equation*}
One can compute the step-$n$ signature of $\hat x$ as
\begin{align*}
    S(\hat x)_{0, 9} &= (1, S(\hat x)^{(1)}_{0, 9}, S(\hat x)^{(2)}_{0, 9}, S(\hat x)^{(3)}_{0, 9}, \dots, S(\hat x)^{(n)}_{0, 9})\\
    &=(1, S(\hat x)^{1}_{0, 9}, S(\hat x)^{2}_{0, 9}, S(\hat x)^{1, 2}_{0, 9}, S(\hat x)^{2,1}_{0, 9}, S(\hat x)^{1, 1, 1}_{0, 9}, \dots, S(\hat x)_{0,9}^{i_1, \ldots, i_n}),
\end{align*}
where
\begin{align*}
    &S(\hat x)^{1}_{0, 9} = \int_{0<s<9}dx^1_s=x^1_{9}-x^1_{0} = 9\\
    &S(\hat x)^{2}_{0, 9} = \int_{0<s<9}dx^2_s=x^2_{9}-x^2_{0} = 3\sqrt{3}\\
    &S(\hat x)^{1, 1}_{0, 9} = \int_{0<r<s<9}dx^1_rdx^1_s=\int_{0<s<9}x^1_sdx^1_s=\frac{1}{2}\left(x^1_s\right)^2\biggr|_{0}^{9}=\frac{81}{2}\\
    &S(\hat x)^{1, 2}_{0, 9} = \int_{0<r<s<9}dx^1_rdx^2_s=\int_{0<s<9}sdx^2_s=\int_{0<s<9}s\dot x^2_s ds=\frac{\sqrt{3}}{2}s^2\biggr|_{0}^{2}+\frac{\sqrt{3}}{2}s^2\biggr|_{8}^{9}=\frac{21}{2}\sqrt{3}\\
    &S(\hat x)^{2, 1}_{0, 9} = \int_{0<r<s<9}dx^2_rdx^1_s=\int_{0<s<9}x^2_sds=\frac{\sqrt{3}}{2}s^2\biggr|_{0}^{2}+2\sqrt{3}s\biggr|_{2}^{8}+\frac{\sqrt{3}}{2}s^2-6\sqrt{3}s\biggr|_{8}^{9}=\frac{33}{2}\sqrt{3}\\
    &S(\hat x)^{2, 2}_{0, 9} = \int_{0<r<s<9}dx^2_rdx^2_s = \int_{0<s<9}x^2_sdx^2_s = \frac{1}{2}\left(x^2_s\right)^2\biggr|_{0}^{9} = \frac{27}{2}.
\end{align*}
From Figure \ref{fig:signed_area}, let $A_-$ and $A_+$ represent the signed value of the shaded region. The signed Lévy area of the path is defined as $A_-+A_+$. In this case, the signed Lévy area is $-3\sqrt{3}$. Surprisingly, 
\begin{equation*}
    \frac{1}{2}\left(S(\hat x)^{1, 2}_{0, 9}-S(\hat x)^{2, 1}_{0, 9}\right) = \frac{1}{2}\left(\frac{21}{2}\sqrt{3}-\frac{33}{2}\sqrt{3}\right)=-3\sqrt{3}=A_-+A_+,
\end{equation*}
which is exactly the signed Lévy area.
\begin{figure}[hbtp]
    \centering
    \setlength{\abovecaptionskip}{0pt}
    \includegraphics[width=0.8\linewidth]{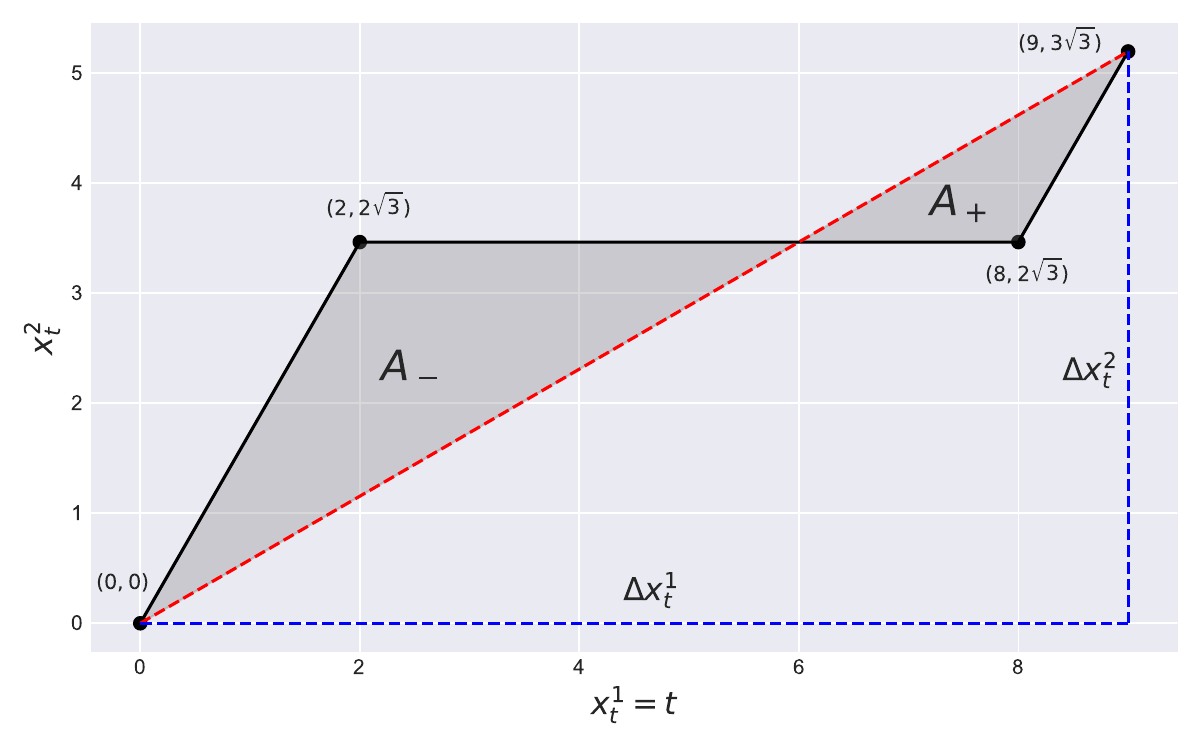}
    \caption{Path in Example \ref{signed_area_example}. The shaded region represents the signed Lévy area.}
    \label{fig:signed_area}
\end{figure}
\end{example}

Another important example is given by the signature of linear paths.

\begin{example}[Signatures of linear paths]\label{example_linear_sig}
    Suppose there is a linear path $x : [a,b] \to \mathbb R^d$. Then the path $x$ is linear in terms of $t$, i.e. 
    \begin{equation*}
        x_t=x_a + \frac{t-a}{b-a}\left(x_b-x_a\right).
    \end{equation*}
    It follows that its derivative can be written as
    \begin{equation*}
        dx_t=\frac{\left(x_b-x_a\right)}{b-a}dt.
    \end{equation*}
    Recalling the definition of a signature, it holds that
    \begin{align*}
        S(x)^{i_1, \dots, i_k}_{a, b}&=\int_{a<t_1<\dots<t_k<b}dx^{i_1}_{t_1}\dots dx^{i_k}_{t_k}\\
        &=\frac{\prod_{j=1}^k\left(x_b^{i_j}-x_a^{i_j}\right)}{(b-a)^k}\int_{a<t_1<\dots<t_k<b}dt_1\dots dt_k\\
        &=\frac{\prod_{j=1}^k\left(x_b^{i_j}-x_a^{i_j}\right)}{(b-a)^k}\frac{(b-a)^k}{k!}\\
        &=\frac{\prod_{j=1}^k\left(x_b^{i_j}-x_a^{i_j}\right)}{k!}.
    \end{align*}
    Therefore, the whole step-$n$ signature can be expressed as a tensor exponential of the linear increment $x_b-x_a$
    \begin{align*}
        S(x)^{(k)}_{a, b} &= \frac{\left(x_b-x_a\right)^{\otimes k}}{k!},\\
        S(x)_{a, b} &=\sum_{k=0}^n \frac{\left(x_b-x_a\right)^{\otimes k}}{k!}\\
        &=\exp_\otimes \left(x_b-x_a\right).
    \end{align*}
\end{example}

Chen's identity in \cref{thm:chen} is one of the most fundamental algebraic properties of the signature as it describes the behaviour of the signature under the concatenation of paths.
\begin{definition}[Concatenation]
    Consider two smooth paths $x : [a,b] \to \mathbb R^d$ and $y : [b,c]\to \mathbb R^d$. Define the \textit{concatenation} of $x$ and $ y$, denoted by $x\ast y$ as a path $[a, c]\rightarrow  \mathbb R^d$
    \begin{equation*}
        (x\ast y)_t :=\begin{cases}
            x_t\qquad &\text{ if } a\leq t\leq b\\
            x_b-{y}_b+ y_t\qquad &\text{ if } b\leq t\leq c
        \end{cases}.
    \end{equation*}
\end{definition}
Chen's identity in \cref{thm:chen} provides a method to simplify the analysis of longer paths by converting them into manageable shorter ones. If we have a smooth path $x:[t_0, t_n]\rightarrow \mathbb R^d$, then inductively, we can decompose the signature of $x$ to
\begin{equation*}
    S(x)_{t_0, t_n}=S(x)_{t_0, t_1}\cdot S(x)_{t_1, t_2}\cdot \cdots \cdot S(x)_{t_{n-1}, t_n}.
\end{equation*}
Moreover, if we have a time series $(t_0,x_0), ..., (t_n, x_n) \in \mathbb R^{d+1}$, we can treat $x$ as a piecewise linear path interpolating the data. Based on Example \ref{example_linear_sig}, one can observe that
\begin{equation*}
    S(x)_{t_0, t_n}=\exp_\otimes \left(x_{t_1}-x_{t_0}\right)\cdot\exp_\otimes \left(x_{t_2}-x_{t_1}\right)\cdot \cdots \cdot \exp_\otimes \left(x_{t_n}-x_{t_{n-1}}\right),
\end{equation*}
which is widely used in Python packages such as \texttt{esig} or \texttt{iisignature}. 

\begin{example}[Example of shuffle identity]
    Consider a smooth path $x=(x^1_t,x^2_t) : [a,b] \to \mathbb R^2$. By integration by parts, we have
    \begin{align*}
        \langle e_1, S(x)_{a,b} \rangle \langle e_2, S(x)_{a,b} \rangle  &= \int_{a<t<b}dx^1_t\int_{a<t<b}dx^2_t\\
        &=\int_{a<t<b}\dot x^1_tdt\int_{a<t<b}\dot x^2_tdt\\
        &\overset{\mathrm{by\, parts}}=\int_{a<t<b}\langle e_2, S(x)_{a,t} \rangle \dot x^1_tdt+\int_{a<t<b}\langle e_1, S(x)_{a,t} \rangle \dot x^2_tdt\\
        &= \langle e_{2,1}, S(x)_{a,b} \rangle + \langle e_{1,2}, S(x)_{a,b} \rangle .
    \end{align*}
    By the shuffle identity, we have
    \begin{align*}
        \langle e_1, S(x)_{a,b} \rangle \langle e_2, S(x)_{a,b} \rangle &=\langle e_1, S({x})_{a, b}\rangle \langle e_2, S({x})_{a, b}\rangle\\
        &=\langle e_1\shuffle e_2, S({x})_{a, b}\rangle\\
        &=\langle e_{1,2}+e_{2,1}, S({x})_{a, b}\rangle\\
        &=\langle e_{1,2}, S(x)_{a,b}\rangle  +\langle e_{2,1}, S(x)_{a,b}\rangle,
    \end{align*}
    which is exactly the same as what we derived via integration by parts.
\end{example}

\begin{example}[Example of half-shuffle computations]
Consider a two-dimensional, real-valued smooth path $\hat x=(t, x(t)):[a, b]\rightarrow\mathbb R^2$ with $x(a)=0$. The elements of the first truncation level of the signature can be retrieved as follows:
\begin{align*}
    \langle e_1, S(\hat x)_{a,t} \rangle = \int_a^tds=t-a, \quad
    \langle e_2, S(\hat x)_{a,t} \rangle = \int_a^td\left(x(s)\right)=x(t)-x(a)=x(t).
\end{align*}
Then, one can express all integrals in terms of powers of $t-a$ and $x(t)$ by signatures of $\hat x$. For example, let $n, m\in \mathbb N_0$,
\begin{align*}
    \int^b_a (t-a)^nx(t)^m dt &= \int^b_a (t-a)^nx(t)^m d(t-a)\\
    &=\int^b_a \left(\langle e_1, S(\hat x)_{a,t} \rangle\right)^n\left(\langle e_2, S(\hat x)_{a,t} \rangle\right)^m d\left(\langle e_1, S(\hat x)_{a,t} \rangle\right)\\
    &= \langle\left(e_1^{\shuffle n}\shuffle e_2^{\shuffle m}\right)\succ e_1, S(\hat x)_{a,b}\rangle.
\end{align*}
\end{example}

\section{Orthogonal Polynomials and Fourier Series}\label{appendix:orthogonal_polynomials}

In this section, we introduce the background material on orthogonal polynomials and the Fourier series necessary for the signature inversion formulae presented in the next section.

\subsection{Orthogonal polynomials}

\subsubsection{Inner product and orthogonality}

Consider a dot product $(  x,  y)=\sum_{i=1}^nx_iy_i$, where ${x}, {y}\in \mathbb R^n$. For some weights $w_1, \cdots, w_n\in \mathbb R_+$, we can also define the weighted dot product $( x,  y)_w=\sum_{i=1}^nw_ix_iy_i$, where $(\cdot, \cdot)_w$ can be written as $(\cdot, \cdot)$ for simplicity. 

For $p\in[1, \infty)$, $L^p_w(\Omega)$ is the linear space of measurable functions from $\Omega$ to $\mathbb R$ such that their weighted $p$-norms are bounded, i.e.
\begin{equation*}
    L^2_w(\Omega) = \left\{v \text{ is measurable in }\Omega\biggr|\int_{\Omega}|v(t)|^2w(t)dt<\infty\right\}.
\end{equation*}
For example, let $d\alpha$ be a non-negative Borel measure supported on the interval $[a, b]$ and $\mathbb V=L^2_w(a, b)$. One can define $(f, g) = \int_a^bf(t)g(t)d\alpha(t)$ as a Stieltjes integral for all $f, g\in \mathbb V$. Note that if $\alpha(t)$ is absolutely continuous, which will be the setting throughout this section, then one can find a weight density $w(t)$ such that $d\alpha(t)=w(t)dt$. In this case, the definition of the inner product over a function space reduces to an integral with respect to a weight function, i.e.,
\begin{equation*}
    ( f, g) = \int_a^bf(t)g(t)w(t)dt.
\end{equation*}
We can then consider an orthogonal polynomial system to be orthogonal with respect to the \textit{weight} function $w$. We denote $\mathbb P[t]\subset L^2_w(\Omega)$ as the space of all polynomials. A polynomial of degree $n$, $p\in\mathbb P_n[t]$, is \textit{monic} if the coefficient of the $n$-th degree is one. 
\begin{definition}[Orthogonal polynomials]\label{orthpoly}
For an arbitrary vector space $\mathbb V$, $u$ and $v$ are \textit{orthogonal} if $(u, v)=0$ for all $u, v\in \mathbb V$. When $\mathbb V =\mathbb P[t]$, a sequence of polynomials $(p_n)_{n\in\mathbb N}\in\mathbb P[t]$ is called orthogonal with respect to a weight function $w$ if for all $m\neq n$,
\begin{equation*}
    (p_n, p_m)= \int p_n(t)p_m(t)w(t)dt = 0,
\end{equation*}
where $\text{deg}(p_n)=n$ is the degree of a polynomial. Furthermore, we say the sequence of orthogonal polynomials is \textit{orthonormal} if $( p_n, p_n) = 1$ for all $n\in\mathbb N$.
\end{definition}
For simplification, the inner product notation $(\cdot,\cdot)$ will be used without specifying the integral formulation for the orthogonal polynomials. To construct a sequence of orthogonal polynomials in Definition \ref{orthpoly}, one can follow the Gram-Schmidt orthogonalisation process, which is stated below.
\begin{theorem}[Gram-Schmidt orthogonalisation]\label{orthoGS_thm}
The polynomial system $(p_n)_{n\in\mathbb N}$ with respect to the inner product $( \cdot, \cdot)$ can be constructed recursively by
\begin{equation}\label{orthoGS_eq}
    p_0=1,\qquad p_n=t^n-\sum_{i=1}^{n-1}\frac{( t^n, p_i)}{( p_i, p_i)}p_i \quad\text{for } n\geq 1.
\end{equation}
\end{theorem}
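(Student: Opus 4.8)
The plan is a routine strong induction on $n$. The only ingredients are the positive definiteness of the weighted inner product $(\cdot,\cdot)$ on $\mathbb P[t]$ — that is, $(p,p)>0$ for every nonzero polynomial $p$, which is exactly what makes the divisions by $(p_i,p_i)$ in the recursion~(\ref{orthoGS_eq}) legitimate — together with the elementary observation that subtracting a polynomial of degree $<n$ from $t^n$ leaves a monic polynomial of degree $n$.

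I would first dispose of the base case: $p_0=1$ has degree $0$, and a one-element family is vacuously orthogonal. For the inductive step, suppose $p_0,\dots,p_{n-1}$ have already been produced by the recursion and satisfy (a) $\deg p_i=i$ with $p_i$ monic, and (b) $(p_i,p_j)=0$ whenever $i\neq j$. From (a), the set $\{p_0,\dots,p_{n-1}\}$ is a basis of $\mathbb P_{n-1}[t]$. Reading off the recursion, $p_n$ is $t^n$ minus a linear combination of $p_0,\dots,p_{n-1}$, hence monic of degree exactly $n$, which settles the degree claim. For the orthogonality claim, fix $j\in\{0,\dots,n-1\}$ and expand
\[
(p_n,p_j)=(t^n,p_j)-\sum_{i=0}^{n-1}\frac{(t^n,p_i)}{(p_i,p_i)}\,(p_i,p_j).
\]
By hypothesis (b), each term with $i\neq j$ vanishes, so the right-hand side collapses to $(t^n,p_j)-\frac{(t^n,p_j)}{(p_j,p_j)}(p_j,p_j)=0$. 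Since $j<n$ was arbitrary, $p_0,\dots,p_n$ is an orthogonal family with the stated degrees; this closes the induction and shows $(p_n)_{n\in\mathbb N}$ is an orthogonal polynomial system in the sense of Definition~\ref{orthpoly}. The ``other half'' $(p_n,p_m)=0$ for $m>n$ is then immediate: the same computation with the roles of $n$ and $m$ interchanged shows $p_m$ is orthogonal to each $p_i$ with $i<m$, and since those span $\mathbb P_{m-1}[t]$ and $p_n\in\mathbb P_{m-1}[t]$, we get $(p_m,p_n)=0$.

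I do not anticipate a genuine obstacle here; this is the classical Gram--Schmidt procedure applied to the monomial basis $1,t,t^2,\dots$ of $\mathbb P[t]$, and the whole argument rests on the one-line inner-product expansion above. The two points worth an explicit sentence are: (i) that $(p_i,p_i)\neq 0$ at every stage — i.e.\ positive definiteness of the $L^2_w$ inner product on nonzero polynomials — so that the recursion is always well defined; and (ii) the bookkeeping of degrees, which guarantees $\{p_0,\dots,p_{n-1}\}$ is a basis of $\mathbb P_{n-1}[t]$ and thus lets one pass freely between ``orthogonal to each $p_j$'' and ``orthogonal to every polynomial of degree $<n$''.
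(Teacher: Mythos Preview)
The paper does not actually prove this theorem; it is stated as the classical Gram--Schmidt process and then used without further argument, so there is no paper proof to compare against. Your argument is the standard one and is correct.

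One point worth making explicit: in your inner-product expansion you sum over $i=0,\dots,n-1$, whereas the recursion~(\ref{orthoGS_eq}) as printed in the paper sums over $i=1,\dots,n-1$, i.e.\ it omits the $p_0$ term. Taken literally, the paper's formula does \emph{not} yield an orthogonal family --- already $(p_1,p_0)=(t,1)=\int_a^b t\,w(t)\,dt$ is nonzero for a generic weight, and more generally $(p_n,p_0)=(t^n,1)$ survives. This is clearly a typo in the statement (the lower index should be $0$), and you have silently repaired it; it would be cleaner to flag the correction rather than change the summation range without comment.
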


From the orthogonalisation process in \cref{orthoGS_thm}, we can see that the $n$-th polynomial $p_n$ has degree $n$ exactly, which means $(p_n)_{n\in\mathbb N}$ is a basis spanning $\mathbb P[t]$. Furthermore, the orthogonal construction makes the orthogonal polynomial system an orthogonal basis with respect to the corresponding inner product. The following proposition forms an explicit expression for coefficients of $(p_k)_{k\in \{0, \cdots, n\}}$ in an arbitrary $n$-th degree polynomial.
\begin{proposition}[Orthogonal polynomial expansion]\label{ortho_expansion}
Consider an arbitrary polynomial $x(t)\in \mathbb P_n[t]$. One can express $x(t)$ by a sequence of orthogonal polynomials $(p_k)_{k\in \{0, \cdots, n\}}$, i.e.,
\begin{equation*}
    x(t) = \sum_{k=0}^{n} \frac{( p_k, x)}{( p_k, p_k)}p_k(t).
\end{equation*}
\end{proposition}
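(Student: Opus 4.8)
The plan is to observe that this is pure linear algebra once one knows $(p_0,\dots,p_n)$ is a basis of $\mathbb P_n[t]$, and then to recover the coefficients by pairing against each $p_j$ and invoking orthogonality.

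First I would invoke \cref{orthoGS_thm}: the Gram–Schmidt recursion \eqref{orthoGS_eq} produces, for each $k$, a polynomial $p_k$ of degree exactly $k$ (the leading term $t^k$ is never cancelled, since the subtracted terms have degree $\leq k-1$). Hence the finite family $p_0,p_1,\dots,p_n$ consists of $n+1$ polynomials of pairwise distinct degrees, so it is linearly independent; as $\dim\mathbb P_n[t]=n+1$, it is a basis of $\mathbb P_n[t]$. Therefore any $x(t)\in\mathbb P_n[t]$ can be written uniquely as $x(t)=\sum_{k=0}^{n}c_k p_k(t)$ for scalars $c_0,\dots,c_n$.

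Next, I would fix $j\in\{0,\dots,n\}$ and take the inner product $(p_j,\cdot)$ of both sides. By bilinearity, $(p_j,x)=\sum_{k=0}^{n}c_k\,(p_j,p_k)$, and by \cref{orthpoly} every term with $k\neq j$ vanishes, leaving $(p_j,x)=c_j\,(p_j,p_j)$. Solving for $c_j$ and substituting back into $x=\sum_k c_k p_k$ gives exactly the asserted formula.

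The only step that is not bookkeeping is justifying that $(p_j,p_j)\neq 0$, so that the division is legitimate; I expect this to be the main (though minor) obstacle. It follows from positive-definiteness of the form $(f,g)=\int_a^b f(t)g(t)w(t)\,dt$ on $\mathbb P[t]$: since $w$ (equivalently $d\alpha$) is non-negative and supported on an infinite set, $(p,p)=\int_a^b p(t)^2 w(t)\,dt=0$ forces $p$ to vanish on that support, and a polynomial vanishing on infinitely many points is identically zero; as $p_j$ has degree $j$ it is nonzero, so $(p_j,p_j)>0$. With this in hand the argument is complete.
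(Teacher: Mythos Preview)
Your proof is correct and is the standard argument: $(p_0,\dots,p_n)$ is a basis of $\mathbb P_n[t]$ by degree considerations, and pairing against each $p_j$ isolates the coefficient via orthogonality, with positive-definiteness of the weighted inner product ensuring $(p_j,p_j)>0$. The paper itself states this proposition without proof (treating it as a standard fact and immediately using it in the proofs of \cref{0lower_poly} and \cref{3recurrence}), so there is no alternative approach to compare against; your write-up would serve perfectly well as the missing justification.
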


\begin{remark}
We have stated the orthogonal polynomial expansion for $x\in \mathbb P_n[t]$. In general, by the closure of orthogonal polynomial systems in $L^2_w(a, b)$, arbitrary $f\in L^2_w(a, b)$ can be written as an infinite sequence of orthogonal polynomials.
\begin{equation*}
    f(t) = \sum_{k=0}^{\infty} \frac{( p_k, f)}{( p_k, p_k)}p_k(t).
\end{equation*}
The $N$-th degree approximation of $f$ is the best approximating polynomial with a degree less or equal to $N$, denoted by
\begin{equation}\label{ortho_projection}
    P_Nf(t)=\sum_{k=0}^{N} \frac{( p_k, f)}{( p_k, p_k)}p_k(t).
\end{equation}
\end{remark}

\subsubsection{Basic properties}
Here, we will list the main properties of orthogonal polynomials significant for our application.
\subsubsection*{The three-term recurrence relation}
\begin{theorem}[Three-term recurrence relation]\label{3recurrence}
    A system of orthogonal polynomials $(p_n)_{n\in\mathbb N}$ with respect to a weight function $w$ satisfies the three-term recurrence relation.
    \begin{align*}
    p_0(t) = 1, \quad p_1(t) = A_1 t +B_1, \quad p_{n+1}(t) = (A_{n+1}t+B_{n+1})p_{n}(t) + C_{n+1}p_{n-1}(t),
    \end{align*}
    for all $n\in\mathbb N$, and $A_i>0$ for all $i\in\mathbb N_0$.
\end{theorem}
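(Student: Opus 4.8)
This is the classical fact that orthogonality together with the exact-degree property forces a three-term recurrence, and I would follow the standard argument. First I would record the degree bookkeeping: by the Gram--Schmidt construction of \cref{orthoGS_thm} (equivalently, directly from \cref{orthpoly}), $\deg p_n = n$ exactly, so $p_0,\dots,p_m$ is a basis of $\mathbb P_m[t]$ for every $m$, and each $p_n$ has a nonzero leading coefficient, which I denote $\kappa_n$. Fix $n \geq 1$ and consider $t\,p_n(t)$, which is a polynomial of degree $n+1$; expand it in the orthogonal basis,
\[
t\,p_n(t) = \sum_{k=0}^{n+1} c_{n,k}\, p_k(t).
\]

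The crucial step is to pin down which coefficients $c_{n,k}$ vanish. Since the inner product satisfies $(t f, g) = (f, t g)$, for every $k \leq n-2$ we have $c_{n,k}\,(p_k,p_k) = (t p_n, p_k) = (p_n, t p_k) = 0$, because $t p_k$ has degree at most $n-1$ and $p_n$ is orthogonal to every polynomial of degree $< n$ (it being orthogonal to $p_0,\dots,p_{n-1}$, which span $\mathbb P_{n-1}[t]$); here $(p_k,p_k)>0$ because $w$ is a positive weight. Hence the expansion collapses to $t\,p_n = c_{n,n+1} p_{n+1} + c_{n,n} p_n + c_{n,n-1} p_{n-1}$. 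Comparing leading coefficients on both sides gives $c_{n,n+1} = \kappa_n/\kappa_{n+1} \neq 0$ (the one place where the \emph{exact} degree $\deg p_n = n$, rather than merely $\leq n$, is needed), so I may divide by $c_{n,n+1}$ and rearrange into $p_{n+1}(t) = (A_{n+1} t + B_{n+1}) p_n(t) + C_{n+1} p_{n-1}(t)$ with $A_{n+1} = \kappa_{n+1}/\kappa_n$, $B_{n+1} = -c_{n,n}/c_{n,n+1}$, and $C_{n+1} = -c_{n,n-1}/c_{n,n+1}$. The base cases $p_0 = 1$ and $p_1 = A_1 t + B_1$, with $A_1 = \kappa_1/\kappa_0$, are immediate from the construction.

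It remains to verify $A_i > 0$. Since $A_{n+1} = \kappa_{n+1}/\kappa_n$, this is the assertion that all leading coefficients share a sign, which I would secure by adopting the standard normalisation that each $\kappa_n > 0$; this is without loss of generality, as replacing any $p_n$ by $-p_n$ affects neither its orthogonality relations nor the weight $w$, and under it $A_{n+1} > 0$ for all $n$ and $A_1 = \kappa_1 > 0$. The argument is otherwise routine; the only delicate points are (i) the non-vanishing of $c_{n,n+1}$, handled by the exact-degree count, and (ii) the positivity of the $A_i$, which is not automatic for an arbitrary orthogonal family and is precisely where the sign convention on $(p_n)$ must be made explicit.
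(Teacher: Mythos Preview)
Your proposal is correct and follows essentially the same route as the paper: expand $t\,p_n$ in the orthogonal basis, use the symmetry $(tf,g)=(f,tg)$ together with \cref{0lower_poly} to kill the coefficients for $k\le n-2$, and rearrange. If anything, you are more careful than the paper's own proof, since you explicitly verify that the top coefficient $c_{n,n+1}$ is nonzero (needed to divide) and you address the positivity of the $A_i$ via a sign normalisation on the leading coefficients---points the paper's proof leaves implicit.
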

Before proving the recurrence relation, we will first show that an orthogonal polynomial is orthogonal to all polynomials with a degree lower than that of itself.
\begin{lemma}\label{0lower_poly}
A polynomial $q(t)\in\mathbb P_n[t]$ satisfies $( q, r)=0$ for all $r(t)\in\mathbb P_m[t]$ with $m<n$ if and only if $q(t)=p_n(t)$ up to some constant coefficient, where $p_n(t)$ denotes the orthogonal polynomial with degree $n$.
\end{lemma}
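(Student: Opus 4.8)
The plan is to prove \cref{0lower_poly} by establishing both directions of the biconditional, using the fact from \cref{orthoGS_thm} that $(p_0, p_1, \ldots, p_n)$ forms an orthogonal basis of $\mathbb{P}_n[t]$, the space of polynomials of degree at most $n$. The key observation is that since each $p_k$ has degree exactly $k$, the polynomials $p_0, \ldots, p_m$ span $\mathbb{P}_m[t]$ for any $m$.

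For the ``if'' direction, suppose $q(t) = c\, p_n(t)$ for some constant $c$. Given any $r \in \mathbb{P}_m[t]$ with $m < n$, I would expand $r$ in the orthogonal basis as $r = \sum_{i=0}^m \gamma_i p_i$ using \cref{ortho_expansion}. Then by linearity of the inner product and orthogonality of the $p_i$, $(q, r) = c\sum_{i=0}^m \gamma_i (p_n, p_i) = 0$, since each $i \leq m < n$ forces $(p_n, p_i) = 0$ by \cref{orthpoly}.

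For the ``only if'' direction, suppose $q \in \mathbb{P}_n[t]$ satisfies $(q, r) = 0$ for all $r \in \mathbb{P}_m[t]$ with $m < n$; in particular $(q, p_i) = 0$ for all $i \in \{0, \ldots, n-1\}$. Expanding $q$ itself in the orthogonal basis via \cref{ortho_expansion}, $q = \sum_{k=0}^n \frac{(p_k, q)}{(p_k, p_k)} p_k$. All coefficients with $k \leq n-1$ vanish by hypothesis, leaving $q = \frac{(p_n, q)}{(p_n, p_n)} p_n$, i.e. $q$ is a constant multiple of $p_n$.

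I do not anticipate a serious obstacle here; the statement is essentially a repackaging of the orthogonal-basis property. The one point requiring a little care is ensuring that $q$ genuinely lies in $\mathbb{P}_n[t]$ (degree at most $n$) so that the finite expansion in \cref{ortho_expansion} applies verbatim, and noting that the ``constant coefficient'' in the statement may be zero (the case $q \equiv 0$), which is consistent with both directions. It is also worth remarking that implicitly we need the weight $w$ to be such that $(p_n, p_n) \neq 0$, which holds since $w$ is a positive weight density and $p_n$ is not identically zero.
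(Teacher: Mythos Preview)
Your proof is correct and follows essentially the same approach as the paper: both directions rest on expanding in the orthogonal basis via \cref{ortho_expansion} and invoking the defining orthogonality of the $p_k$. The only cosmetic difference is in the ``only if'' direction, where the paper first subtracts a multiple of $p_n$ matching the leading coefficient of $q$ and then shows the remainder $s=q-\tfrac{\alpha_n}{\tilde\alpha_n}p_n\in\mathbb P_{n-1}[t]$ vanishes, whereas you expand $q$ directly and read off that the sub-leading coefficients are zero---your route is marginally more streamlined but not conceptually distinct.
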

\begin{proof}
$\Longrightarrow$: Consider $q(t)=\alpha_nt^n+O(t^{n-1})$ and $p_n(t)=\tilde \alpha_nt^n+O(t^{n-1})$. Then we define
\begin{equation*}
    s(t) = q(t) - \frac{ \alpha_n}{\tilde\alpha_n} p_n(t) = O(t^{n-1}),
\end{equation*}
which has a degree at most $n-1$. Therefore, for all $m<n$,
\begin{equation*}
    ( s, p_m) = ( q, p_m) - \frac{ \alpha_n}{\tilde\alpha_n}( p_n, p_m)=0.
\end{equation*}
The former inner product $( q, p_m)=0$ by assumption, while the latter inner product $( p_n, p_m)=0$ by orthogonality. By Proposition \ref{ortho_expansion},
\begin{equation*}
    s(t) = \sum_{m=0}^{n-1} \frac{( p_m, s)}{( p_m, p_m)}p_m(t) = 0 \quad\implies\quad q(t) = \frac{\tilde \alpha_n}{\alpha_n} p_n(t).
\end{equation*}
$\Longleftarrow$: Consider $r(t)=\sum_{k=0}^{m}r_kp_k(t)$. Let $q(t)=cp_n(t)$. Using the linearity of the inner product and orthogonality of $(p_n)_{n\in\mathbb N}$, for all $m<n$,
\begin{equation*}
    ( q, r) = \left( cp_n(t), \sum_{k=0}^{m}r_kp_k(t)\right) = c\sum_{k=0}^{m}r_k( p_n(t),p_k(t)) = 0.
\end{equation*}
\end{proof}
Now, we have enough tools to prove the famous three-term recurrence relation.
\begin{proof}[Proof of \cref{3recurrence}]
Consider a sequence of orthogonal polynomials $(p_n)_{n\in\mathbb N}$. When $n=1$, $p_1$ can be expressed as $A_1t+B_1$ for $A_1, B_1\in\mathbb R$. This is because $p_1$ is an element in an orthogonal basis with degree $1$. Based on the inner product of orthogonal polynomials,
\begin{equation*}
    ( p_k, tp_n) =  \int tp_k(t)p_n(t)w(t)dt = ( tp_k, p_n).
\end{equation*}
Therefore, for $0\leq k<n-1$, we have $( p_k, tp_n)=0$ by \cref{0lower_poly}. Since $tp_n(t)$ has degree $n+1$, by Proposition \ref{ortho_expansion},
\begin{align*}
    &tp_n(t) = \sum_{k=0}^{n+1} \frac{( p_k, tp_n)}{( p_k, p_k)}p_k(t) = \sum_{k=n-1}^{n+1} \frac{( p_k, tp_n)}{( p_k, p_k)}p_k(t) = \alpha_{n-1}p_{n-1}(t)+\alpha_{n}p_{n}(t) + \alpha_{n+1}p_{n+1}(t)\\
    &\implies p_{n+1} = \left(\frac{1}{\alpha_{n+1}}t-\frac{\alpha_n}{\alpha_{n+1}}\right)p_n(t)-\frac{\alpha_{n-1}}{\alpha_{n+1}}p_{n-1}(t).
\end{align*}
\end{proof}
\begin{remark}
Recurrence is the core property of orthogonal polynomials in our setting, as one can find higher-order coefficients based on lower-order coefficients given the analytical form of the orthogonal polynomials. This idea coincides with the shuffle identity of signatures. As stated in \cref{thm:poly_inversion}, one can construct an explicit recurrence relation for the coefficients of orthogonal polynomials by linear functionals acting on signatures.
\end{remark}

\subsubsection*{Approximation results for functions in $L_w^2$}
Without loss of generality, consider $f\in L^2_w(-1, 1)$, as we can always transform an arbitrary interval $[a, b]$ linearly into the interval $[-1, 1]$. Recall the $N$-th degree approximation $P_Nf(t)$ defined in Equation (\ref{ortho_projection}). The uniform convergence of the $N$-th degree approximation $P_Nf(t)$ to $f$ can be found in \citet{the_num_ana}, where we obtain
\begin{equation*}
    \frac{1}{\sqrt{2\pi}}\|f-P_Nf\|_{2}\leq\|f-P_Nf\|_{\infty} \leq (1+ \|P_N\|)\|f-q\|_{\infty}, \qquad q\in\mathbb P_N,
\end{equation*}
where $\|P_N\|$ relates to the system of orthogonal polynomials, and $\|f-q\|_{\infty}$ depends on the smoothness of $f$. In the case of Chebyshev polynomials, where the weight function is $w(t)=1/\sqrt{1-t^2}$, $\|P_N\|=\frac{4}{\pi}\log n+\mathcal{O}(1)$ \citep{the_num_ana}. For some $\alpha\in(0, 1]$,
\begin{equation*}
    \|f-P_Nf\|_{2} \leq c_k\frac{\log N}{N^{k+\alpha}}\qquad \text{ for } N\geq 2.
\end{equation*}
This bound result is shown numerically in Figure \ref{fig:ortho_converge}.

\begin{figure}[htbp]
    \centering
    \includegraphics[width=\linewidth]{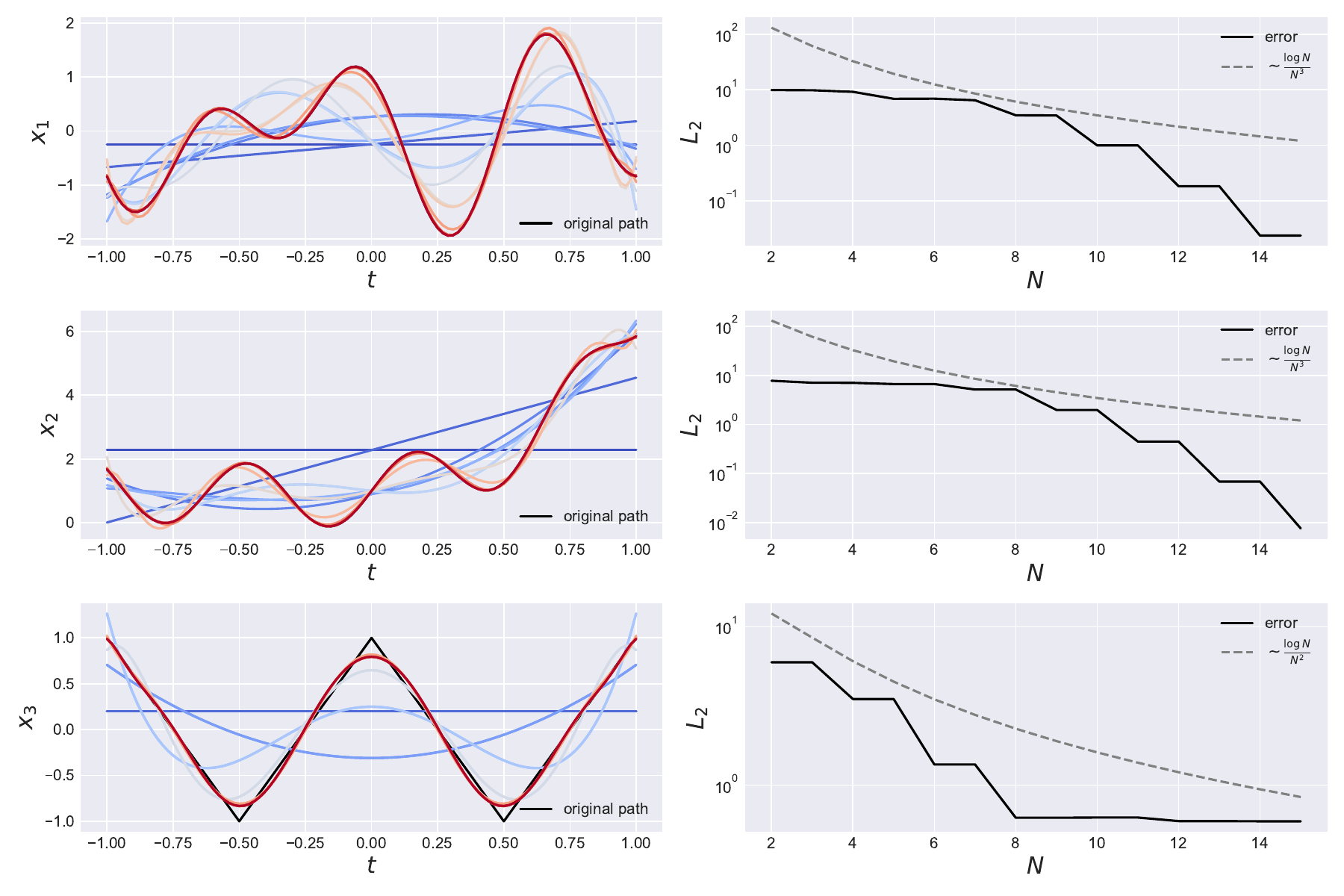}
    \caption{The approximation quality (left) and convergence of the $L_2$ error (right) for Chebyshev polynomials of increasing degree $N$. As $N$ increases, the colours change from blue to red in the left column. Paths are given by (top to bottom): $x_1(t)=\cos(10t)-\sin(2\pi t)$, $x_2(t)=\sin(10t)+e^{2t}-t$, $x_3(t)=2|2t-1|-1$.}
    \label{fig:ortho_converge}
\end{figure}

\subsubsection{Examples}\label{orth_poly_examples_j_h}
In this subsection, we will provide two general orthogonal polynomial families, Jacobi polynomials and Hermite polynomials, which will be used for signature inversion in the next section. Figure \ref{fig:ortho_examples} visualises the first few polynomials of these two kinds.
\begin{figure}[htbp]
    \centering
    \setlength{\abovecaptionskip}{0pt}
    \includegraphics[width=1\linewidth]{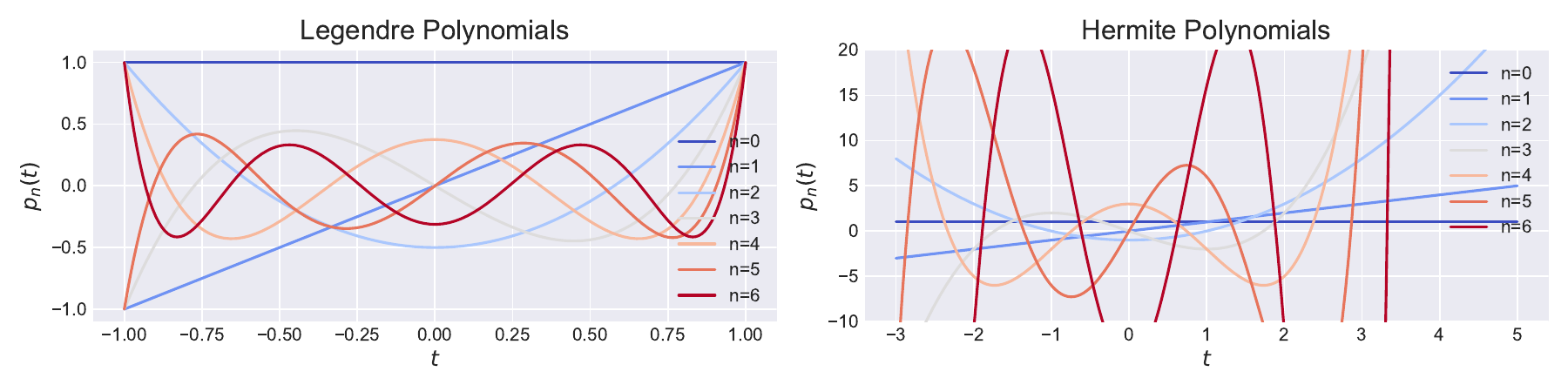}
    \caption{Visualisation of the first $7$ Legendre and Hermite polynomials.}
    \label{fig:ortho_examples}
\end{figure}
\subsubsection*{Jacobi polynomials}
Jacobi polynomials $p_n^{(\alpha, \beta)}$ are a system of orthogonal polynomials with respect to the weight function $w: (-1, 1)\rightarrow \mathbb R$ such that
\begin{equation*}
    w(t; \alpha, \beta)= (1-t)^\alpha(1+t)^\beta.
\end{equation*}
There are many well-known special cases of Jacobi polynomials, such as Legendre polynomials $p_n^{(0, 0)}$ and Chebyshev polynomials $p_n^{(-1/2, -1/2)}$. In general, the analytical expression of Jacobi polynomials \citep{quantum_ortho} is defined by the hypergeometric function ${}_2F_1$:
\begin{equation*}
    p_n^{(\alpha, \beta)}(t) = \frac{(\alpha+1)_n}{n!}{}_2F_1(-n, 1+\alpha+\beta+n; \alpha+1;\frac{1}{2}(1-t)),
\end{equation*}
where $(\alpha+1)_n$ is the Pochhammer's symbol. For orthogonality, Jacobi polynomials satisfy
\begin{equation*}
    \int_{-1}^1(1-t)^\alpha(1+t)^\beta p_m^{(\alpha, \beta)}(t)p_n^{(\alpha, \beta)}(t)=\frac{2^{\alpha+\beta+1}\Gamma(n+\alpha+1)\Gamma(n+\beta+1)}{(2n+\alpha+\beta+1)\Gamma(n+\alpha+\beta+1)n!}\delta_{nm}, \qquad \alpha, \beta>-1,
\end{equation*}
where $\delta_{mn}$ is the Kronecker delta.  For fixed $\alpha, \beta$, the recurrence relation of Jacobi polynomials is
\begin{align*}
    p_n^{(\alpha, \beta)}(t)=&\frac{2n+\alpha+\beta-1}{2n(n+\alpha+\beta)(2n+\alpha+\beta-2)}\left((2n+\alpha+\beta)(2n+\alpha+\beta-2)t+\alpha^2-\beta^2\right)p_{n-1}^{(\alpha, \beta)}(t)\\
    &-\frac{(n+\alpha-1)(n+\beta-1)(2n+\alpha+\beta)}{n(n+\alpha+\beta)(2n+\alpha+\beta-2)}p_{n-2}^{(\alpha, \beta)}(t).
\end{align*}
\subsubsection*{Hermite polynomials}
Hermite polynomials are a system of orthogonal polynomials with respect to the weight function $w: (-\infty, \infty)\rightarrow \mathbb R$ such that $w(t)=\exp(-{t^2}/{2})$. These are called the probabilist's Hermite polynomials, which we will use throughout the section. There is another form called the physicist's Hermite polynomials with respect to the weight function $w(t)=\exp(-t^2)$. The explicit expression of the probabilist's Hermite polynomials can be written as
\begin{equation*}
    H_n(t)=n!\sum_{m=0}^{\lfloor\frac{n}{2}\rfloor}\left(-\frac{1}{2}\right)^m\frac{t^{n-2m}}{m!(n-2m)!},
\end{equation*}
with the orthogonality property
\begin{equation}\label{hermite_ortho}
    \int_{-\infty}^\infty H_m(t)H_n(t)e^{-\frac{t^2}{2}}dt=\sqrt{2\pi}n!\delta_{mn}.
\end{equation}
Lastly, we state the recurrence relation of Hermite polynomials as $H_{n+1}(t)=xH_n(t)-nH_{n-1}(t)$. Note that the weight of Hermite polynomials can be viewed as an unnormalised normal distribution. If we are more interested in a particular region far away from the origin, we can define a ``shift-and-scale'' version of Hermite polynomials with respect to the weight
\begin{equation*}
    w^{t_0, \epsilon}(t)=\exp((t-t_0)^2/2\epsilon^2),
\end{equation*}
where $t_0$ denotes the new centre, and $\epsilon$ is the standard deviation. Let $(H_n^{t_0, \epsilon})_{n\in\mathbb N}$ denote the shift-and-scale Hermite polynomials. Then, the orthogonality property is
\begin{align*}
    \int_{-\infty}^\infty H^{t_0, \epsilon}_m(t)H^{t_0, \epsilon}_n(t)e^{-\frac{(t-t_0)^2}{2\epsilon^2}}dt=\epsilon\int_{-\infty}^\infty H^{t_0, \epsilon}_m(t_0+\epsilon y)H^{t_0, \epsilon}_n(t_0+\epsilon y)e^{-\frac{y^2}{2}}dy,
\end{align*}
by substitution $y=(t-t_0)/\epsilon$. Hence, if 
\begin{equation}\label{hermite_relation}
    H^{t_0, \epsilon}_n(t_0+\epsilon y)=H_n(y), \qquad n\in\mathbb N,
\end{equation}
then $(H_n^{t_0, \epsilon})_{n\in\mathbb N}$ is an orthogonal polynomial system with orthogonality
\begin{equation*}
    \int_{-\infty}^\infty H^{t_0, \epsilon}_m(t)H^{t_0, \epsilon}_n(t)e^{-\frac{(t-t_0)^2}{2\epsilon^2}}dt=\epsilon\int_{-\infty}^\infty H_m(y)H_n(y)e^{-\frac{y^2}{2}}dy=\epsilon\sqrt{2\pi}n!\delta_{mn},
\end{equation*}
which follows from the orthogonality of Hermite polynomials in Equation (\ref{hermite_ortho}). Similarly, the connection between Hermite and shift-and-scale Hermite polynomials in Equation (\ref{hermite_relation}) provides a way to find the explicit form and recurrence relation of $(H_n^{t_0, \epsilon})_{n\in\mathbb N}$, which are
\begin{align}
    &H^{t_0, \epsilon}_n(t)=n!\sum_{m=0}^{\lfloor\frac{n}{2}\rfloor}\left(-\frac{1}{2}\right)^m\frac{1}{m!(n-2m)!}\left(\frac{t-t_0}{\epsilon}\right)^{n-2m}\label{Hermite_sas_form},\\
    &H^{t_0, \epsilon}_{n+1}(t)=\frac{1}{\epsilon}(t-t_0)H^{t_0, \epsilon}_n(t)-nH^{t_0, \epsilon}_{n-1}(t).\label{Hermite_sas_rec}
\end{align}
\begin{remark}\label{hermite_step}
    Note that there is a simple expression for $(H_n^{t_0, \epsilon})_{n\in\mathbb N}$ at $t=t_0$. One can easily observe that
    \begin{equation*}
        H^{t_0, \epsilon}_n(t_0)=\begin{cases}\left(-\frac{1}{2}\right)^{\frac{n}{2}}\frac{n!}{\frac{n}{2}!}\qquad &\text{for even }n \\
        0\qquad &\text{for odd }n 
        \end{cases}.
    \end{equation*}
\end{remark}

\subsection{Fourier series}
One can also represent a function by a trigonometric series. Here, we only present a brief introduction to the Fourier series, providing complementary details to the main result in \cref{thm:fourier_inversion}.

\subsubsection{Trigonometric series}
Let $f\in L^1(-\pi, \pi)$. The Fourier series of $f$ is defined by
\begin{equation*}
    F(t)=\frac{a_0}{2} + \sum_{k=1}^\infty\left(a_k\cos(kt)+b_k\sin(kt)\right),
\end{equation*}
where
\begin{align*}
    &a_k = \frac{1}{\pi}\int_{-\pi}^\pi f(t)\cos(kt)dx, \qquad k\geq0\\
    &b_k = \frac{1}{\pi}\int_{-\pi}^\pi f(t)\sin(kt)dx, \qquad k\geq1,
\end{align*}
which can be derived from the orthogonal bases $\{\cos{kt}\}_k$ and $\{\sin{kt}\}_k$. More generally, we can extend the period to $2l\in\mathbb R$. For $f\in L^1(-l, l)$ and $k\in \mathbb{Z}$, 
\begin{equation}\label{eq: Fourier_coeff}
    F(t) = \sum_{n=-\infty}^{\infty}c_ke^{i\frac{2\pi}{l}kt},\qquad c_k=\frac{1}{l}\int_{0}^lf(t)e^{-i\frac{2\pi}{l}kt}dt.
\end{equation}

In the setting of the Fourier series, the expression for the $k$-th coefficient $c_k$ in the exponential form can be defined as a linear functional $\mathcal L_k(x)=c_k^x$ on the space of Fourier series, for $x\in L^1(-l, l)$.

\subsubsection{Convergence}
Under some regularity conditions, $F(t)$ converges to $f(t)$ \citep{the_num_ana}. To examine the convergence of the Fourier series, we define the partial sum of the Fourier series as
\begin{equation*}
    S_Nf(t)=\frac{a_0}{2} + \sum_{k=1}^N\left(a_k\cos(kt)+b_k\sin(kt)\right).
\end{equation*}
Now we present pointwise convergence and uniform convergence results \citep{the_num_ana} of the Fourier series for various functions.
\begin{theorem}[Pointwise convergence for bounded variation]\label{2fourier_pointwise_converge}
    For a $2\pi$-periodic function $f$ of bounded variation on $[-\pi, \pi]$, its Fourier series at an arbitrary $t$ converges to 
    \begin{equation*}
        \frac{1}{2}\left(f(t^-)+f(t^+)\right).
    \end{equation*}
\end{theorem}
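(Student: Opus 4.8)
The plan is to carry out the classical Dirichlet--Jordan argument: represent the partial sum as a convolution against the Dirichlet kernel, and then localise the integral near the origin, where the bounded variation hypothesis does all the work via the second mean value theorem for integrals. \emph{Step 1 (Dirichlet kernel representation).} Expanding $a_k,b_k$ and summing $\tfrac12+\sum_{k=1}^N\cos(k\theta)=\tfrac{\sin((N+1/2)\theta)}{2\sin(\theta/2)}$ gives
\begin{equation*}
S_N f(t)=\frac{1}{2\pi}\int_{-\pi}^{\pi} f(t-u)\,D_N(u)\,du,\qquad D_N(u)=\frac{\sin((N+\tfrac12)u)}{\sin(u/2)},
\end{equation*}
with $\frac{1}{2\pi}\int_{-\pi}^{\pi}D_N(u)\,du=1$ (take $f\equiv1$). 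Since $D_N$ is even, folding gives $S_N f(t)=\frac{1}{2\pi}\int_0^{\pi}(f(t+u)+f(t-u))D_N(u)\,du$ and $\frac{1}{2\pi}\int_0^{\pi}D_N(u)\,du=\tfrac12$; subtracting $\tfrac12(f(t^+)+f(t^-))$ reduces the theorem to proving that
\begin{equation*}
I_N:=\frac{1}{2\pi}\int_0^{\pi}\varphi(u)\,D_N(u)\,du\longrightarrow 0,\qquad \varphi(u):=f(t+u)+f(t-u)-f(t^+)-f(t^-).
\end{equation*}

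\emph{Step 2 (reduction to monotone pieces and localisation).} Because $f$ has bounded variation, so do $u\mapsto f(t\pm u)$ on $[0,\pi]$, hence so does $\varphi$, and $\varphi(0^+)=0$. By the Jordan decomposition write $\varphi=\varphi_1-\varphi_2$ with each $\varphi_i$ non-decreasing on $[0,\pi]$; subtracting constants we may take $\varphi_i(0^+)=0$, so it suffices to show $\int_0^{\pi}\varphi_i(u)D_N(u)\,du\to0$ for a non-negative non-decreasing $\varphi_i$ with $\varphi_i(0^+)=0$. Fix $\delta\in(0,\pi)$ and split $\int_0^{\pi}=\int_0^{\delta}+\int_\delta^{\pi}$. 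On $[\delta,\pi]$ the map $u\mapsto\varphi_i(u)/\sin(u/2)$ is bounded and integrable, so $\int_\delta^{\pi}\varphi_i(u)D_N(u)\,du\to0$ by the Riemann--Lebesgue lemma. For the piece on $[0,\delta]$, replace $1/\sin(u/2)$ by $2/u$ (the difference is bounded and continuous on $[0,\pi]$, so its contribution again vanishes by Riemann--Lebesgue), leaving $\int_0^{\delta}\varphi_i(u)\tfrac{\sin((N+1/2)u)}{u}\,du$.

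\emph{Step 3 (the uniform tail bound and conclusion).} Applying Bonnet's second mean value theorem to the non-negative non-decreasing $\varphi_i$, there is $\xi\in[0,\delta]$ with
\begin{equation*}
\int_0^{\delta}\varphi_i(u)\,\frac{\sin((N+\tfrac12)u)}{u}\,du=\varphi_i(\delta^-)\int_\xi^{\delta}\frac{\sin((N+\tfrac12)u)}{u}\,du=\varphi_i(\delta^-)\int_{(N+\frac12)\xi}^{(N+\frac12)\delta}\frac{\sin v}{v}\,dv,
\end{equation*}
and by the uniform bound $|\int_a^b\tfrac{\sin v}{v}\,dv|\le C$ valid for all $0\le a\le b$, this is at most $C\,\varphi_i(\delta^-)$, independently of $N$. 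Hence, given $\epsilon>0$, first choose $\delta$ with $C\varphi_i(\delta^-)<\epsilon$ (possible since $\varphi_i(0^+)=0$), controlling the $[0,\delta]$ part uniformly in $N$, and then send $N\to\infty$ to kill the $[\delta,\pi]$ part and the two Riemann--Lebesgue error terms; summing the contributions of $\varphi_1,\varphi_2$ gives $I_N\to0$, i.e. $S_N f(t)\to\tfrac12(f(t^-)+f(t^+))$. The one genuinely delicate point --- and the only place the hypothesis is used --- is the $1/u$ singularity of the kernel at the origin: the $[0,\delta]$ estimate must be uniform in $N$, which is exactly what monotonicity (via Bonnet's theorem) plus the uniform boundedness of the sine integral provide; without bounded variation this step breaks down.
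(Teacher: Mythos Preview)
Your argument is the classical Dirichlet--Jordan proof and is correct in all its main steps: the Dirichlet kernel representation, the even-fold to reduce to $\varphi(u)=f(t+u)+f(t-u)-f(t^+)-f(t^-)$, the Jordan decomposition into monotone pieces, the split at $\delta$ with Riemann--Lebesgue on $[\delta,\pi]$, the replacement of $1/\sin(u/2)$ by $2/u$, and Bonnet's second mean value theorem together with the uniform bound on the sine integral to handle the singular $[0,\delta]$ contribution. One cosmetic point: Bonnet's form for a non-negative non-decreasing weight on $[0,\delta]$ indeed produces the factor $\varphi_i(\delta^-)$ as you wrote, but it is worth stating explicitly that the constant $C=\sup_{0\le a\le b}\big|\int_a^b\frac{\sin v}{v}\,dv\big|$ is finite (e.g.\ via integration by parts or the boundedness of the sine integral), since that is what makes the $[0,\delta]$ estimate uniform in $N$.

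As for comparison with the paper: the paper does not supply its own proof of this theorem. It is quoted as a background result from the numerical-analysis literature and stated without argument. So there is no alternative route to compare against; your write-up simply fills in what the paper leaves as a citation, and does so by the standard textbook method.
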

\begin{theorem}[Uniform convergence for piecewise smooth functions]
    If $f$ is a $2\pi$-periodic piecewise smooth function, 
    \begin{enumerate}[label=(\alph*)]
        \item if $f$ is also continuous, then the Fourier series converges uniformly and continuously to $f$;
        \item if $f$ is not continuous, then the Fourier series converges uniformly to $f$ on every closed interval without discontinuous points.
    \end{enumerate}
\end{theorem}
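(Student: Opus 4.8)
The plan is to reduce both cases to the continuous case and to the already-established pointwise result (\cref{2fourier_pointwise_converge}). For part (a), the key observation is that a continuous, $2\pi$-periodic, piecewise smooth $f$ has a derivative $f'$ that is piecewise continuous, hence bounded and therefore square-integrable on $[-\pi,\pi]$. First I would relate the Fourier coefficients of $f$ to those of $f'$ by integrating by parts over each smooth piece; since $f$ is continuous and $2\pi$-periodic the boundary contributions cancel, yielding $a_k(f) = -\tfrac{1}{k}b_k(f')$ and $b_k(f) = \tfrac{1}{k}a_k(f')$ for $k\geq 1$. The gain of a factor $1/k$ is exactly what drives the argument.

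Summing these and applying Cauchy--Schwarz then gives $\sum_{k\geq1}(|a_k(f)|+|b_k(f)|) \leq (\sum_{k\geq1}k^{-2})^{1/2}\,(\sum_{k\geq1}(|a_k(f')|^2+|b_k(f')|^2))^{1/2}$, where the second factor is finite by Bessel's inequality applied to $f'\in L^2$. Absolute summability of the coefficients lets the Weierstrass $M$-test deliver uniform convergence of the Fourier series to some continuous limit $g$. To identify $g$ with $f$, I would invoke \cref{2fourier_pointwise_converge}: a piecewise smooth function has bounded variation, so pointwise the series converges to $\tfrac12(f(t^-)+f(t^+))$, which equals $f(t)$ by continuity; the uniform limit $g$ therefore coincides with $f$, completing part (a).

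For part (b) I would localise the jumps. Let $t_1,\dots,t_m$ be the discontinuities of $f$ within a period, with jumps $d_j=f(t_j^+)-f(t_j^-)$, and let $s$ be the normalised sawtooth whose Fourier series is $s(t)=\tfrac1\pi\sum_{k\geq1}\tfrac{\sin(kt)}{k}$, a $2\pi$-periodic function with a unit jump at the origin. Subtracting a suitably scaled shifted copy at each discontinuity, the function $h = f - \sum_{j=1}^m d_j\, s(\cdot - t_j)$ is continuous and still piecewise smooth, so part (a) applies and the Fourier series of $h$ converges uniformly everywhere. Since the Fourier series of $f$ is the sum of that of $h$ and the finitely many shifted sawtooth series, it remains to control each $\sum_{k\geq1}\tfrac{\sin(k(t-t_j))}{k}$ on a closed interval $[\alpha,\beta]$ avoiding every $t_j$ modulo $2\pi$. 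This is the main obstacle, and precisely where the Gibbs phenomenon forbids uniformity across a jump: I would handle it by summation by parts (Dirichlet's test), using the uniform bound $|\sum_{k=1}^N \sin(k\theta)|\leq 1/\sin(\delta/2)$ valid whenever $\theta$ lies at distance at least $\delta$ from $2\pi\mathbb Z$, which forces uniform convergence of the sawtooth series on $[\alpha,\beta]$. Reassembling the finitely many uniformly convergent pieces then yields uniform convergence of the Fourier series of $f$ on every closed interval free of discontinuities.
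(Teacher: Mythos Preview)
Your argument is correct and is essentially the classical textbook proof: integration by parts plus Bessel's inequality for part (a), and the sawtooth subtraction combined with Dirichlet's test for part (b). The small cosmetic point that the Cauchy--Schwarz step as written drops a harmless factor of $\sqrt{2}$ does not affect the conclusion.

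However, there is nothing to compare against: the paper does not prove this theorem. It is stated in the appendix purely as background material on Fourier series, with a citation to a numerical-analysis text, and the authors move on immediately to the next result. So your proposal supplies a full proof where the paper deliberately offers none; the approach you chose is the standard one found in most analysis references and would be entirely appropriate here.
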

\begin{theorem}[Uniform error bounds]
    Let $f\in C_p^{k, \alpha}(2\pi)$ be a $2\pi$-periodic $k$-times continuously differentiable function that is Hölder continuous with the exponent $\alpha\in(0, 1]$. Then, the $2$-norm and infinity-norm bound of the partial sum $S_Nf$ can be expressed as
    \begin{equation*}
        \frac{1}{\sqrt{2\pi}}\|f-S_Nf\|_{2}\leq\|f-S_Nf\|_\infty\leq c_k\frac{\log N}{N^{k+\alpha}}, \qquad \text{ for } N\geq2.
    \end{equation*}
\end{theorem}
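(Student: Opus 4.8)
The plan is to treat the two inequalities separately, since they are of entirely different character. The left-hand inequality $\frac{1}{\sqrt{2\pi}}\|f - S_N f\|_2 \leq \|f - S_N f\|_\infty$ is an immediate consequence of the trivial norm comparison on $[-\pi,\pi]$: for any $g \in L^2(-\pi,\pi)$ one has $\|g\|_2^2 = \int_{-\pi}^{\pi} |g(t)|^2\, dt \leq 2\pi\,\|g\|_\infty^2$, hence $\|g\|_2 \leq \sqrt{2\pi}\,\|g\|_\infty$. Applying this with $g = f - S_N f$ gives the claim, so essentially no work is needed there.

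The substance of the theorem is the right-hand inequality, which I would obtain by combining a Lebesgue-type inequality with a Jackson-type approximation estimate. First I would realise $S_N$ as convolution against the Dirichlet kernel, $S_N f(t) = \frac{1}{2\pi}\int_{-\pi}^{\pi} f(t-s)\,D_N(s)\, ds$ with $D_N(s) = \frac{\sin((N + 1/2)s)}{\sin(s/2)}$, and record the two structural facts this yields: $S_N$ is a bounded linear operator on $C_p(2\pi)$ whose operator norm equals the Lebesgue constant $L_N := \frac{1}{2\pi}\int_{-\pi}^{\pi} |D_N(s)|\, ds$, and $S_N$ reproduces every trigonometric polynomial of degree at most $N$, i.e. $S_N q = q$ for all such $q$. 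These give the Lebesgue inequality: for any such $q$, writing $f - S_N f = (f - q) - S_N(f - q)$ and taking the supremum norm yields $\|f - S_N f\|_\infty \leq (1 + L_N)\,\|f - q\|_\infty$, and minimising over $q$ gives $\|f - S_N f\|_\infty \leq (1 + L_N)\,E_N(f)$, where $E_N(f)$ is the error of best uniform approximation by degree-$N$ trigonometric polynomials.

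It then remains to bound the two factors. For the Lebesgue constant I would estimate the kernel integral by splitting the range at $s = 0$, using $|\sin(s/2)| \geq |s|/\pi$ away from the origin and the crude bound $|D_N| \leq 2N + 1$ near it, to obtain $L_N = \frac{4}{\pi^2}\log N + O(1) = O(\log N)$. For the best-approximation factor I would invoke the Jackson theorem appropriate to the smoothness class $C_p^{k,\alpha}(2\pi)$: since $f^{(k)}$ is Hölder continuous of order $\alpha$, its modulus of continuity satisfies $\omega(f^{(k)}, \delta) \leq c\,\delta^\alpha$, and the higher-order Jackson estimate $E_N(f) \leq C\,N^{-k}\,\omega(f^{(k)}, 1/N)$ then yields $E_N(f) \leq c_k\,N^{-(k+\alpha)}$. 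Multiplying the two bounds gives $\|f - S_N f\|_\infty \leq (1 + O(\log N))\,c_k\,N^{-(k+\alpha)} \leq c_k\,\frac{\log N}{N^{k+\alpha}}$ for $N \geq 2$, as required.

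The main obstacle is the best-approximation estimate: the Jackson theorem for the class $C^{k,\alpha}$ is the genuinely nontrivial ingredient, requiring either the construction of a suitable approximating kernel (e.g. a Jackson--Fejér-type kernel) or a reduction to the classical Weierstrass-type machinery for the derivative $f^{(k)}$. The Lebesgue-constant estimate, though classical, is the secondary technical point and must be carried out carefully to secure the logarithmic growth rather than a merely polynomial bound. Given the references already cited for these convergence results, I would quote the two classical estimates and assemble them through the Lebesgue inequality above.
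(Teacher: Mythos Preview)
The paper does not actually prove this theorem: it is stated as background material in the appendix on Fourier series, with the convergence results attributed to the reference \texttt{the\_num\_ana} and presented without proof. Your proposal is therefore not in competition with any argument in the paper; it supplies a proof where the paper simply quotes the literature.

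That said, your outline is correct and is precisely the classical route one would find in a standard approximation-theory text: the left inequality is the trivial $L^2$--$L^\infty$ comparison on an interval of length $2\pi$, and the right inequality is the Lebesgue inequality $\|f - S_N f\|_\infty \leq (1 + L_N)\,E_N(f)$ combined with the Lebesgue-constant asymptotic $L_N = \frac{4}{\pi^2}\log N + O(1)$ and the Jackson estimate $E_N(f) \leq c_k\,N^{-(k+\alpha)}$ for $f \in C_p^{k,\alpha}$. You correctly identify the Jackson theorem as the nontrivial ingredient and the Lebesgue-constant growth as the secondary technical point; both are standard and your assembly of them is sound. One minor cosmetic point: after multiplying $(1 + O(\log N))$ by $c_k N^{-(k+\alpha)}$, the constant in front of $\frac{\log N}{N^{k+\alpha}}$ absorbs the $O(1)$ term from the Lebesgue constant, so strictly speaking the final $c_k$ is a new constant depending on $k$ (and $\alpha$); the paper's statement already uses $c_k$ generically, so this is consistent.
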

For functions only defined on an interval $[a, b]$, we can always shift and extend them to be $2\pi$-periodic functions. These theorems guarantee the convergence of common functions we will use in later experiments. To illustrate this, Figure \ref{fig:fourier_converge} compares the convergence theorem bounds to numerical approximation results. Note that compared with the path $x_2(t)=\sin(10t)+e^{2t}-t$, the other 2 paths have better empirical convergence results. The main reason is that the Fourier series of $x_2$ at $t=\pm 1$ does not pointwise converge to $x_2(\pm 1)$. Since the Fourier series treats the interval $[-1, 1]$ as one period over $\mathbb R$, by \cref{2fourier_pointwise_converge}, the series will converge to $(x_2(-1)+x_2(1))/2$ at $t=\pm 1$, leading to incorrect convergence at boundaries. This property also arises in real-world non-periodic time series, leading us to introduce the \textit{mirror augmentation} later in \cref{remark: mirror-trick}.

Comparing Figures \ref{fig:ortho_converge} and \ref{fig:fourier_converge}, one can observe that orthogonal polynomials are better at approximating continuously differentiable paths, while the Fourier series is better at estimating paths with spikes, and its computation is more stable in the long run. In a later section, Figure \ref{fig:orth_approx} provides a summary of convergence results for different types of orthogonal polynomials and Fourier series, which also match the results shown here.

\begin{figure}[htbp]
    \centering
    \includegraphics[width=\linewidth]{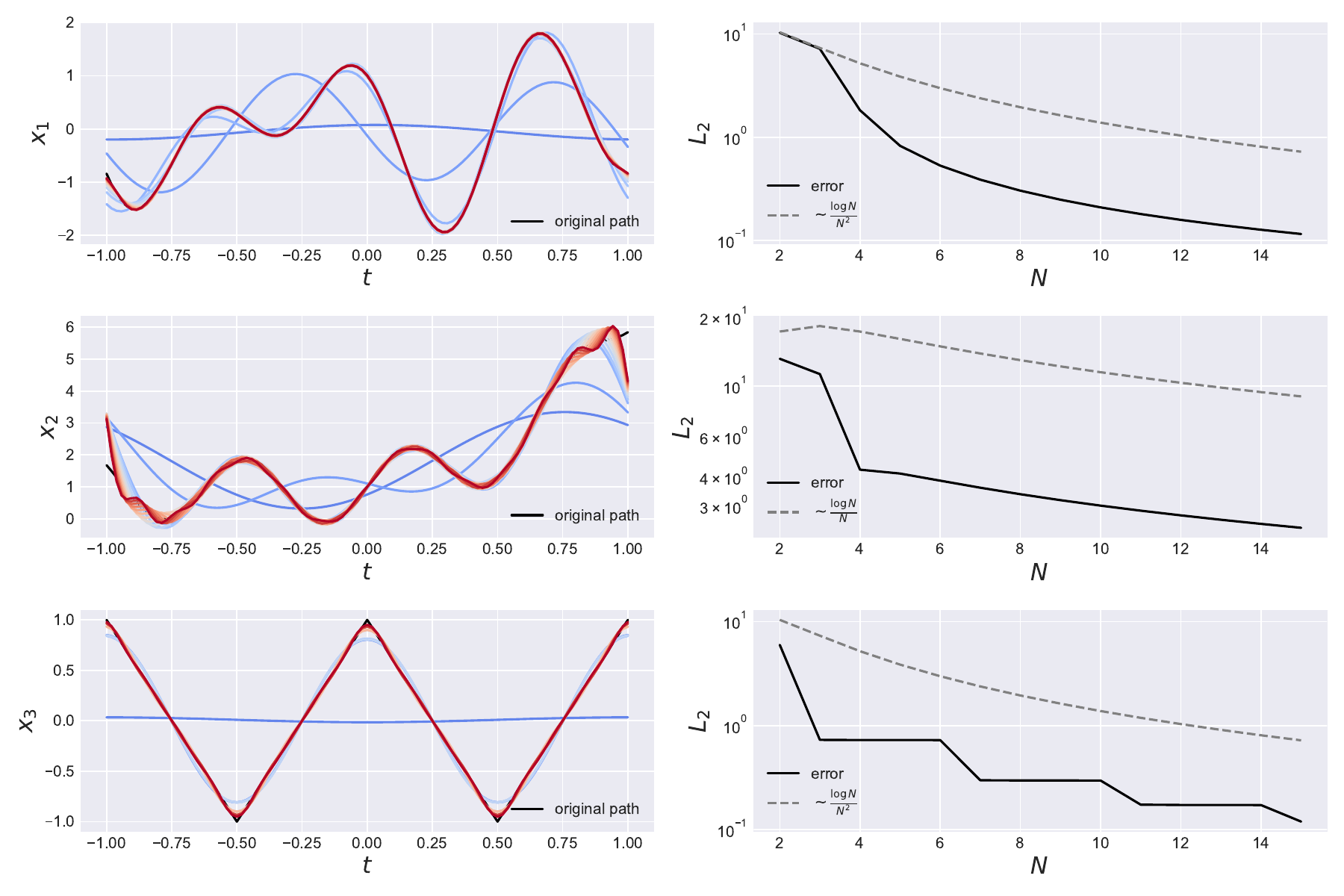}
    \caption{Approximation (left) and $L_2$ convergence (right) results for Fourier series by increasing order $N$, with the same experimental setting as Figure \ref{fig:ortho_converge}.}
    \label{fig:fourier_converge}
\end{figure}

\subsection{Approximation quality of orthogonal polynomials and Fourier series}
Finally, we present a numerical comparison of the approximation results given by the methods introduced above. 

In particular, we will visualise the approximation results of the following objects:
\begin{itemize}
    \item Legendre polynomials: $w(t)=1$
    \item two types of Jacobi polynomials: $w(t)=\sqrt{1+t}$, $w(t)=\sqrt{1-t}$
    \item three types of shift-and-scale Hermite polynomials with different variance for \textit{pointwise approximation}: $\epsilon=0.1, 0.05, 0.01$
    \item Fourier series
\end{itemize}
For pointwise approximation via the Hermite polynomials, each sample point $t_i$ of the function will be approximated by a system of Hermite polynomials centred at the point $t_i$, i.e., $(H_n^{t_i, \epsilon})_{n\in\mathbb{N}}$.
To test the approximation quality, we simulate random polynomial functions and random trigonometric functions. The $L_2$ error is then obtained by an average over $10$ functions of each type.

\subsubsection{Approximation results}
Figure \ref{fig:orth_approx} demonstrates the reduction in $L_2$ error as the order of orthogonal polynomials and Fourier series increases. Among all the bases considered, the Fourier series provides the least accurate approximation for both path types. This is due to its inability to guarantee pointwise convergence at $\pm 1$, which is caused by boundary inconsistencies. Three types of Jacobi polynomials, including Legendre polynomials, show comparable approximation results, with a slight edge in convergence observed for Legendre polynomials. On the other hand, Hermite polynomials exhibit a much lower approximation error due to their shifting focus on the point of interest. However, decreasing $\epsilon$ to sharpen the focus on sample points can cause the coefficients of Hermite polynomials to inflate rapidly. This behaviour is consistent with the analytic form and recurrence relation of the shift-and-scale Hermite polynomials, as detailed in Equation (\ref{Hermite_sas_form}) and Equation (\ref{Hermite_sas_rec}). As a result, Hermite polynomials with $\epsilon=0.01$ do not outperform those with $\epsilon=0.05$. The step-like pattern of decrease observed in Hermite polynomials can be traced back to Remark \ref{hermite_step}.
Figure \ref{fig:orth_approx_real} shows the $L_2$ approximation error of different bases for two of our real-world datasets. Here, we see that apart from using shift-and-scale Hermite polynomials which come with additional complexity (see Figure \ref{fig:inv_error}), the Fourier series seems to be the best candidate for approximation.

\begin{figure}[htbp]
    \centering
    \includegraphics[width=0.9\linewidth]{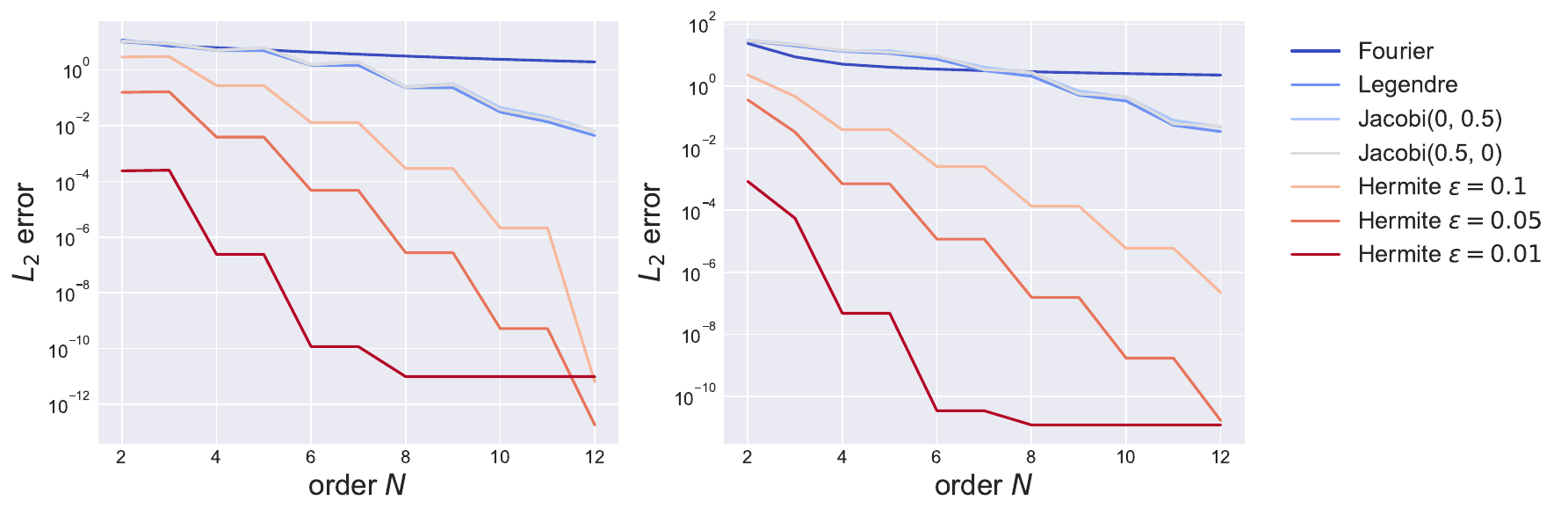}
    \caption{$L_2$ approximation error using different bases. The figures (from left to right) are the corresponding error averaged over $10$ random polynomial and trigonometric functions.}
    \label{fig:orth_approx}
\end{figure}

\begin{figure}[htbp]
    \centering
    \includegraphics[width=0.9\linewidth]{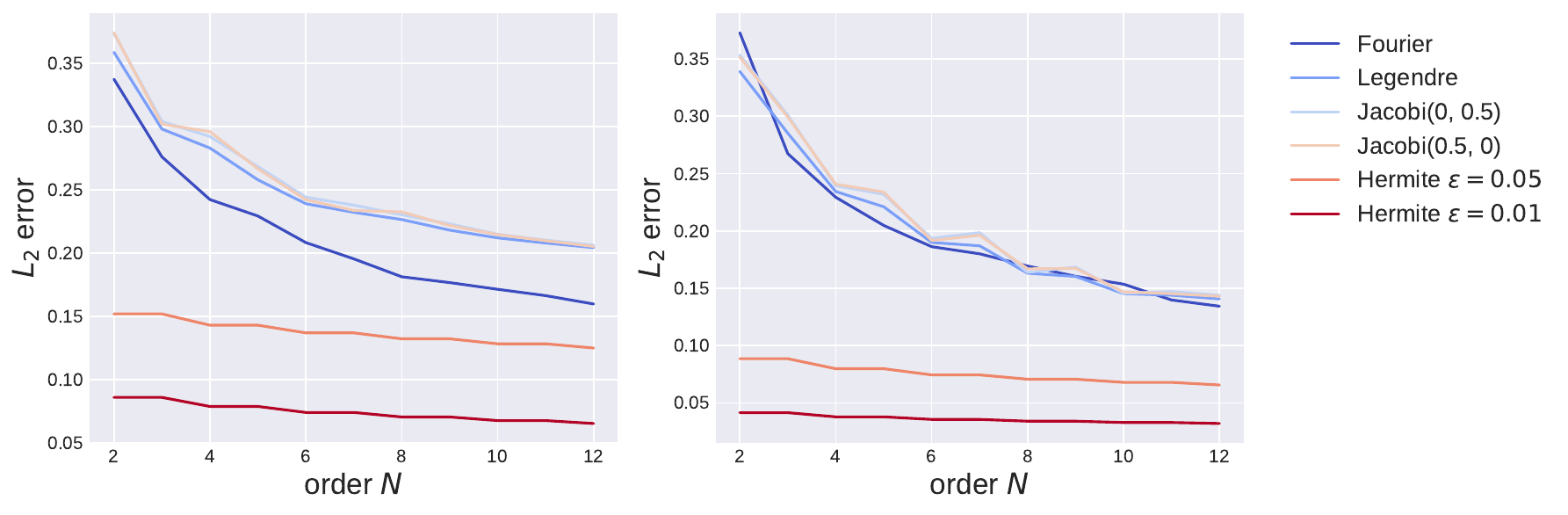}
    \caption{Real data $L_2$ approximation error using different bases. The figures (from left to right) are the corresponding error averaged over $10$ random samples from the HEPC and Exchange rates datasets.}
    \label{fig:orth_approx_real}
\end{figure}

To mitigate computational expense, we henceforth use Hermite polynomials with $\epsilon=0.05$ as the representative of the Hermite family. The findings presented in Figures \ref{fig:orth_approx} and \ref{fig:orth_approx_real} play a crucial role in our signature inversion method, as they establish a benchmark for the best possible performance attainable in path reconstruction from signatures.

\section{Proofs of Signature Inversion }\label{sec:proofs}
In this section, we present the formal proofs of the signature inversion \cref{thm:poly_inversion}
 and \cref{thm:fourier_inversion}, along with the remark in \cref{remark: taylor} about Taylor approximation of the weight function.
\subsection{Proof of orthogonal polynomial inversion \cref{thm:poly_inversion}}\label{appendix:poly_proof}
Recall the statement in \cref{thm:poly_inversion} deriving the $n$-th polynomial coefficient $\alpha_n$ (see Equation (\ref{eqn:coeff-poly})) via a recurrence relation:

Let $x : [a,b] \to \mathbb R$ be a smooth path such that  $x(a)=0$. Consider the augmentation $\hat x(t) = (t, \omega(t)x(t)) \in \mathbb R^2$, where $\omega(t)$ corresponds to the weight function of a system of orthogonal polynomials $(p_n)_{n\in\mathbb{N}}$, and is well defined on the closed and compact interval $[a,b]$. Then, there exists a linear combination $\ell_n$ of words such that the $n^{th}$ coefficient in Equation (\ref{eqn:coeff-poly}) satisfies $\alpha_n=\langle\ell_n, S(\hat x)\rangle$. Furthermore, the sequence $(\ell_n)_{n \in \mathbb N}$ satisfies the following recurrence relation
\begin{align*}
    \ell_{n} = A_n\frac{(p_{n-1}, p_{n-1})}{(p_n, p_n)}e_1\succ \ell_{n-1} + (A_na+B_n)\frac{(p_{n-1}, p_{n-1})}{(p_n, p_n)}\ell_{n-1} + C_n\frac{(p_{n-2}, p_{n-2})}{(p_n, p_n)}\ell_{n-2},
\end{align*}
with 
\begin{align*}
    \ell_0=\frac {A_0}{(p_0, p_0)}e_{21}\quad \text{ and }\quad\ell_1=\frac {A_1}{(p_1, p_1)}(e_{121} + e_{211})+\frac {A_1a+B_1}{(p_1, p_1)}e_{21}.
\end{align*}

\begin{proof}
One can express the first two coefficients in an orthogonal polynomial expansion of $x$ by the signature:
\begin{align*}
    \alpha_0 &= \frac 1{( p_0, p_0)} \int_{a}^{b}A_0x(t)\omega(t)dt\\
    &=\langle\frac {A_0}{( p_0, p_0)} e_2\succ e_1, S(\hat x)\rangle\\
    &=\langle\frac {A_0}{( p_0, p_0)}e_{21}, S(\hat x)\rangle\\
    &=\langle\ell_0, S(\hat x)\rangle,\\
    \alpha_1 &= \frac 1{( p_1, p_1)} \int_{a}^{b}(A_1t+B_1)x(t)\omega(t)dt\\
    &=\frac {A_1}{( p_1, p_1)} \int_{a}^{b} (t-a)x(t)\omega(t)dt+\frac {A_1a+B_1}{( p_1, p_1)} \int_{a}^{b} x(t)\omega(t)dt\\
    &=\frac {A_1}{( p_1, p_1)} \langle(e_1 \shuffle e_2)\succ e_1, S(\hat x)\rangle+\frac {A_1a+B_1}{( p_1, p_1)} \langle e_{21}, S(\hat x)\rangle\\
    &=\langle\frac {A_1}{( p_1, p_1)}(e_{121}+e_{211})+\frac {A_1a+B_1}{( p_1, p_1)} e_{21}, S(\hat x)\rangle\\
    &=\langle\ell_1, S(\hat x)\rangle.
\end{align*}
Then one can find $\ell_n$ recursively by multiplying both sides of Equation (\ref{eq:general_rec}) by $x(t)\omega(t)$ and integrating on $[a, b]$:

\begin{align*}
    \int_{a}^{b}p_{n}(t)x(t)\omega(t)dt=&\int_{a}^{b}(A_nt+B_n)p_{n-1}(t)x(t)\omega(t)dt + \int_{a}^{b}C_np_{n-2}(x)x(t)\omega(t)dt\\
    =&A_n\int_{a}^{b}(t-a)d\left(\int_{a}^tp_{n-1}(s)x(s)\omega(s)ds\right)\\
    &+ (A_na+B_n)\int_{a}^{b}p_{n-1}(t)x(t)\omega(t)dt \\
    &+C_n\int_{a}^{b}p_{n-2}(x)x(t)\omega(t)dt.
\end{align*}
By definition of $\alpha_n$,
\begin{align*}
    \int_{a}^{b}p_{n}(t)x(t)\omega(t)d&=(p_n, p_n)\alpha_{n}=(p_n, p_n)\langle\ell_{n}, S(\hat x)\rangle,\\
    A_n\int_{a}^{b}(t-a)d\left(\int_{a}^tp_{n-1}(s)x(s)\omega(s)ds\right)&=A_n(p_{n-1}, p_{n-1})\langle e_1\succ \ell_{n-1}, S(\hat x)\rangle,\\
    (A_na+B_n)\int_{a}^{b}p_{n-1}(t)x(t)\omega(t)dt&=(A_na+B_n)(p_{n-1}, p_{n-1})\langle\ell_{n-1}, S(\hat x)\rangle,\\
    C_n\int_{a}^{b}p_{n-2}(x)x(t)\omega(t)dt&=C_n( p_{n-2}, p_{n-2})\langle\ell_{n-2}, S(\hat x)\rangle.
\end{align*}
Therefore, the recurrence relation of the linear functions on the signature retrieving the coefficients of orthogonal polynomials is
\begin{align*}
    \langle\ell_{n}, S(\hat x)\rangle =& A_n\frac{( p_{n-1}, p_{n-1})}{( p_n, p_n)}\langle e_1\succ \ell_{n-1}, S(\hat x)\rangle \\
    &+ (A_na+B_n)\frac{( p_{n-1}, p_{n-1})}{( p_n, p_n)}\langle\ell_{n-1}, S(\hat x)\rangle \\
    &+ C_n\frac{( p_{n-2}, p_{n-2})}{( p_n, p_n)}\langle\ell_{n-2}, S(\hat x)\rangle.
\end{align*}
\end{proof}
Several assumptions are made in order to derive the recurrence relation. Namely, the interval defined on the inner space must be compact, and the weight function $\omega(t)$ should be well-defined on the closed interval $[a,b]$. These assumptions can limit the range of applicable orthogonal polynomial families. For example, since the range of Hermite polynomials is unbounded, they are not suitable for our inversion. However, by using a shift-and-scale version of these polynomials, where most of the weight density is concentrated at a particular point, their weight can be numerically truncated to a compact interval. One can centre the weight density using a small enough $\epsilon$ and shift it to a point of interest $t_i$. Since the non-zero density region is now concentrated in a small interval, \cref{thm:poly_inversion} can be applied to the truncated density over this interval. The relationship between the original Hermite polynomials and the shift-and-scale Hermite polynomials is given by Equation (\ref{hermite_relation}).

\subsection{Remark on the Taylor approximation of the weight function}\label{appendix:taylor}

The results in \cref{thm:poly_inversion} require signatures of $\hat x = (t, w(t)x(t))$. However, sometimes one may only have signatures of $\tilde x = (t, x(t))$. Here, we propose a theoretically applicable method by approximating the weight function as a Taylor polynomial. 

Consider the Taylor approximation of $\omega$ around $t=a$, i.e.,
$$\omega(t) \approx \sum_{i=0}^M \frac{d^i\omega}{dt^i}\Big|_{t=a} (t-a)^i=\sum_{i=0}^M \omega_i (t-a)^i.$$
Letting $\tilde x_t = (t, x(t))$ and
$$c_i := (e_2 \shuffle e_1^{\shuffle i}) \succ e_1 = i!(e_{21\ldots 1} + e_{121\ldots1} + ... + e_{1\ldots121}),$$
we have
{\allowdisplaybreaks
\begin{align*}
    \alpha_0 &= \frac 1{( p_0, p_0)} \int_a^bA_0x(t)\omega(t)dt \\
    &= \frac {A_0}{( p_0, p_0)} \sum_{i=0}^M\omega_i\int_a^b(t-a)^ix(t)dt \\
    &=\langle\frac {A_0}{( p_0, p_0)}\sum_{i=0}^M \omega_i  (e_2 \shuffle e_1^{\shuffle i}) \succ e_1, S(\tilde x)\rangle\\
    &=\langle\frac {A_0}{( p_0, p_0)}\sum_{i=0}^M \omega_i  c_i, S(\tilde x)\rangle\\
    &= \langle\ell_0, S(\tilde x)\rangle,\\
    \alpha_1 &= \frac 1{( p_1, p_1)} \int_a^b(A_1t+B_1)x(t)\omega(t)d t\\
    &=\frac {1}{( p_1, p_1)} \int_a^b \left(A_1(t-a) + A_1a+B_1\right)x(t)\omega(t)d t\\
    &=\frac {1}{( p_1, p_1)} \sum_{i=0}^M  \omega_i \int_a^b \left(A_1(t-a)^{i+1} + (A_1a+B_1)(t-a)^i\right)x(t)d t\\
    &= \langle\frac{1}{( p_1, p_1)} \sum_{i=0}^M\omega_i \left(A_1(e_2 \shuffle e_1^{\shuffle i+1}) + (A_1a+B_1)(e_2 \shuffle e_1^{\shuffle i})\right) \succ e_1, S(\tilde x)\rangle\\
    &= \langle\frac{1}{( p_1, p_1)} \sum_{i=0}^M\omega_i (A_1 c_{i+1} + (A_1a+B_1)c_i), S(\tilde x)_{a,b}\rangle\\
    &=\langle\ell_1, S(\tilde x)\rangle\\
\end{align*}}
By induction, the same relation as \cref{thm:poly_inversion} holds
$$\ell_{n} = A_n\frac{( p_{n-1}, p_{n-1})}{( p_n, p_n)} e_1 \succ \ell_{n-1} + (A_na+B_n)\frac{( p_{n-1}, p_{n-1})}{( p_n, p_n)}\ell_{n-1}+ C_n\frac{( p_{n-2}, p_{n-2})}{( p_n, p_n)}\ell_{n-2}.$$

There are several reasons why we consider the Taylor approximation method ``theoretically applicable.'' The expansion of the weight function around a point $a$ can be hard to find analytically, and even if the series is found, it may diverge. If the series does converge, we still need to determine the number of terms required to meet a certain error tolerance. Furthermore, if the convergence rate is slow, more terms are needed in the series. This leads to higher necessary signature truncation levels and consequently also higher computational complexity.

\subsection{Proof of Fourier inversion in \cref{thm:fourier_inversion}}\label{appendix:fourier_proof}
Recall the statement in \cref{thm:fourier_inversion} deriving the Fourier coefficients $a_0, a_n, b_n$ (see Equations (\ref{eqn: a0}), (\ref{eqn: an}), (\ref{eqn: bn})) of a path as follows:

Let $x : [0,2\pi] \to \mathbb R$ be a periodic smooth path such that $x(0)=0$, and consider the augmentation $\hat{x}(t) = (t, \sin(t), \cos(t)-1, x(t)) \in \mathbb R^4$. Then the following relations hold
\begin{equation*}
    \begin{split}
    a_0 &= \frac{1}{2\pi}\langle e_4 \succ e_1, S(\hat{x}) \rangle, \\
    a_n &= \frac{1}{\pi}\sum_{k=0}^n \sum_{q=0}^k \binom{n}{k}\binom{k}{q} \cos(\frac{1}{2}(n-k)\pi) \langle (e_4\shuffle e_2^{\shuffle n-k} \shuffle e_3^{\shuffle q} ) \succ e_1, S(\hat{x}) \rangle,\\
    b_n &= \frac{1}{\pi}\sum_{k=0}^n \sum_{q=0}^k \binom{n}{k}\binom{k}{q} \sin(\frac{1}{2}(n-k)\pi) \langle (e_4\shuffle e_2^{\shuffle n-k} \shuffle e_3^{\shuffle q} ) \succ e_1, S(\hat{x})\rangle.
    \end{split}
\end{equation*}

\begin{proof}
    
By Multiple-Angle formulas, we have
\begin{equation}\label{eqn: sin_multiple}
    \sin(nt) = \sum_{k=0}^n \binom{n}{k}\cos^k(t)\sin^{n-k}(t)\sin(\frac{1}{2}(n-k)\pi),
\end{equation}
\begin{equation}\label{eqn: cos_multiple}
    \cos(nt) = \sum_{k=0}^n \binom{n}{k}\cos^k(t)\sin^{n-k}(t)\cos(\frac{1}{2}(n-k)\pi).
\end{equation}

We can now connect Equation (\ref{eqn: bn}), Equation (\ref{eqn: sin_multiple}), and the shuffle identity of the signature described in \cref{thm:shuffle} to obtain an expression for $b_n$ as 
\begin{equation}
    \begin{split}
    b_n &= \frac{1}{\pi}\int_0^{2\pi} x(t)\sin(nt)dt \\
        &= \frac{1}{\pi}\int_0^{2\pi} x(t) \sum_{k=0}^n \binom{n}{k}\cos^k(t)\sin^{n-k}(t)\sin(\frac{1}{2}(n-k)\pi)dt\\
        &= \frac{1}{\pi}\sum_{k=0}^n \binom{n}{k}\sin(\frac{1}{2}(n-k)\pi)\int_0^{2\pi} x(t) \cos^k(t)\sin^{n-k}(t)dt\\
        &= \frac{1}{\pi}\sum_{k=0}^n \binom{n}{k}\sin(\frac{1}{2}(n-k)\pi)\int_0^{2\pi} x(t) ((\cos(t)-1)+1)^k \sin^{n-k}(t)dt\\
        &= \frac{1}{\pi}\sum_{k=0}^n \binom{n}{k}\sin(\frac{1}{2}(n-k)\pi)\int_0^{2\pi} x(t) \sin^{n-k}(t) \sum_{q=0}^k \binom{k}{q}(\cos(t)-1)^q dt\\
        &= \frac{1}{\pi}\sum_{k=0}^n \sum_{q=0}^k \binom{n}{k}\binom{k}{q} \sin(\frac{1}{2}(n-k)\pi) \int_0^{2\pi} x(t) \sin^{n-k}(t) (\cos(t)-1)^q dt\\
        &= \frac{1}{\pi}\sum_{k=0}^n \sum_{q=0}^k \binom{n}{k}\binom{k}{q} \sin(\frac{1}{2}(n-k)\pi) \langle (e_4\shuffle e_2^{\shuffle n-k} \shuffle e_3^{\shuffle q} ) \succ e_1, S(\hat{x}) \rangle.\\
    \end{split}
\end{equation}

Similarly, we rearrange Equation (\ref{eqn: an}) with Equation (\ref{eqn: cos_multiple}) to obtain the formula for $a_n$
\begin{equation}
    \begin{split}
    a_n &= \frac{1}{\pi}\int_0^{2\pi} x(t)\cos(nt)dt \\
        &= \frac{1}{\pi}\sum_{k=0}^n \sum_{q=0}^k \binom{n}{k}\binom{k}{q} \cos(\frac{1}{2}(n-k)\pi) \langle (e_4\shuffle e_2^{\shuffle n-k} \shuffle e_3^{\shuffle q} ) \succ e_1, S(\hat{x}) \rangle.\\
    \end{split}
\end{equation}

Finally, we get $a_0$ immediately as
\begin{equation}
    a_0 = \frac{1}{2\pi}\int_0^{2\pi} x(t)dt = \langle e_4 \succ e_1, S(\hat{x}) \rangle.
\end{equation}
\end{proof}

\section{Visualising Inversion by Different Bases}\label{appendix:visualisation}
We demonstrate the quality of signature inversion on paths generated from fractional Brownian motion \citep{mandelbrot1968fractional} (see \cref{fig:visual_sig_inv_rough}), and two of our real-world datasets (see \cref{fig:visual_sig_inv_rough_real}) using five different polynomial and Fourier bases. The orders of the bases recovered in these figures are listed in Table \ref{table:inv_orders}, establishing a relationship with the levels of truncated signatures. The reconstruction results are influenced by three main factors:
\begin{enumerate} 
    \item the degree/order of the bases, $n$, and the corresponding levels of truncated signatures; \label{fac0} 
    \item the complexity of the paths, such as frequency and smoothness; \label{fac1} 
    \item the weight function of the orthogonal polynomials. \label{fac3} 
\end{enumerate}
Factor \ref{fac0} plays a crucial role in path reconstruction by providing a theoretical bound on the inversion error, as the order of the retrieved polynomial and Fourier coefficients is constrained by the truncation level of the signature.  Recall Figure \ref{fig:orth_approx}, which demonstrates that increasing the basis order improves the approximation quality, implying that reconstructions from higher-level signatures will more closely approximate the original path.

Factor \ref{fac1} also plays a crucial role in the quality of approximation. A comparison between the first and second columns of Figure \ref{fig:visual_sig_inv_rough} shows that all bases can approximate the simpler path in the second column more accurately. Consequently, more complex paths may lead to less satisfactory inversion results due to the limitations of the bases, as previously discussed in \cref{section:tradeoff}.

Compared to the previous factors, factor \ref{fac3} has a relatively minor impact on the reconstruction process. As shown in Figure \ref{fig:visual_sig_inv_rough_real}, the left tail of the Jacobi$(0, 0.5)$ approximation and the right tail of the Jacobi$(0.5, 0)$ approximation tend to diverge as their weight functions approach zero at $t\rightarrow\pm 1$. Meanwhile, the signature inversion using Hermite polynomials, performed on a pointwise basis, remains precise even at lower polynomial degrees, as each sample point is estimated at the centre of the weight function.

\begin{table}\label{table:inv_orders}
  \caption{Polynomial/Fourier basis orders used in Figures \ref{fig:visual_sig_inv_rough} and \ref{fig:visual_sig_inv_rough_real}.}
  \centering
  \begin{tabular}{lll}
    \toprule
    \multicolumn{1}{l}{Approximation method}& \multicolumn{1}{l}{Order $n$}&\multicolumn{1}{l}{Level of truncated signature}\\
    \midrule
    Legendre & 10 & n+2=12\\
    Jacobi$(0, 0.5)$& 10 & n+2=12\\
    Jacobi$(0.5, 0)$& 10 & n+2=12\\
    Hermite $(\epsilon=0.05)$ & 2 & n+2=4\\
    Fourier & 10 & n+2=12 \\
    \bottomrule
  \end{tabular}
\end{table}

\begin{figure}[hbtp]
    \centering
    \includegraphics[width=\linewidth]{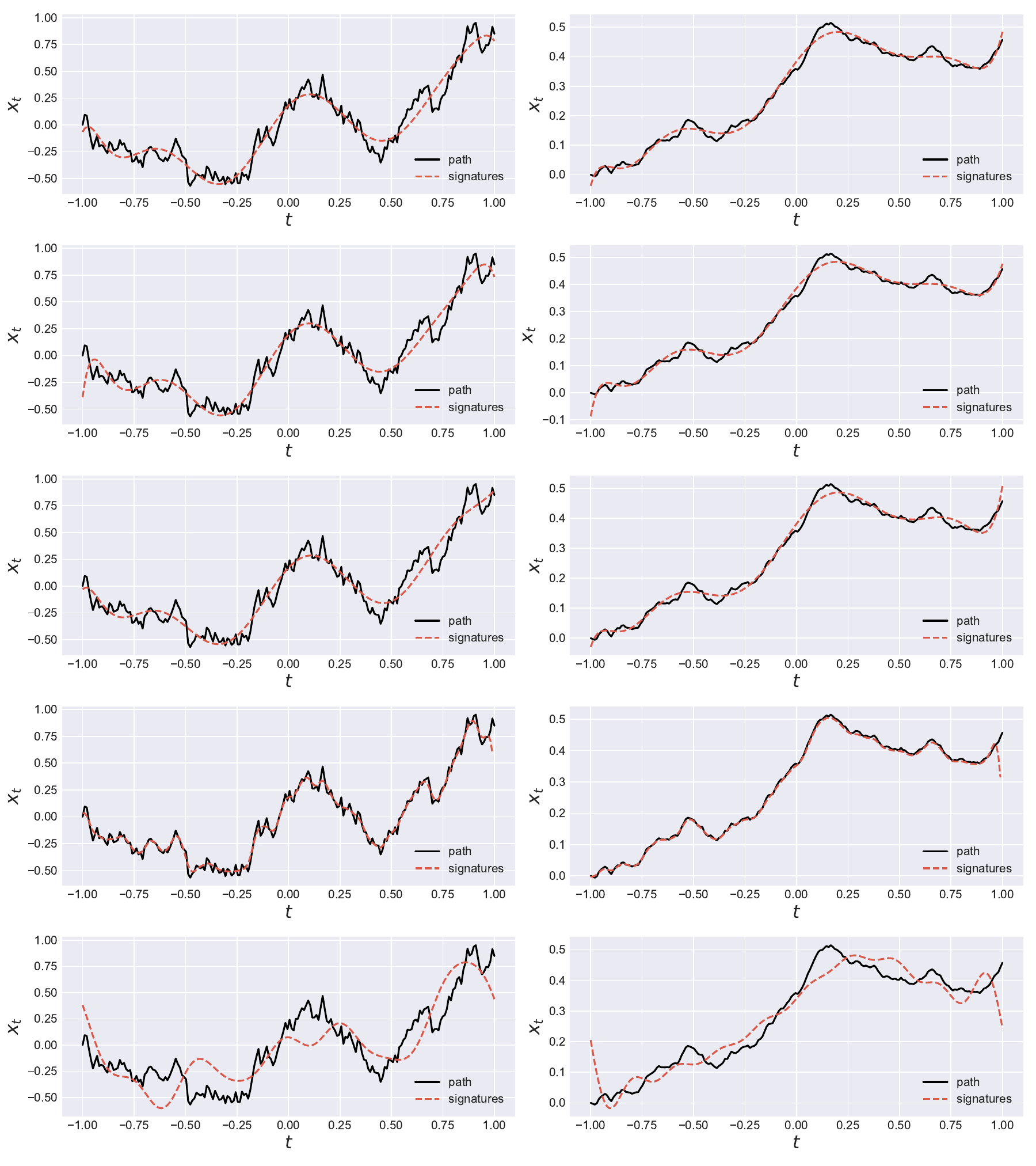}
    \caption{Inversion results on fractional Brownian motion with Hurst $0.5$ and $0.9$, with approximation bases (from top to bottom) Legendre (Jacobi$(0, 0)$), Jacobi$(0, 0.5)$, Jacobi$(0.5, 0)$, Hermite $(\epsilon=0.05)$ and Fourier.}
    \label{fig:visual_sig_inv_rough}
\end{figure}

\begin{figure}[hbtp]
    \centering
    \includegraphics[width=\linewidth]{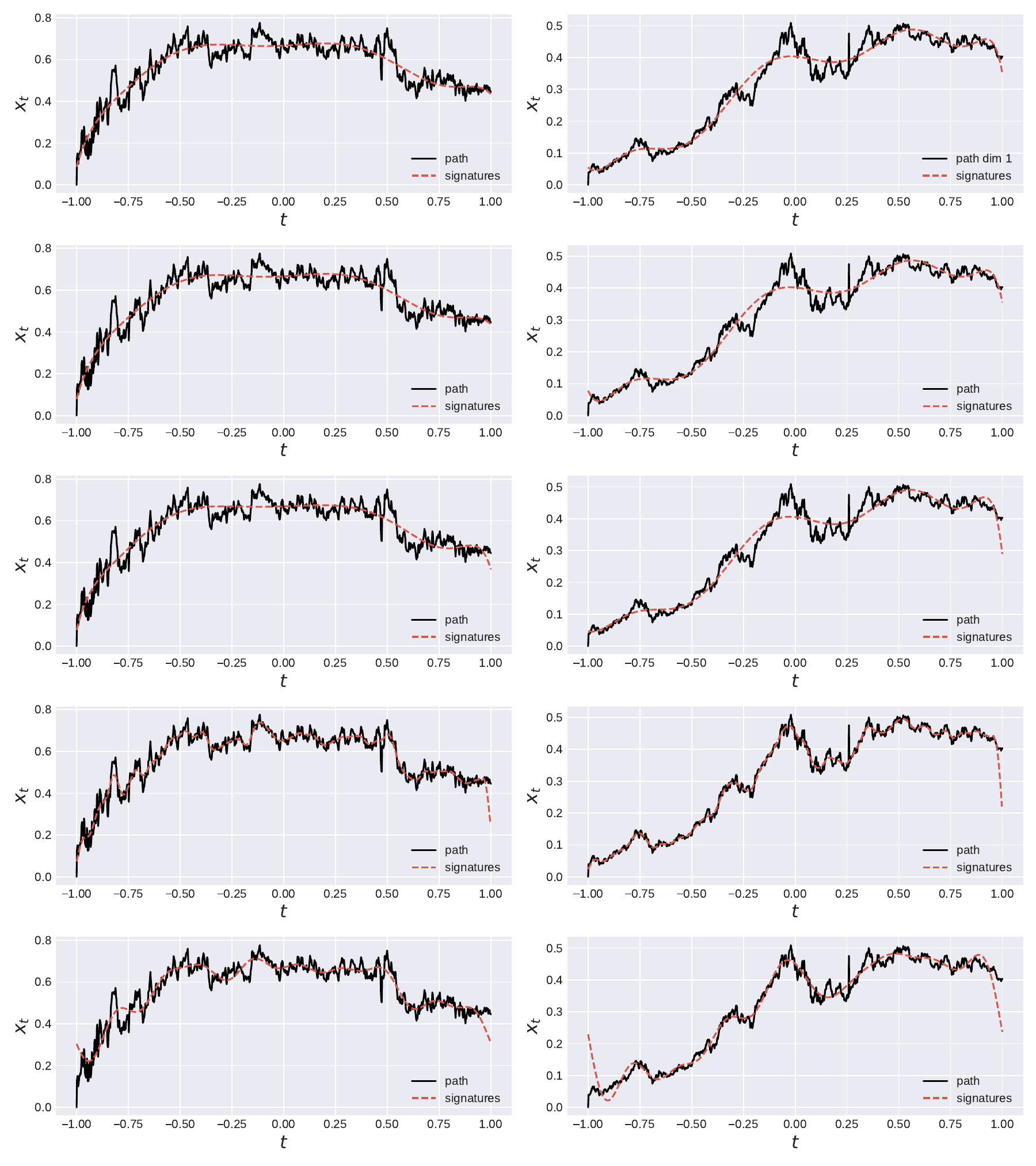}
    \caption{Inversion results on real-world time series. The left column is a sample from the HEPC dataset. The right column is a sample from the Exchange rates dataset (for readability, we only plot one of the eight dimensions). The approximation bases (from top to bottom) are Legendre (Jacobi$(0, 0)$), Jacobi$(0, 0.5)$, Jacobi$(0.5, 0)$, Hermite $(\epsilon=0.05)$ and Fourier.}
    \label{fig:visual_sig_inv_rough_real}
\end{figure}

\section{Experiment Details}\label{appendix:experiments}
In this section, we provide additional details about the experimental setup. The accompanying code can be found at \texttt{https://github.com/Barb0ra/SigDiffusions}.

We follow the score-based generative diffusion via a variance-preserving SDE paradigm proposed in \citet{song2019generative}. We tune $\overline{\beta}_{min}$ and $\overline{\beta}_{max}$ in Equation (\ref{eq:ODE}) to be 0.1 and 5 respectively. We use a denoising score-matching \citep{vincent2011connection} objective for training the score network $s_\theta$. For sampling, we discretise the probability flow ODE 
\begin{equation}
d\textbf{x}_t = -\frac{1}{2}\beta(t)[\textbf{x}_t+s_\theta(t, \textbf{x}_t)]dt, t\in[0,1]
\end{equation}
with an initial point $\textbf{x}_1\sim \mathcal{N}(0,I)$. To solve the discretised ODE, we use a \texttt{Tsit5} solver with $128$ time steps. We adopt the implementation of the Predictive and Discriminative Score metrics from TimeGAN \citep{yoon2024timegan}.
To satisfy the conditions for Fourier inversion, we augment the paths with additional channels as described in \cref{thm:fourier_inversion}, and we add an extra point to the beginning of each path, making it start with 0.

The model architecture remains fixed throughout the experiments as a transformer with 4 residual layers, a hidden size of 64, and 4 attention heads. Note that other relevant works \citep{yuan2024diffusion, coletta2024constrained, bilovs2023modeling} follow a very similar or bigger architecture. We use the Adam optimiser. We run the experiments on an NVIDIA GeForce RTX 4070 Ti GPU. Table \ref{table: marginals} details additional KS test performance metrics (see \cref{experiments}).

For the task of generating time series described in \cref{subsection:smooth_eval}, we fix the number of samples to 1000, the batch size to 128, the number of epochs to 1200, and the learning rate to 0.001. The details variable across datasets are listed in \cref{table:long-gen}. We reserve $1000$ points from each dataset as an unseen test set for metric calculation.

\begin{table}
  \caption{KS Test average scores and type I errors on the marginals of time series of length 1000.}
  \label{table: marginals}
  \centering
  \resizebox{\columnwidth}{!}{%
  \begin{tabular}{llllll}
    \toprule
     Dataset & Model & t=300 & t=500 & t=700 & t=900 \\
    \midrule
   \multirow{4}{*}{Sines} & SigDiffusion (ours) & \underline{0.25, 50\%} & \underline{0.22, 31\%} & \underline{0.24, 40\%} & \underline{0.23, 38\%} \\
   & DDO ($\gamma = 1$) & \textbf{0.17, 7\%} & \textbf{0.16, 5\%} & \textbf{0.20, 15\% }& \textbf{0.23, 33\%} \\
   & Diffusion-TS & 0.77, 100\% & 0.50, 100\%  & 0.48, 100\% & 0.48, 100\% \\
   & CSPD-GP (RNN)  & 0.62, 100\% & 0.45, 82\% & 0.48, 95\% & 0.55, 99\%\\
   & CSPD-GP (Transformer) & 0.57, 100\% & 0.49, 100\% & 0.51, 99\% & 0.60, 100\% \\
    \midrule
   \multirow{4}{*}{Predator-prey} & SigDiffusion (ours) & \textbf{0.20, 20\%} & \underline{0.29, 76\%} & \textbf{0.23, 39\%} & \textbf{0.22, 35\%} \\
   & DDO ($\gamma = 10$) & 0.34, 92\% & 0.30, 85\% & 0.36, 96\% & 0.40, 100\% \\
   & Diffusion-TS & 1.00, 100\% & 1.00, 100\% & 1.00, 100\% & 0.99, 100\% \\
   & CSPD-GP (RNN) & \underline{0.27, 56\%} & \textbf{0.25, 47\%} & \underline{0.32, 74\%}  & \underline{0.31, 79\%}  \\
   & CSPD-GP (Transformer) & 0.79, 100\% & 0.78, 100\% & 0.79, 100\% & 0.74, 100\%\\
    \midrule
   \multirow{4}{*}{HEPC} & SigDiffusion (ours) & \textbf{0.20, 16\%} & \textbf{0.18, 9\%} & \textbf{0.19, 12\%} & \textbf{0.21, 22\%} \\
   & DDO ($\gamma = 1$) & \underline{0.25, 46\%} & \underline{0.23, 38\%} & \underline{0.25, 46\%} & \underline{0.25, 51\%} \\
   & Diffusion-TS & 0.85, 100\% & 0.87, 100\% & 0.82, 100\% & 0.88, 100\% \\
   & CSPD-GP (RNN) & 0.52, 100\% & 0.53, 100\% & 0.55, 100\% & 0.56, 100\% \\
   & CSPD-GP (Transformer)  & 1.00, 100\% & 1.00, 100\% & 1.00, 100\% & 1.00, 100\% \\
    \midrule
       \multirow{4}{*}{Exchange rates} & SigDiffusion (ours) & \underline{0.31, 80\%} & \underline{0.28, 67\%} & \underline{0.28, 65\%} & \underline{0.31, 74\%}\\
    & DDO ($\gamma = 1$) & \textbf{0.24, 41\%} & \textbf{0.24, 42\%} & \textbf{0.25, 45\%} & \textbf{0.25, 45\%} \\
   & Diffusion-TS & 0.72, 100\% & 0.71, 100\% & 0.70, 100\% & 0.69, 100\%\\
   & CSPD-GP (RNN) & 0.59, 100\% & 0.56, 100\% & 0.55, 100\% & 0.56, 100\%\\
   & CSPD-GP (Transformer) & 1.00, 100\% & 0.99, 100\% & 0.98, 100\% & 0.99, 100\% \\
    \midrule
       \multirow{4}{*}{Weather} & SigDiffusion (ours) & \underline{0.35, 82\%} & \underline{0.34, 80\%} & \underline{0.33, 76\%} & \underline{0.34, 78\%} \\
    & DDO ($\gamma = 10$) & \textbf{0.26, 45\%} & \textbf{0.27, 52\%} & \textbf{0.26, 46\%} & \textbf{0.26, 47\%} \\
   & Diffusion-TS & 0.49, 100\% & 0.50, 100\% & 0.50, 100\% & 0.49, 100\% \\
   & CSPD-GP (RNN) & 0.57, 100\% & 0.56, 100\& & 0.56, 100\% & 0.56, 100\%\\
   & CSPD-GP (Transformer) & 0.91, 100\% &  0.92, 100\% & 0.91, 100\% & 0.91, 100\%\\
    \bottomrule
  \end{tabular}
  }
\end{table}

\paragraph{The mirror augmentation}\label{remark: mirror-trick} For many datasets, we might not wish to assume path periodicity as required in the Fourier inversion conditions in \cref{thm:fourier_inversion}. We observed that a useful trick in this case is to concatenate the path with a reversed version of itself before performing the additional augmentations. We denote this the \textit{mirror augmentation}. Table \ref{table:long-gen} indicates the datasets for which this augmentation is performed.

\begin{table}
  \caption{Datasets for long time series generation.}
  \label{table:long-gen}
  \centering
  \begin{tabular}{lllllll}
    \toprule
    Dataset &  Mirror augmentation & Data points & Dimensions\\
    \midrule
   Sines  & Yes & 10000 & 5 \\
   HEPC & No & 10242 & 1 \\
   Predator-prey & No & 10000 & 2 \\
   Exchange rates & No & 6588 & 8 \\
   Weather & No & 10340 & 14 \\
    \bottomrule
  \end{tabular}
\end{table}

\paragraph{Datasets} As previously described in \cref{experiments}, we measure the performance of \texttt{SigDiffusions} on two synthetic (Sines and Predator-prey) and three real-world (HEPC, Exchange Rates, Weather) public datasets. We generate Sines the same way the Sine dataset is generated in TimeGAN \citep{yoon2019time}, by sampling sine curves at a random phase and frequency but changing the sampling rate to 1000. Predator-prey is a dataset consisting of sample trajectories of a two-dimensional system of ODEs adopted from \citet{bilovs2023modeling}
\begin{equation}
\begin{split}
    \dot{x}=\frac{2}{3}x - \frac{2}{3}xy,\\
    \dot{y}=xy-y.
\end{split}
\label{eqn:lo-vol}
\end{equation}
We generate Predator-prey on a time grid of 1000 points on the interval $t\in[0,10]$. HEPC \citep{uciml2024electric} is a household electricity consumption dataset collected minute-wise for 47 months from 2006 to 2010. We slice the dataset to windows of length 1000 with a stride of 200, yielding a dataset of 10242 entries. We select the \textit{voltage} feature to generate as a univariate time series. We use the Exchange Rates dataset provided in \citet{lai2018modeling, 2017laiguokun} and slice it with a stride of 1, yielding 6588 time series. Lastly, the Weather dataset was measured and published by the Max-Planck-Institute for Biogeochemistry \citep{weather}. We take the first 14 features from this dataset describing the pressure, temperature, humidity, and wind conditions, and we slice the time series with a stride of 5 to get 10340 samples.

\paragraph{Benchmarks}
We compare our models to four recent diffusion models for long time series generation: Diffusion-TS \citep{yuan2024diffusion}, DDO \citep{lim2023score}, and two variants of CSPD-GP \citep{bilovs2023modeling} - one with an RNN for a score function and one with a transformer. For Diffusion-TS and CSPD-GP, we keep the model configurations as they were proposed in the authors' implementations for datasets with similar dimensions and number of data points. One exception to this is halving the batch size for transformer-based architectures due to memory constraints. We also halve the number of epochs to preserve the proposed number of training steps. As DDO has previously only been implemented on image-shaped data, we alter the code in the authors' GitHub implementation to generate samples of shape (\textit{time series length} x 1 x \textit{number of channels}). We always report the performance for the RBF kernel smoothness hyperparameter $\gamma \in [0.05, 0.2, 1, 5, 10]$ corresponding to the highest predictive score. We train the model for 300 epochs (see Appendix Section J of the DDO paper by \citet{lim2023score}) with a batch size of 32 and keep the remaining hyperparameters as proposed for the \textit{Volcano} dataset. Table \ref{table:model_size} shows the number of parameters and computation times for each model.
We use the publicly available code to run the benchmarks:
\begin{itemize}
    \item https://github.com/morganstanley/MSML \citep{morganstanley2024msml}
    \item https://github.com/Y-debug-sys/Diffusion-TS \citep{ydebugsys2024diffusionts}
    \item https://github.com/lim0606/ddo \citep{limgithub}
\end{itemize}

\paragraph{Experiments with the truncation level}
To produce \cref{fig:samples_lvls}, we use a dataset generated by \cref{eqn:lo-vol}, discretised to 25 points. To add non-trivial high-frequency components, we include sine and cosine waves of the first eight Fourier frequencies, with random amplitudes drawn from a standard normal distribution. To keep the model architecture consistent and avoid memory issues at higher truncation levels, we use the linearised attention mechanism from \citet{katharopoulos2020transformers} with a \texttt{relu} kernel, instead of the full attention mechanism. We also reduce the model's hidden size from 64 to 32.

\end{document}